\documentclass{article}
\pdfoutput=1
\usepackage{iclr2027_conference,times}

\usepackage[utf8]{inputenc}
\usepackage[T1]{fontenc}
\usepackage{amsmath,amssymb,amsthm}
\usepackage{mathtools}
\usepackage{stmaryrd}
\usepackage{booktabs}
\usepackage{graphicx}
\usepackage{xurl}   % url with line breaks at any character
\usepackage{hyperref}
\usepackage{cleveref}
\usepackage{enumitem}

\usepackage{tikz}
\usetikzlibrary{arrows.meta,positioning,shapes,calc}
\usepackage{algorithm}
\usepackage{algpseudocode}
\algdef{SE}[SWITCH]{Switch}{EndSwitch}[1]{\textbf{switch} #1 \textbf{of}}{\textbf{end switch}}
\algdef{SE}[CASE]{Case}{EndCase}[1]{\textbf{case} #1\textbf{:}}{}
\usepackage{xcolor}
\usepackage{colortbl}
\usepackage{mdframed}

\theoremstyle{plain}
\newtheorem{proposition}{Proposition}[section]
\newtheorem{corollary}[proposition]{Corollary}
\newtheorem{lemma}[proposition]{Lemma}
\theoremstyle{definition}
\newtheorem{definition}{Definition}[section]

\newcommand{\K}[1]{\mathbf{K}_{#1}}
\newcommand{\B}[1]{\mathbf{B}_{#1}}
\newcommand{\A}[1]{\mathbf{A}_{#1}}
\newcommand{\Cknow}[1]{\mathbf{C}_{#1}}
\newcommand{\announce}[1]{[{#1}]}

\newcommand{\Model}{\mathcal{M}}

\newcommand{\FIXME}[2]{}

\newcommand{\PSPACE}{\textsc{PSpace}}
\newcommand{\PiP}[1]{\Pi^{\mathrm{P}}_{#1}}
\newcommand{\bigO}{\mathcal{O}}

\newenvironment{conversation}{%
  \begin{mdframed}[linecolor=gray!40, linewidth=0.5pt, backgroundcolor=gray!5,
    innertopmargin=6pt, innerbottommargin=6pt, skipabove=6pt]
  \small
}{%
  \end{mdframed}
}
\newcommand{\turn}[2]{\textbf{#1:} #2\par\smallskip}

\newcommand{\repourl}{\url{https://github.com/memory009/Grounded-Continuation}}
\makeatletter
\newcommand{\plainfootnote}[1]{{\let\thefootnote\relax
  \long\def\@makefntext##1{\noindent ##1}\footnotetext{#1}}}
\makeatother

\title{Grounded Continuation: A Linear-Time Runtime Verifier for LLM Conversations}

\author{Qisong He, Jinwei Hu, Xinmiao Huang, Changshun Wu, Yi Dong, Xiaowei Huang \\
  School of Computer Science \& Informatics, University of Liverpool
}

\iclrfinalcopy % prints the author block; the header it sets is replaced below

\begin{document}

\maketitle
\lhead{Preprint}
% Unmarked footnote at the foot of page 1 with the repository link.
\plainfootnote{Code available at \repourl.}

\begin{abstract}
In a long conversation, an LLM can produce a plausible continuation that
rests on premises the conversation has already abandoned. No runtime
check ties its output to what the conversation has established, a gap
that context-manipulation attacks on deployed agents exploit. We close
this gap with a runtime verifier: an LLM Interpreter classifies each utterance
into one of eight epistemic operations, and a symbolic engine applies them
to a dependency map that records what every claim rests on and whether
it still stands. Whether a continuation is grounded reduces to a
walk over the map, linear in its size, with no LLM call. Retraction
propagates through the same map with a conflict-free guarantee,
flagging exactly the conclusions that lose support. On ReviseQA for belief revision and MemoryAgentBench's fact-consolidation
split, two third-party benchmarks where earlier premises are superseded,
the verifier leads a budget-matched retrieval baseline across five QA
models and lifts MemoryAgentBench single-hop accuracy from $0.46$--$0.95$
to $0.93$--$0.98$. With the verifier, even the 7B model
overtakes unaided GPT-4o. These runs feed the engine the benchmarks' own
structured updates. When a GPT-4o Interpreter extracts every update from
raw text instead, accuracy is statistically unchanged. Per-query cost is
flat in conversation length, prompts staying near $0.8$k tokens where full
context reaches $114$k and retraction queries under a microsecond at
$2000$ turns.

\end{abstract}

\section{Introduction}
\label{sec:intro}
Large language models produce continuations that can be locally fluent and pragmatically plausible yet \emph{ungrounded}: disconnected from the claims, observations, and revisions that the conversation has actually established. The deployment question is concrete: given the next LLM output, can we check, \emph{at this turn} and within a tight latency budget, whether that output traces back to those prior commitments? When an LLM cannot answer ``why did we reject approach A?'' or ``what assumptions does our current decision rest on?'', the relevant utterances are inside the context window. What is missing is a maintained structure connecting the LLM's continuation to the conversation's prior commitments. \Cref{fig:motivation} shows the failure in miniature: a continuation that flat memory accepts, built on a hypothesis the conversation has already abandoned.

\begin{figure}[t]
\centering
\includegraphics[width=\linewidth]{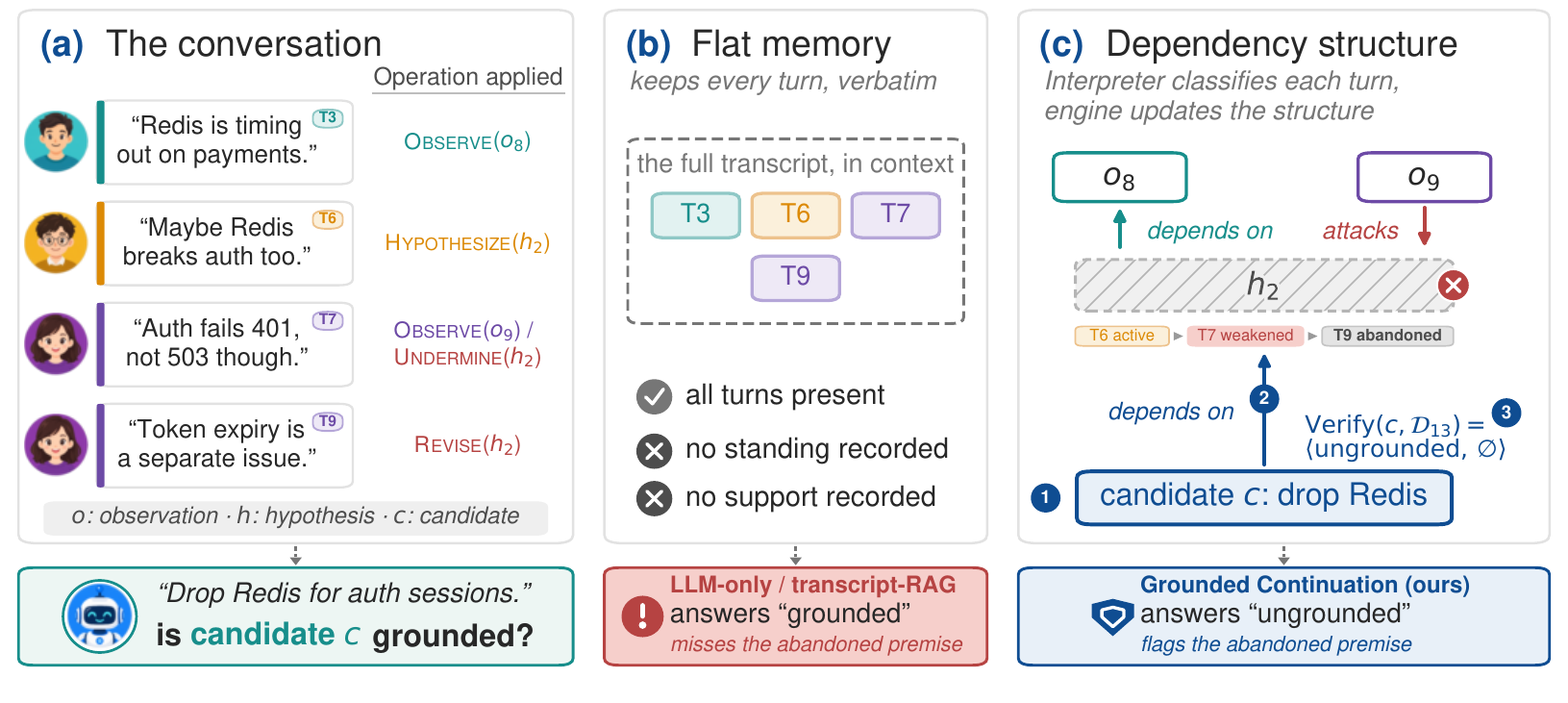}
\caption{A continuation can be locally plausible yet ungrounded.
\textbf{(a)} The conversation abandons hypothesis $h_2$ at turn T9, which
deletes nothing and only changes its recorded \emph{standing}.
\textbf{(b)} Flat memory keeps every turn verbatim and accepts a
continuation that still builds on $h_2$. \textbf{(c)} The dependency
map records standing, so the walk from $c$ rejects it.}
\label{fig:motivation}
\end{figure}

This problem is empirically severe. \citet{laban2025lost} found a 39\% average performance drop in multi-turn versus single-turn settings across 200{,}000+ simulated conversations, with LLMs failing to revise incorrect assumptions even when later turns contradict them: ``when LLMs take a wrong turn in a conversation, they get lost and do not recover.'' \citet{shaikh2025rifts} show, on real human-LLM dialogues, that LLMs initiate clarification three times less often than human partners and that early grounding failures predict later breakdowns. These failures concentrate precisely where an earlier premise is later revised or superseded: the model keeps building on claims the conversation has already withdrawn. The gap is also actively exploited: context-manipulation attacks against deployed agents~\citep{patlan2025fakememories,dong2025memoryinjection} craft continuations that are locally consistent yet disconnected from the conversation's prior commitments, triggering unauthorised actions.

A runtime guarantee that outputs trace back to prior commitments cannot come from the model itself. Certifying properties of the network is provably out of reach: even emptiness is EXPSPACE-complete for idealized fixed-precision transformers \citep{bergstrasser2026succinct}. Trusting the model to track commitments in-context is precisely what fails at scale. We therefore place the guarantee at the interface: an explicit conversation state, updated per utterance and checked in linear time.

Retrieval-augmented attribution methods address \emph{external} grounding: a generated claim is accepted if it traces to a cited document \citep{gao2023alce, bohnet2023autoais}. Long-horizon LLM conversations are dominated by \emph{interactional} grounding instead. A continuation is acceptable if it is consistent with the claims, observations, hypotheses, and revisions made earlier in the same conversation. Conversational memory systems \citep{chhikara2025mem0,rasmussen2025zep,zhang2024survey} track \emph{what was said} but not the dependency map connecting what was said to what was concluded. We close this gap with a runtime verifier: an LLM Interpreter classifies each utterance into one of eight operations, and a symbolic engine applies them to a dependency map that records what each claim rests on and whether it still stands. At any turn the check asks whether a candidate continuation rests, through claims still in good standing, on the conversation's prior commitments, and answers in time linear in the map's size. Language understanding stays with the LLM, while dependency tracking stays with the engine and its retraction guarantee holds by construction.

Our main contributions are as follows:
\begin{itemize}[leftmargin=*,itemsep=2pt,topsep=2pt]
\item \textbf{A linear-time runtime verifier for interactional grounding}: at each turn it checks whether a candidate continuation traces back through a maintained dependency map to the conversation's prior commitments.
\item \textbf{Significant gains where premises get superseded}: on ReviseQA and MemoryAgentBench's fact-consolidation split, our verifier leads budget-matched transcript-RAG across five QA models, lifts the weakest of them above unaided GPT-4o, and keeps its gain under end-to-end extraction.
\item \textbf{Formal guarantees on the dependency map}: the active set stays conflict-free at every turn, retracting a premise removes exactly its dependents and leaves the rest of the active set standing, and both the grounding check and the retraction query run in linear time.
\item \textbf{Per-query cost independent of conversation length}: prompt tokens stay flat near $0.8$k against $6$k--$114$k for full context, and a retraction query runs in under a microsecond at $2000$ turns.
\end{itemize}

\section{The Runtime Verifier}
\label{sec:framework}

The verifier exposes a single graph query, but the structure behind it is layered: an epistemic model of what is known, an argumentation framework of what attacks what and what currently stands, commitment records of who said what publicly, and the dependencies linking each claim to the premises it rests on. \Cref{def:eam} bundles the four into one object, the dependency map $\mathcal{D}_t$.

\noindent\textbf{Preliminaries.} An argumentation framework is a set of arguments with an attack relation \citep{dung1995acceptability}. Two arguments \emph{conflict} if one attacks the other. A set of arguments is \emph{conflict-free} if it contains no conflicting pair, \emph{admissible} if it is conflict-free and attacks every attacker of its members, and a \emph{preferred extension} is a maximal admissible set. The epistemic side uses dynamic epistemic logic (DEL) over plausibility models. Its operators and semantics are recalled in \Cref{app:proofs}, and \Cref{app:phase1} calibrates the world-set machinery on the muddy-children puzzle (Phase~1).

\begin{definition}[Dependency map]
\label{def:eam}
Let $\mathit{Prop}_t$ be the conversation's propositions and $\mathit{Ags}$ its agents. A \emph{dependency map} at turn $t$ is a tuple $\mathcal{D}_t = (\Model_t, \mathit{AF}_t, \mathit{Cm}_t, \mathit{Dep}_t)$ where
(i) $\Model_t$ is an epistemic plausibility model over $\mathit{Prop}_t$ recording what each agent knows, believes, or has hypothesised (\Cref{def:epm}; \citealt{baltag2008qualitative});
(ii) $\mathit{AF}_t = (\mathit{Args}_t, \mathit{Att}_t, \mathit{st}_t)$ is a labelled Dung argumentation framework: each argument $\alpha \in \mathit{Args}_t$ carries a claim $\mathit{claim}(\alpha) \in \mathit{Prop}_t$, $\mathit{Att}_t \subseteq \mathit{Args}_t \times \mathit{Args}_t$ is the attack relation, and $\mathit{st}_t : \mathit{Args}_t \to \{\mathsf{active}, \mathsf{weakened}, \mathsf{abandoned}, \mathsf{resolved}\}$ records each argument's current \emph{standing}, maintained by the operations below. We write $S_t := \{\alpha \in \mathit{Args}_t : \mathit{st}_t(\alpha) \in \{\mathsf{active}, \mathsf{resolved}\}\}$ for the \emph{active set}, the arguments currently in good standing;
(iii) $\mathit{Cm}_t : \mathit{Ags} \to \mathcal{P}(\mathit{Args}_t)$ records each agent's public commitments \citep{walton2008argumentation};
(iv) $\mathit{Dep}_t : \mathit{Args}_t \to \mathcal{P}(\mathit{Prop}_t)$ maps each argument to the propositions it rests on.
\end{definition}
Arguments are never deleted from $\mathit{Args}_t$. A retraction only changes their standing. The full four-formalism foundation underlying $\mathcal{D}_t$, and the proofs of every statement in this section, are in \Cref{app:proofs}. The runtime check below queries only $\mathit{Dep}_t$ and the argumentation skeleton $(\mathit{Args}_t, \mathit{Att}_t, \mathit{st}_t)$.

\noindent\textbf{The runtime check.} The verifier tests a single property of a candidate continuation, \emph{groundedness} with respect to the current dependency map.
\begin{definition}[Grounded continuation]
\label{def:grounded}
A candidate continuation $c$ asserting proposition $\phi_c$ is \emph{grounded} with respect to $\mathcal{D}_t$ iff some argument $\alpha_c \in S_t$ has $\mathit{claim}(\alpha_c) = \phi_c$ and the \emph{dependency walk} from $\alpha_c$ succeeds. The walk follows each proposition $q \in \mathit{Dep}_t(\alpha)$ of the current argument $\alpha$ to an argument $\beta \in S_t$ with $\mathit{claim}(\beta) = q$ and continues from $\beta$. It succeeds if every step finds such a $\beta$. We write $\mathit{Dep}^{*}_t(\alpha_c)$ for the set of arguments a successful walk visits, $\alpha_c$ included. Otherwise $c$ is \emph{ungrounded}. A weakened argument cannot ground a continuation until it is re-established.
\end{definition}
At any turn $t$, with $\alpha_c$ the argument in $S_t$ whose claim is $\phi_c$ (if any), the verifier returns:
\begin{equation*}
\mathsf{Verify}(c, \mathcal{D}_t) = \begin{cases}
\langle \mathsf{grounded}, \mathit{Dep}^{*}_t(\alpha_c) \rangle & \text{if the walk from } \alpha_c \text{ succeeds,} \\
\langle \mathsf{ungrounded}, \emptyset \rangle & \text{otherwise.}
\end{cases}
\end{equation*}
A grounded continuation carries the chain of upstream commitments it rests on. The walk stops at the first failing step. Ungrounded continuations are flagged for retry, retraction, or human review. Belief revision needs the converse query: what loses grounding when a proposition is retracted.
\begin{definition}[Affected set]
\label{def:affected}
Let $p \in \mathit{Prop}_t$ and $\mathit{claim}(X) := \{\mathit{claim}(\alpha) : \alpha \in X\}$. With $X_0 := \mathit{Affected}(p)$, the one-step and iterated affected sets are
\begin{align*}
\mathit{Affected}(p) &:= \{\alpha \in \mathit{Args}_t : p \in \mathit{Dep}_t(\alpha)\}, \\
X_{k+1} &:= X_k \cup \{\alpha \in \mathit{Args}_t : \mathit{Dep}_t(\alpha) \cap \mathit{claim}(X_k) \neq \varnothing\}, \\
\mathit{Affected}^{*}(p) &:= \textstyle\bigcup_{k \geq 0} X_k .
\end{align*}
\end{definition}
$\mathit{Affected}^{*}(p)$ is the query the engine runs on retraction, and the sequence $X_k$ stabilises within $|\mathit{Args}_t|$ rounds. Both sets range over all arguments, including those already out of standing. The retraction guarantee below concerns their intersection with $S_t$.

\noindent\textbf{Updating $\mathcal{D}_t$.}\label{subsec:operations} The structure the walk traverses is not a list of mentioned propositions: it is populated by hypotheses generated through abduction (\Cref{def:abduction}) over an awareness structure (\Cref{def:awareness}) the conversation expands, and its standing labels, attacks, and dependencies are updated jointly each turn. An LLM Interpreter classifies each utterance into one of eight operations, which \Cref{alg:apply} composes into a single update $\mathcal{D}_t \mapsto \mathcal{D}_{t+1}$ that refines $\Model$, $\mathit{AF}$, $\mathit{Cm}$, and $\mathit{Dep}$.

\noindent The eight operations split into three roles: \textsc{Observe}/\textsc{Resolve} commit content; \textsc{Hypothesize}/\textsc{Support}/\textsc{Undermine}/\textsc{Revise} adjust plausibility or attacks without erasing provenance; \textsc{Expand-Awareness}/\textsc{Question} expand or query without committing claims. The taxonomy collects the four formalisms' update primitives, so each utterance maps to exactly one update.

\begin{center}
\begin{minipage}{\linewidth}
\centering
\small
\textbf{The eight operations and their DEL realisations}\\[2pt]
\begin{tabular}{@{}p{1.8cm}p{7.2cm}p{4.3cm}@{}}
\toprule
\textbf{Operation} & \textbf{DEL realisation} & \textbf{Example} \\
\midrule
\textsc{Observe} & $\announce{!\psi}$ (hard public announcement) & ``I see 401 errors.'' \\[2pt]
\textsc{Hypothesize} & $\announce{\Uparrow\gamma}$ (soft lexicographic upgrade) & ``Could be a retry loop.'' \\[2pt]
\textsc{Support} & $\announce{\uparrow\gamma}$, or $\announce{\Uparrow\gamma}$ if $\gamma$-specific evidence & ``Timing is consistent.'' \\[2pt]
\textsc{Undermine} & family of plausibility downgrades of $\gamma$ & ``Wrong error code, 401, not 503.'' \\[2pt]
\textsc{Revise} & $\announce{!\lnot\gamma}$ or $\mathit{Att}$ edge addition & ``It's reversed: A causes B.'' \\[2pt]
\textsc{Expand-Awareness} & add $p$ to the awareness set $\mathcal{A}_i$; world-set refinement on $p$ & ``The DB alert is really Redis.'' \\[2pt]
\textsc{Resolve} & consensual $\announce{!\gamma}$, or decision recording dissent & ``Confirmed: bug is line~42.'' \\[2pt]
\textsc{Question} & no DEL update; $\chi$ queued as an open abductive problem & ``Why is traffic 3$\times$ normal?'' \\
\bottomrule
\end{tabular}
\end{minipage}
\end{center}

\noindent\textbf{Soundness of selective retraction.} The operations maintain one invariant on standing, and the retraction guarantee rests on it alone. For $X \subseteq \mathit{Args}_t$, write $\mathit{AF}_t \!\restriction\! X$ for $\mathit{AF}_t$ restricted to the arguments $\mathit{Args}_t \setminus X$. Under the semantics of \citet{dung1995acceptability}, moving the arguments of $X$ out of standing is exactly this restriction.

\begin{lemma}[Standing invariant]
\label{lem:standing}
Under the operation preconditions of \Cref{tab:ops-full}, no argument of $S_t$ is the target of an attack in $\mathit{Att}_t$; in particular, $S_t$ is conflict-free in $\mathit{AF}_t$ at every turn.
\end{lemma}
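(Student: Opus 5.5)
The proof goes by induction on the turn index $t$. The invariant to carry is the stronger first clause, $I(t)$: for every $(\beta,\alpha) \in \mathit{Att}_t$ we have $\alpha \notin S_t$, that is, $\mathit{st}_t(\alpha) \in \{\mathsf{weakened}, \mathsf{abandoned}\}$. Conflict-freeness then follows at once. If $\alpha,\beta \in S_t$ conflicted, one would attack the other, so some member of $S_t$ would be the target of an attack, contradicting $I(t)$. The base case is the initial map $\mathcal{D}_0$, whose attack relation and argument set are empty (or whose attack relation is empty), so $I(0)$ holds vacuously.

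For the inductive step, assume $I(t)$. Since \Cref{alg:apply} applies exactly one of the eight operations per utterance, it suffices to check each operation separately against its precondition in \Cref{tab:ops-full}. Each operation can break the invariant in only two ways: it can add an attack edge whose target lies in $S_{t+1}$, or it can move an argument that is already attacked back into $\{\mathsf{active},\mathsf{resolved}\}$. Since $\mathit{Args}$ only grows and old edges are never removed, these are the only two events to rule out.

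The cases group naturally.
\begin{enumerate}
\item \textsc{Expand-Awareness}, \textsc{Question}, and \textsc{Support} change neither $\mathit{Att}$ nor the set of attacked arguments. \textsc{Support} touches only plausibility in $\Model$, so $I$ is preserved trivially.
\item \textsc{Observe} and \textsc{Hypothesize} introduce a fresh argument with standing $\mathsf{active}$. The precondition must guarantee that no existing argument attacks the new one. Any incoming attack must instead be recorded as an \textsc{Undermine} or \textsc{Revise}, whose effect relabels the target.
\item \textsc{Undermine} and \textsc{Revise} add an edge $(\beta,\alpha)$. Their effect simultaneously sets $\mathit{st}_{t+1}(\alpha)$ to $\mathsf{weakened}$ or $\mathsf{abandoned}$. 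The new edge therefore targets a non-member of $S_{t+1}$, and existing edges are untouched.
\item \textsc{Resolve} either promotes an argument to $\mathsf{resolved}$ or re-establishes a weakened one. Here I will use the precondition that every attacker of the promoted argument is first moved out of standing (demoted or abandoned), or that its incoming attacks are resolved away in the commitment record. Otherwise the promotion would violate $I$.
\end{enumerate}

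I expect the \textsc{Resolve}/re-establishment case, together with the add-argument case, to be the main obstacle, since correctness there depends entirely on reading the preconditions of \Cref{tab:ops-full} as stated. The plan is to state them explicitly as the two conditions used above:
\begin{enumerate}
\item no $\mathsf{active}$ or $\mathsf{resolved}$ argument is ever the fresh target of an edge without being simultaneously demoted;
\item promotion into $S$ requires all current attackers of the promoted argument to be out of standing and the attack to be discharged.
\end{enumerate}
With these, every case reduces to a one-line check. Finally, I note that \textsc{Revise}, realised as $\announce{!\lnot\gamma}$, must also demote the argument for $\gamma$ rather than leave it $\mathsf{active}$; this is consistent with its role in the retraction semantics that follows.
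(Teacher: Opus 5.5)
Your overall structure matches the paper's: induction on turns, carrying the strong invariant that every target of an edge in $\mathit{Att}_t$ lies outside $S_t$, and splitting over the branches of \Cref{alg:apply}. Your no-edge cases and your \textsc{Undermine}/\textsc{Revise} case are fine. For \textsc{Observe} and \textsc{Hypothesize} you need no precondition at all. The new argument is fresh, $\mathit{Att}_t \subseteq \mathit{Args}_t \times \mathit{Args}_t$, and neither branch adds an edge, so nothing can attack it.

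The gap is the \textsc{Resolve} case. You let \textsc{Resolve} re-establish a weakened argument, and you protect the invariant by requiring that all attackers of the promoted argument be out of standing and the attack ``discharged''. That condition does not give your own $I(t+1)$. You correctly note that edges are never removed from $\mathit{Att}$, and $I$ constrains every edge regardless of the standing of its source. So an argument with an incoming edge from an abandoned attacker violates the first clause of the lemma the moment it is labelled $\mathsf{resolved}$.

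In particular, a weakened argument always carries the edge $(\alpha_e,\alpha_\gamma)$ that \textsc{Undermine} added. No re-establishment of it is compatible with the statement, and ``discharging'' the attack would require an edge deletion the framework never performs. Retreating to mere conflict-freeness does not rescue this. The strong clause is part of the statement, and \Cref{prop:dep-sound} uses it to obtain admissibility.

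The missing ingredient is the precondition actually listed in \Cref{tab:ops-full}, which the paper's proof invokes: \textsc{Resolve} may set $\mathit{st}(\alpha_\gamma)=\mathsf{resolved}$ only if no attack on $\alpha_\gamma$ is recorded in $\mathit{Att}_t$. Since \textsc{Resolve} adds no edge, $\alpha_\gamma$ is then not a target in $\mathit{Att}_{t+1}$, and $I(t+1)$ follows. The same precondition also rules out re-establishing any weakened argument, because of the \textsc{Undermine} edge. Replace your condition~2 with this precondition and drop the re-establishment branch, and the argument closes exactly as in the paper.
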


\noindent Attacks enter only through \textsc{Undermine} and \textsc{Revise}, which move their target out of $S_t$, and no operation returns a weakened or abandoned argument to it.

\begin{proposition}[Conflict-free selective retraction]
\label{prop:dep-sound}
Let $S_t$ be the active set of $\mathit{AF}_t$ and let $p \in \mathit{Prop}_t$ be retracted. Then $S' := S_t \setminus \mathit{Affected}(p)$ is conflict-free in $\mathit{AF}_t \!\restriction\! \mathit{Affected}(p)$, some preferred extension of $\mathit{AF}_t \!\restriction\! \mathit{Affected}(p)$ contains $S'$, and every such extension preserves every $\alpha \in S_t$ with $p \notin \mathit{Dep}_t(\alpha)$.
\end{proposition}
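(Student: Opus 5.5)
The plan is to derive all three claims from \Cref{lem:standing}, which says that no argument of $S_t$ is the target of an attack in $\mathit{Att}_t$. Throughout, write $\mathit{AF}' := \mathit{AF}_t \!\restriction\! \mathit{Affected}(p)$. Its arguments are $\mathit{Args}_t \setminus \mathit{Affected}(p)$, and its attacks are the pairs of $\mathit{Att}_t$ with both ends in that set. Clearly $S' \subseteq \mathit{Args}_t \setminus \mathit{Affected}(p)$, so $S'$ is a set of arguments of $\mathit{AF}'$.

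\textbf{Conflict-freeness.} Suppose $(\beta,\gamma)$ is an attack of $\mathit{AF}'$ with $\beta,\gamma \in S'$. Since restriction only deletes attacks, $(\beta,\gamma) \in \mathit{Att}_t$. But $\gamma \in S_t$, and \Cref{lem:standing} says no member of $S_t$ is attacked. This is a contradiction, so $S'$ is conflict-free in $\mathit{AF}'$.

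\textbf{Admissibility and extension to a preferred extension.} The same argument shows more than conflict-freeness. No member of $S'$ has any attacker in $\mathit{AF}'$, whether inside or outside $S'$. Hence the defence condition holds vacuously, and $S'$ is admissible in $\mathit{AF}'$.

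Next I invoke Dung's fundamental lemma: every admissible set is contained in some preferred extension. I would recall its Zorn's-lemma proof, which works because a union of a chain of admissible sets is admissible. In the paper's setting $\mathit{Args}_t$ is finite, so a simpler route suffices: take any maximal admissible superset of $S'$. This gives a preferred extension $E \supseteq S'$.

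\textbf{Preservation by every preferred extension.} This is the step needing care. The statement quantifies over \emph{every} preferred extension, not just the one constructed above. I would show that each unattacked argument lies in every preferred extension of $\mathit{AF}'$.

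Let $E$ be preferred and let $\alpha \in S'$. Then $E \cup \{\alpha\}$ is conflict-free for two reasons. First, $\alpha$ is unattacked, so nothing in $E$ attacks $\alpha$. Second, if $\alpha$ attacked some $\delta \in E$, then admissibility of $E$ would require some member of $E$ to attack $\alpha$, which is impossible. Now $E \cup \{\alpha\}$ is also admissible, because $\alpha$ needs no defence and $E$ already defends its own members. By maximality of $E$, it follows that $\alpha \in E$.

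Finally, note that $\alpha \in S_t$ with $p \notin \mathit{Dep}_t(\alpha)$ means exactly $\alpha \in S_t \setminus \mathit{Affected}(p) = S'$ by \Cref{def:affected}. So every such $\alpha$ lies in every preferred extension, which is the third claim.

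The main obstacle is making sure \Cref{lem:standing} really delivers "unattacked in $\mathit{AF}_t$", rather than only conflict-freeness of $S_t$. The whole argument, especially the preservation in \emph{every} extension, depends on the stronger form. I would therefore cite the lemma's first clause explicitly. I would also note that the lemma's statement is about $\mathit{Att}_t$ at the current turn, before any restriction, so it transfers to $\mathit{AF}'$ because restriction is monotone on attacks.
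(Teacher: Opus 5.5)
Your proof is correct. For the first two clauses it follows the paper's route: \Cref{lem:standing} makes every member of $S'$ unattacked in $\mathit{AF}_t \!\restriction\! \mathit{Affected}(p)$, so $S'$ is admissible vacuously, and Dung's lemma gives a preferred extension containing it. The paper gets conflict-freeness from conflict-freeness of $S_t$ plus the fact that restriction cannot create attacks, rather than from the unattacked clause, but that difference is immaterial.

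The real difference is the third clause. The paper reads ``every such extension'' as ``every preferred extension containing $S'$''. On that reading the clause follows at once from $S' = \{\alpha \in S_t : p \notin \mathit{Dep}_t(\alpha)\}$, and the paper's proof adds nothing further. You read it as ``every preferred extension of the restricted framework''. You then supply the extra maximality argument that an unattacked argument can be added to any preferred extension $E$ without losing admissibility, so it already lies in $E$.

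Your reading buys a strictly stronger guarantee: $S'$ is skeptically accepted, so preservation holds whichever preferred extension is chosen. The existence clause then becomes a corollary, since preferred extensions always exist ($\varnothing$ is admissible). The paper's statement, read its way, does not by itself exclude a preferred extension that drops a member of $S'$; your argument shows none exists. The paper's reading is shorter but says less.
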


\begin{corollary}[Iterated retraction]
\label{cor:dep-closure}
Under the hypotheses of \Cref{prop:dep-sound}, $S_t \setminus \mathit{Affected}^{*}(p)$ is conflict-free in $\mathit{AF}_t \!\restriction\! \mathit{Affected}^{*}(p)$, and every $\alpha \in S_t$ with no dependency path to $p$ is preserved.
\end{corollary}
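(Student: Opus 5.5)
The plan is to rerun the argument for \Cref{prop:dep-sound} with $X^{*} := \mathit{Affected}^{*}(p)$ in place of $\mathit{Affected}(p)$. The only fact about $\mathit{Affected}(p)$ that argument should need is that it is some subset of $\mathit{Args}_t$ removed by restriction. First we make the closure precise. The chain $X_0 \subseteq X_1 \subseteq \cdots$ is monotone inside the finite set $\mathit{Args}_t$, so it stabilises after at most $|\mathit{Args}_t|$ rounds. Hence $X^{*} = X_K$ for some finite $K$, and $X^{*}$ is closed: if $\mathit{Dep}_t(\alpha) \cap \mathit{claim}(X^{*}) \neq \varnothing$ then $\alpha \in X^{*}$.

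\emph{Conflict-freeness.} By \Cref{lem:standing}, no argument of $S_t$ is the target of any attack in $\mathit{Att}_t$. The restriction $\mathit{AF}_t \!\restriction\! X^{*}$ keeps only attacks among $\mathit{Args}_t \setminus X^{*}$, which form a subset of $\mathit{Att}_t$. So no member of $S^{*} := S_t \setminus X^{*}$ is attacked in the restricted framework. In particular $S^{*}$ is conflict-free there, and indeed unattacked, hence admissible. Admissible sets extend to preferred extensions in finite frameworks (Dung's fundamental lemma), so some preferred extension contains $S^{*}$. This mirrors the middle clause of \Cref{prop:dep-sound}, which we get for free.

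\emph{Preservation.} We must show: if $\alpha \in S_t$ has no dependency path to $p$, then $\alpha \notin X^{*}$. We also make ``preserved'' precise: $\alpha$ is unattacked in the restricted framework, so it lies in every preferred extension. This holds because any preferred extension $E$ is complete, hence contains the grounded extension, which contains every unattacked argument. The main obstacle is the first step, which is a definitional matching of ``dependency path'' with the iteration $X_k$. We define a dependency path to $p$ as a chain $\alpha = \alpha_0, \alpha_1, \dots, \alpha_m$ with $\mathit{claim}(\alpha_{i+1}) \in \mathit{Dep}_t(\alpha_i)$ and $p \in \mathit{Dep}_t(\alpha_m)$. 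Then we prove by induction on $k$ that every $\beta \in X_k$ has such a path of length at most $k$. For $k=0$ this is immediate from $\mathit{Affected}(p)$. For the inductive step, a new $\beta \in X_{k+1}$ has some $q \in \mathit{Dep}_t(\beta)$ with $q = \mathit{claim}(\gamma)$ for some $\gamma \in X_k$, and we prepend $\beta$ to $\gamma$'s path. Contrapositively, an $\alpha$ with no path lies in no $X_k$, so $\alpha \in S^{*}$, and the unattackedness argument above places it in every preferred extension of $\mathit{AF}_t \!\restriction\! X^{*}$.

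One subtlety needs a check. Paths may pass through arguments outside $S_t$, and several arguments may share a claim, since $\mathit{claim}$ need not be injective. The induction above handles both, because $X_k$ ranges over all of $\mathit{Args}_t$ and matches claims rather than arguments. So ``dependency path'' must be read in the same claim-based, all-arguments sense, and I would state this reading explicitly.
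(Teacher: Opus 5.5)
Your proof is correct, but it is organised differently from the paper's. The paper argues by induction over the rounds $X_k$. At each round it re-applies \Cref{prop:dep-sound} to the claims $q \in \mathit{claim}(X_k)$ on the framework already restricted to $\mathit{Args}_t \setminus X_k$. It then settles preservation in one sentence: an argument with no dependency path to $p$ is never picked up.

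You instead notice that the conflict-freeness argument never uses what the removed set is. \Cref{lem:standing} makes every member of $S_t$ unattacked in $\mathit{AF}_t$, and restricting to $\mathit{Args}_t \setminus X$ for any $X$ only deletes attacks. So $S_t \setminus X^{*}$ is unattacked, hence conflict-free and admissible, in one step. Induction is then needed only to match membership in $X^{*}$ with the existence of a dependency path, and you prove this explicitly where the paper asserts it.

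This buys you two things. First, it avoids bookkeeping that the paper's inductive step glosses over. One application of the proposition to a single $q$ removes only $\mathit{Affected}(q)$, whereas $X_{k+1} \setminus X_k$ may collect several such $q$. Applying the proposition to a restricted framework also tacitly needs the standing invariant to carry over there; it does, for exactly the reason your one-step argument uses. Second, it gives a sharper meaning to ``preserved'': membership in every preferred extension of $\mathit{AF}_t \!\restriction\! X^{*}$, via unattackedness and the grounded extension, not merely non-membership in $X^{*}$.

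The paper's round-by-round presentation mirrors how the engine computes $\mathit{Affected}^{*}(p)$. Yours shows the guarantee holds for any removed set, which is both simpler and more general.
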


\noindent\textbf{Per-turn tractability.} The runtime check $\mathsf{Verify}(c, \mathcal{D}_t)$ is reachability over $\mathit{Dep}_t$, computable in time linear in $|\mathit{Args}_t| + |\mathit{Dep}_t|$ with no LLM call. The update step requires bounded model checking on $\mathcal{D}_{t-1}$, which is hard without restrictions: DEL model checking is \PSPACE-complete \citep{aucher2013complexity} and preferred-extension reasoning is $\PiP{2}$-complete \citep{dunne2009complexity}, which is why standing is maintained by the operations (\Cref{lem:standing}) rather than recomputed from the attack relation. Our scenarios satisfy three structural restrictions: (A1) at most 12 agents; (A2) an acyclic attack graph, inherited from IBIS-style deliberation \citep{kunz1970ibis} in which each new argument supports, attacks, or refines an existing one rather than cycling back; (A3) an explicit finite world set $W$ for $\Model_t$ (\Cref{def:epm}).

\begin{proposition}[Tractability under structural restrictions]
\label{prop:complexity}
Under (A1)--(A3), per-turn computation is polynomial in $|W|$ and $|\mathit{Args}_t|$: $\bigO(|W|)$ per hard announcement, $\bigO(|W| \log |W|)$ per soft upgrade, $\bigO(1)$ per standing update, and $\bigO(|\mathit{Args}_t| + |\mathit{Dep}_t|)$ per $\mathsf{Verify}$ walk or $\mathit{Affected}^{*}$ query over a maintained reverse dependency index.
\end{proposition}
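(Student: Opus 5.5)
The proof proceeds operation by operation, bounding the cost of each primitive separately under (A1)--(A3), and then treats the two graph queries as instances of linear-time reachability. For the epistemic primitives, (A3) lets us represent $\Model_t$ explicitly: a list of worlds $W$, a valuation table, and, for each agent, a plausibility preorder stored as a rank array over $W$. By (A1) the number of such arrays is at most $12$, a constant. We also assume formulas are evaluated against a precomputed truth column, so that $\llbracket\psi\rrbracket$ is a bit-vector of length $|W|$ maintained by the Interpreter's propositional content.

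\textbf{Hard announcement.} A hard announcement $\announce{!\psi}$ deletes the $\lnot\psi$ worlds. A single scan of $W$ filters the world list and each of the constantly many rank arrays, which costs $\bigO(|W|)$.

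\textbf{Soft upgrades.} The lexicographic upgrade $\announce{\Uparrow\gamma}$ makes every $\gamma$-world more plausible than every $\lnot\gamma$-world while preserving the order within each class. The conservative upgrade $\announce{\uparrow\gamma}$ promotes only the most plausible $\gamma$-worlds. In both cases we partition by the bit-vector, re-sort each class by its old rank, and concatenate, which costs $\bigO(|W|\log|W|)$ per agent and hence overall by (A1). The \textsc{Undermine} downgrades are symmetric. \textsc{Expand-Awareness} refines $W$ on $p$, but the same bound applies measured in the resulting $|W|$. \textsc{Question} makes no update.

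\textbf{Standing updates.} By \Cref{lem:standing}, standing is maintained by the operations and never recomputed from $\mathit{Att}_t$, so each \textsc{Undermine}, \textsc{Revise}, or \textsc{Resolve} writes one entry of $\mathit{st}_t$, appends at most one attack edge, and appends the new argument's $\mathit{Dep}$ entries to the forward and reverse indices. The only subtlety is showing that no $\Pi^{\mathrm{P}}_2$ extension computation is ever triggered; we invoke (A2) and \Cref{lem:standing} to argue that the labelling is already conflict-free and needs no fixpoint. Each such update is $\bigO(1)$ amortised, assuming hash maps keyed by argument and proposition.

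\textbf{Graph queries.} Both queries run on the bipartite graph whose nodes are arguments and propositions. Each argument $\alpha$ has an edge to each $q\in\mathit{Dep}_t(\alpha)$. Each proposition $q$ has edges to the arguments in $S_t$ with $\mathit{claim}=q$ for $\mathsf{Verify}$, or, in the reverse index, to the arguments $\beta$ with $q\in\mathit{Dep}_t(\beta)$ for $\mathit{Affected}^{*}$.

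$\mathsf{Verify}$ is a DFS from $\alpha_c$ with a visited set, so each argument and each dependency edge is processed once. There are two points to argue. First, a cycle in $\mathit{Dep}$ does not cause nontermination, because visited nodes are skipped. Second, the choice of $\beta$ per $q$ must not cause exponential backtracking. We fix $\beta$ as any active argument with that claim, justified by treating the claim-index lookup as existential, which the definition permits.

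For $\mathit{Affected}^{*}(p)$, the iteration $X_k$ of \Cref{def:affected} is exactly BFS from $p$ through the reverse index: arguments depending on $p$, then their claims, then arguments depending on those claims. The frontier formulation enqueues each argument and each claim at most once, giving $\bigO(|\mathit{Args}_t|+|\mathit{Dep}_t|)$ rather than the naive $|\mathit{Args}_t|$ rounds each of linear cost.

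\textbf{Main obstacle.} I expect the main obstacle to be the $\mathsf{Verify}$ bound. If one insists on searching for \emph{some} successful choice of $\beta$ when several active arguments share a claim, the walk becomes an AND--OR search. To keep it linear, I would first prove it as reachability on an AND--OR graph solved by the standard linear-time Horn-satisfiability/counter propagation: mark arguments grounded bottom-up, with each proposition grounded if any active claimant is, and each argument grounded if all its dependencies are. This is linear in the number of edges.
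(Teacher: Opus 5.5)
Your decomposition is the paper's. It uses one pass over $W$ for $\announce{!\psi}$, a bitmap partition on $\llbracket\gamma\rrbracket$ plus a stable sort for $\announce{\Uparrow\gamma}$, and constant-time label writes, because \Cref{alg:apply} maintains $\mathit{st}_t$ directly and never computes an extension. The two queries are graph searches. Your BFS over the reverse index, if charged only for what it touches, also gives the paper's sharper output-sensitive bound for $\mathit{Affected}^{*}$.

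You differ in two places. First, you use (A1) to absorb the per-agent factor of the plausibility orders, which the paper leaves implicit. Second, and more substantively, the paper just calls $\mathsf{Verify}$ reachability over $\mathit{Dep}_t$. That is exact only when each proposition has one active claimant or the walk commits to a fixed one. You observe that if the walk succeeds whenever \emph{some} choice of $\beta$ works, it is an AND--OR search, and you keep the linear bound by counter propagation. This gives a bound valid under either reading of \Cref{def:grounded}, which the paper's proof does not address.

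That refinement has one inconsistency to repair. Your visited-set DFS treats a revisited argument as success. This is a greatest-fixpoint reading, and it is also the natural reading of ``succeeds if every step finds such a $\beta$''. Bottom-up Horn propagation instead computes the least fixpoint, so it rejects any argument supported only through a cycle of $\mathit{Dep}_t$.

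Such cycles are not excluded: (A2) constrains only $\mathit{Att}_t$, and the $\mathit{subsumes}$ loop of \textsc{Resolve} can close one. For example, suppose $h_4$ is recorded in the dependencies of a subsumed $h_3$, while $h_4$ itself rests on $h_3$. Under the least fixpoint both become ungrounded.

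Compute the greatest fixpoint instead, by propagating failure:
\begin{enumerate}
\item Start from the propositions with no active claimant.
\item Use the reverse index to fail every active argument depending on them.
\item Decrement the surviving-claimant counter of each failed argument's claim.
\item Whenever a counter reaches zero, fail that proposition and cascade.
\end{enumerate}
This is equally $\bigO(|\mathit{Args}_t| + |\mathit{Dep}_t|)$. A DFS from $\alpha_c$ through surviving claimants then returns the witness $\mathit{Dep}^{*}_t(\alpha_c)$ that $\mathsf{Verify}$ must output.
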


\section{Architecture and Validation Scenarios}
\label{sec:phases}\label{sec:architecture}

The verifier runs as a loop between an LLM Interpreter, a symbolic engine, and a context renderer (\Cref{fig:architecture}), validated on two authored multi-agent scenarios, Phases~2 and~3. Throughout, $o_i$ denotes an observation, $h_i$ a hypothesis, $a_i$ an assumption a decision rests on, and T$n$ the $n$-th turn.

\begin{figure}[t]
\centering
\definecolor{symFill}{HTML}{E8F0F8}\definecolor{symEdge}{HTML}{0F4D92}%
\definecolor{llmFill}{HTML}{FBF0DB}\definecolor{llmEdge}{HTML}{DC8A06}%
\definecolor{txtFill}{HTML}{F5F5F5}\definecolor{txtEdge}{HTML}{8C8C8C}%
\begin{tikzpicture}[
  box/.style={rectangle, draw, line width=0.8pt, rounded corners, minimum width=2.5cm, minimum height=0.85cm, align=center, font=\small},
  arr/.style={-{Stealth[length=3mm]}, thick},
  exlbl/.style={font=\scriptsize, text=black!70, align=center, inner sep=2pt},
  node distance=2.3cm
]
\node[box, fill=txtFill, draw=txtEdge] (conv) {Conversation\\(natural language)};
\node[box, fill=llmFill, draw=llmEdge, right=of conv] (llm) {LLM\\Interpreter};
\node[box, fill=symFill, draw=symEdge, right=of llm] (sym) {Symbolic\\Engine};
\node[box, fill=symFill, draw=symEdge, below=1.1cm of sym] (render) {Context\\Renderer};
\node[box, fill=txtFill, draw=txtEdge, below=1.1cm of llm] (ctx) {LLM Context\\Window};
\draw[arr] (conv) -- node[exlbl, above, text width=2.2cm] {\itshape ``it's reversed: \\ Auth$\to$Redis''} (llm);
\draw[arr] (llm) -- node[exlbl, above, text width=2.2cm] {\textsc{Hypothesize} \\ +\,\textsc{Revise}} (sym);
\draw[arr] (sym) -- node[exlbl, right, text width=2.6cm, anchor=west, xshift=2pt] {add $h_4$;\\ flip causal arrow;\\ revise plausibility} (render);
\draw[arr] (render) -- node[exlbl, below, text width=3cm] {\itshape ``$h_4$ now active''} (ctx);
\draw[arr, dashed] (ctx) -- node[font=\scriptsize, left] {informs} (llm);
\end{tikzpicture}
\caption{Hybrid architecture, illustrated with Phase~2 T12 (an engineer reverses the causal direction). Solid: utterance flow, classified operations, engine update, summary. Dashed: the rendered state fed back into the LLM's context.}
\label{fig:architecture}
\end{figure}

\noindent\textbf{LLM Interpreter and Symbolic Engine.} For each utterance $u_t$, the Interpreter performs (1)~\emph{operation classification} into one of the eight operations of \Cref{subsec:operations} and (2)~\emph{proposition extraction} of the relevant formal content. The Engine drives the loop and keeps the formal guarantees on the symbolic side: it flags observations that no current belief predicts and prompts the Interpreter for a hypothesis, checks each candidate operation against its precondition (\Cref{tab:ops-full}), and either applies the update or re-prompts naming the failing condition, such as ``use \textsc{Expand-Awareness}, not \textsc{Observe}''.

\noindent\textbf{Context Renderer.} After each engine update, the Renderer summarises the changed parts of $\mathcal{D}_t$ in natural language and writes the summary into the LLM's context window (dashed arrow in \Cref{fig:architecture}). The next classification step sees the \emph{rendered state} of $\mathcal{D}_t$ rather than raw conversation history. The Renderer's output is also what users see when the verifier flags an ungrounded continuation: ``decision $\alpha$ no longer grounded; affected by retraction of $a_3$; depends on $a_1, a_2$.''

\noindent\textbf{Phase 2: Naturalistic debugging.}
Three engineers debug a cascading failure (\Cref{app:phase2}). The scenario exercises hypothesis lifecycle tracking and counterfactual retraction: when an observation is invalidated, which conclusions lose grounding? Ground truth: token bug $\to$ retry storm $\to$ Redis exhaustion $\to$ rate-limit bypass $\to$ Stripe 429s, plus a monitoring miscategorisation. The conversation generates four hypotheses with non-trivial lifecycles, two of them decisive: $h_2$ (Redis$\to$Auth) is undermined at T7 by error-code evidence and abandoned at T9, and $h_4$ at T12 reverses the causal direction, subsuming $h_1$ and $h_3$ at T13. The verifier is queried with four retractions ($\mathit{Affected}(o_8), \mathit{Affected}(o_9), \mathit{Affected}(o_6)$, and the two-hop $\mathit{Affected}(o_5)$ via $o_5 \to h_3 \to h_4$). Correctness depends on whether the dependency map records \emph{why} $h_2$ was abandoned and preserves the provenance of $h_4$'s causal reversal: both are structural facts that flat history cannot provide.

\noindent\textbf{Phase 3: Open-ended deliberation.}
A team chooses a real-time collaboration architecture (\Cref{app:phase3}). The scenario exercises commitment tracking and selective revision under assumption change: when an upstream assumption flips, which decisions need re-grounding and which remain intact? The outcome is a \emph{decision with dissent}: Alice commits to Yjs server-relayed under three assumptions ($a_1$ docs stay short; $a_2$ editing remains burst-mode; $a_3$ Q2 long-running documents unconfirmed). Bob records dissent. When $a_3$ later flips, $\mathit{Affected}(a_3)$ must flag exactly the Yjs decision while leaving the access-control resolution intact: the selective re-grounding capability that motivates the verifier.
\section{Experiments}
\label{sec:system}\label{sec:experiment}

The experiments ask whether the verifier beats budget-matched retrieval where premises get superseded and whether that gain survives end-to-end extraction from raw text (\Cref{sec:results}), then whether the engine's guarantees hold in practice and what a query costs (\Cref{sec:analysis}).

\subsection{Experimental setup}
\label{sec:setup}

\noindent\textbf{Benchmarks.} Two third-party supersession benchmarks carry the main evaluation (protocols in \Cref{app:setup}). \emph{ReviseQA} \citep{helwe2025reviseqa} poses a conclusion to re-evaluate after each of seven sequential context edits, and we run all $930$ verified scenarios. \emph{MemAB-FC}, the FactConsolidation split of MemoryAgentBench \citep{hu2025memoryagentbench}, streams numbered facts of which $35$--$40\%$ are later superseded, and asks about \emph{current} values at four stream lengths from 6K to 262K tokens.

\noindent\textbf{Arms.} Each benchmark runs four arms. \emph{LLM-only} answers from the raw context. \emph{Transcript-RAG} retrieves from the full context under the same retriever and budget as the verifier arms. \emph{Verifier (native)} feeds the engine the benchmark's own structured updates, so the QA model reads retrieval over the active set $S_t$ and the engine is measured alone. \emph{Verifier (end-to-end)} has a GPT-4o Interpreter produce every update from the raw text and measures engine and Interpreter together. The two verifier arms thus separate the engine's soundness, which \Cref{lem:standing} and \Cref{prop:dep-sound} guarantee, from the Interpreter's faithfulness in recovering $\mathcal{D}_t$.

\noindent\textbf{Models and metrics.} The QA model that reads the rendered state varies over Qwen2.5-7B-Instruct, Qwen2.5-14B-Instruct, Gemma-3-12B-it, GPT-4o-mini and GPT-4o, with the Interpreter held at GPT-4o throughout. The classification experiment varies the Interpreter instead (\Cref{tab:cross-model}). ReviseQA reports accuracy@$k$, the fraction of scenarios whose first $k$ edit steps are all answered correctly. MemAB-FC reports single-hop and multi-hop accuracy pooled over the four stream lengths. Classification reports multi-label F1 and exact match, overall and on each scenario's \emph{key shifts}, the turns at which standing or awareness changes. Paired comparisons use exact McNemar tests.

\subsection{Results on supersession benchmarks}
\label{sec:results}

\begin{figure}[t]
\centering
\includegraphics[width=\linewidth]{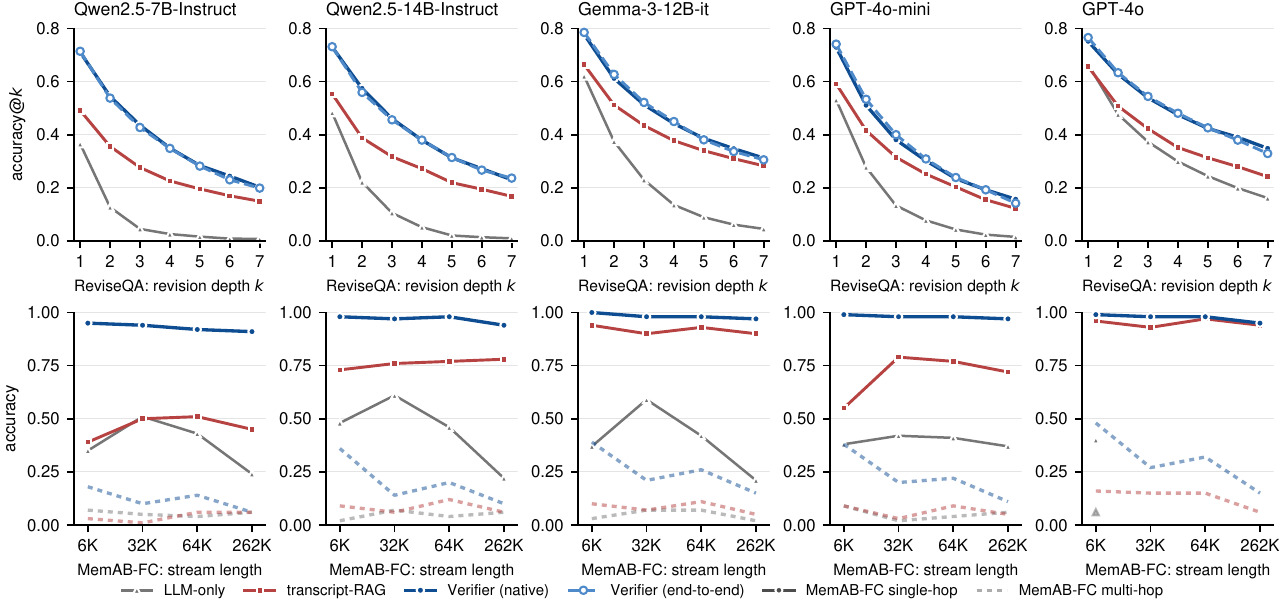}
\caption{One column per QA model. Top: ReviseQA accuracy@$k$ under all four arms, with the end-to-end curve tracking the native one. Bottom: MemAB-FC accuracy across stream lengths, verifier vs.\ transcript-RAG.}
\label{fig:reviseqa-lcata}
\end{figure}

\begin{table}[t]
\centering
\caption{ReviseQA (accuracy@2\,/\,@7) and MemAB-FC (single-hop\,/\,multi-hop, pooled over the four stream lengths). Last column: exact McNemar $p$, native verifier vs.\ transcript-RAG per metric. Bold: best per row. Full per-depth and per-length grids in \Cref{app:reviseqa-models,app:cr}.}
\label{tab:reviseqa-multimodel}
\small
\resizebox{0.90\textwidth}{!}{
\begin{tabular}{lcccc}
\toprule
QA model / benchmark & LLM-only & transcript-RAG & Verifier (native) & native vs.\ RAG ($p$) \\
\midrule
\multicolumn{5}{l}{\emph{Qwen2.5-7B-Instruct}} \\
\quad ReviseQA (accuracy@2\,/\,@7) & 0.127\,/\,0.008 & 0.356\,/\,0.149 & \textbf{0.544}\,/\,\textbf{0.202} & 2e-23\,/\,3e-4 \\
\quad MemAB-FC (SH\,/\,MH acc.) & 0.383\,/\,0.055 & 0.463\,/\,0.040 & \textbf{0.930}\,/\,\textbf{0.120} & 2e-49\,/\,4e-7 \\
\midrule
\multicolumn{5}{l}{\emph{Qwen2.5-14B-Instruct}} \\
\quad ReviseQA (accuracy@2\,/\,@7) & 0.222\,/\,0.010 & 0.388\,/\,0.168 & \textbf{0.574}\,/\,\textbf{0.230} & 2e-24\,/\,4e-5 \\
\quad MemAB-FC (SH\,/\,MH acc.) & 0.443\,/\,0.048 & 0.760\,/\,0.083 & \textbf{0.968}\,/\,\textbf{0.200} & 4e-22\,/\,2e-9 \\
\midrule
\multicolumn{5}{l}{\emph{Gemma-3-12B-it}} \\
\quad ReviseQA (accuracy@2\,/\,@7) & 0.374\,/\,0.045 & 0.512\,/\,0.283 & \textbf{0.612}\,/\,\textbf{0.312} & 2e-8\,/\,1e-1 \\
\quad MemAB-FC (SH\,/\,MH acc.) & 0.398\,/\,0.048 & 0.917\,/\,0.083 & \textbf{0.983}\,/\,\textbf{0.253} & 9e-7\,/\,7e-18 \\
\midrule
\multicolumn{5}{l}{\emph{GPT-4o-mini}} \\
\quad ReviseQA (accuracy@2\,/\,@7) & 0.277\,/\,0.015 & 0.416\,/\,0.123 & \textbf{0.511}\,/\,\textbf{0.157} & 9e-9\,/\,9e-3 \\
\quad MemAB-FC (SH\,/\,MH acc.) & 0.395\,/\,0.052 & 0.708\,/\,0.065 & \textbf{0.980}\,/\,\textbf{0.228} & 1e-29\,/\,2e-12 \\
\midrule
\multicolumn{5}{l}{\emph{GPT-4o}} \\
\quad ReviseQA (accuracy@2\,/\,@7) & 0.476\,/\,0.161 & 0.508\,/\,0.242 & \textbf{0.628}\,/\,\textbf{0.348} & 4e-13\,/\,4e-11 \\
\quad MemAB-FC (SH\,/\,MH acc.) & 0.400\,/\,0.060$^{\dagger}$ & 0.950\,/\,0.130 & \textbf{0.975}\,/\,\textbf{0.305} & 2e-2\,/\,5e-16 \\
\bottomrule
\end{tabular}
}
\par\smallskip
\parbox{0.90\textwidth}{\footnotesize
$^{\dagger}$\,GPT-4o LLM-only was run on the 6K stream only. Every other
cell pools the four lengths.}
\end{table}

\begin{figure}[t]
\centering
\includegraphics[width=0.9\linewidth]{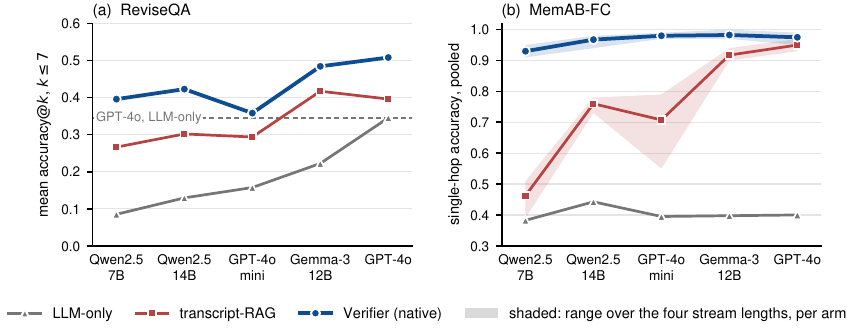}
\caption{QA models ordered by LLM-only accuracy on ReviseQA. \textbf{(a)} Mean accuracy@$k$ for $k\le7$. Dashed: GPT-4o, LLM-only. \textbf{(b)} MemAB-FC single-hop accuracy, pooled over stream lengths. Shading: range over lengths.}
\label{fig:capability}
\end{figure}

\noindent\textbf{Belief revision at scale (ReviseQA).} \Cref{tab:reviseqa-multimodel} reports both benchmarks of this block and \Cref{fig:reviseqa-lcata} shows the per-depth curves. With native updates, the verifier beats the LLM-only baseline at every depth on all five QA models (on GPT-4o, accuracy@7 rises $0.161\to0.348$, McNemar $p=1.2\times10^{-28}$). Against the budget-matched transcript-RAG baseline it leads at every depth $k\in\{1,\ldots,7\}$ on all five, significantly at every depth on four of them and through $k\le6$ on Gemma-3-12B, where the $k{=}7$ margin is $+2.9$pp (per-depth tests in \Cref{app:reviseqa-models}). Both retrieval arms draw on the same post-edit statements. The verifier hands the QA model the standing premises the conclusion rests on, where transcript-RAG hands it a similarity-ranked slice of that context. Ordered by unaided accuracy, the five QA models spread over $0.09$--$0.35$ mean accuracy@$k$ alone and $0.36$--$0.51$ with the verifier (\Cref{fig:capability}a). The weakest, Qwen2.5-7B, reaches $0.396$ with the verifier, above GPT-4o's $0.345$ without it. The end-to-end arm is statistically indistinguishable from the native arm on all five QA models (McNemar $p\ge0.07$ at every depth, and the two curves coincide): the gain survives LLM extraction, with no benchmark edit annotations.

\noindent\textbf{Fact supersession at scale (MemAB-FC).} The only difference between the two retrieval arms is standing tracking. On pooled single-hop the verifier reaches $0.93$--$0.98$ vs.\ transcript-RAG at $0.46$--$0.95$, significant for every QA model (\Cref{tab:reviseqa-multimodel}), and the margin tracks how poorly the QA model resolves conflicts unaided: GPT-4o applies the serial rule by itself and saturates single-hop, but its gain moves to multi-hop, where stale facts poison the reasoning chain ($0.130\to0.305$ pooled, $p{=}5\times10^{-16}$, significant at every stream length). The verifier's single-hop accuracy also shifts by at most $0.04$ across stream lengths, while transcript-RAG varies with both model and length (\Cref{fig:capability}b). The weakest QA model with the verifier ($0.930$) comes within $0.02$ of GPT-4o with retrieval ($0.950$). Multi-hop is verifier-led for all five QA models ($p\le4\times10^{-7}$), with absolute accuracy bounded by the model's chaining ability. The end-to-end arm, a GPT-4o Interpreter deciding supersession fact-by-fact, reproduces the active set (Jaccard $0.99$--$1.00$) and is statistically indistinguishable from the native arm for all five QA models (McNemar $p\ge0.62$) while preserving the margin over transcript-RAG ($p\le4.4\times10^{-14}$). Protocol and per-length results are in \Cref{app:cr}.

\noindent\textbf{Comparison with published memory systems.} MemoryAgentBench evaluates memory systems on FactConsolidation with a GPT-4o-mini backbone, so we re-ran all arms on it. Our LLM-only arm matches the published numbers for the benchmark's long-context agent ($0.395$/$0.052$ vs.\ $0.45$/$0.05$ single-/multi-hop), and our transcript-RAG arm beats its Simple RAG agent by retrieving individual facts rather than 4096-token chunks. Against that stronger baseline the verifier still leads $112{:}3$ single-hop and $78{:}13$ multi-hop ($p\le1.8\times10^{-12}$), reaching $0.980$ and $0.228$. The benchmark's best published multi-hop entry, $0.28$, comes from a 400K-context model.

\subsection{Analysis}
\label{sec:analysis}

\begin{table}[t]
\centering
\caption{Phase~2 classification across Interpreter models and prompt conditions. Each cell: F1\,/\,Exact-match\%. Phase~3 results in \Cref{tab:phase3-results}.}
\label{tab:cross-model}
\small
\resizebox{0.8\textwidth}{!}{
\begin{tabular}{lcccccc}
\toprule
& \multicolumn{2}{c}{\textbf{Minimal}} & \multicolumn{2}{c}{\textbf{Definitions}} & \multicolumn{2}{c}{\textbf{State-aug.}} \\
\cmidrule(lr){2-3} \cmidrule(lr){4-5} \cmidrule(lr){6-7}
& F1 & Exact & F1 & Exact & F1 & Exact \\
\midrule
\emph{Phase 2 (debugging): Overall (13 turns)} & & & & & & \\
\quad Claude Sonnet 4           & 0.66 & 11\% & 0.85 & 54\% & \textbf{0.91} & \textbf{63\%} \\
\quad GPT-4o                    & 0.69 & 31\% & 0.74 & 40\% & 0.81 & 52\% \\
\quad Qwen2.5-7B-Instruct       & 0.54 & 29\% & 0.57 & 26\% & 0.59 & 29\% \\
\quad Llama-3.1-8B-Instruct     & 0.54 & 12\% & 0.55 & 20\% & 0.65 & 32\% \\
\emph{Phase 2: Key shifts (4 turns)} & & & & & & \\
\quad Claude Sonnet 4           & 0.50 &  5\% & 0.74 & 25\% & \textbf{0.83} & 30\% \\
\quad GPT-4o                    & 0.43 &  0\% & 0.50 & 10\% & 0.74 & \textbf{50\%} \\
\quad Qwen2.5-7B-Instruct       & 0.39 & 10\% & 0.47 &  0\% & 0.47 &  0\% \\
\quad Llama-3.1-8B-Instruct     & 0.37 &  0\% & 0.47 & 20\% & 0.56 & 15\% \\
\bottomrule
\end{tabular}
}
\end{table}

\noindent\textbf{Classification.} \Cref{tab:cross-model} varies what the Interpreter is told: the operation names only (\emph{Minimal}), full definitions with examples (\emph{Definitions}), or those definitions plus the rendered engine state at checkpoints through the conversation (\emph{State-augmented}). Definitions alone lift Phase~2 F1 from $0.66$ to $0.85$ ($+0.19$) and key-shift F1 from $0.50$ to $0.74$ ($+0.24$): the taxonomy is learnable from general definitions. State-augmented adds $+0.06$ overall and $+0.09$ on key shifts. \textsc{Expand-Awareness} is never recovered under Minimal or Definitions and is recovered in four of five State-augmented runs on GPT-4o, confirming that the rendered state enables distinctions the LLM cannot make from text alone. Absolute F1 scales with model size. Phase~3 reaches $0.85$ overall and $0.92$ on key shifts, demonstrating cross-scenario generalisation.

\noindent\textbf{Soundness.} Given a faithful $\mathcal{D}_t$, the retraction query returns exactly the arguments that lose support. On Phase~2 with ground-truth dependencies, $\mathit{Affected}^{*}$ resolves $16/16$ affected/unaffected decisions correctly (4 retraction queries $\times$ 4 hypotheses, against the transitive dependency ground truth after the T13 subsumption), while the one-step $\mathit{Affected}$ of \Cref{def:affected} recovers $12/16$, missing exactly the two-hop cases. The same holds on third-party structure. On RECON's released provenance skeletons \citep{arya2026recon}, $\mathit{Affected}^{*}$ reproduces all $64$ counterfactual scenarios ($198$ dependent steps, no false positives), while one-step $\mathit{Affected}$ keeps precision $1.0$ at recall $0.33$. A GPT-4o reader handed the same explicit structure scores $61/64$. What the walk adds is the guarantee and a per-query cost that does not grow with the state (\Cref{app:recon}).

\noindent\textbf{Faithfulness.} Whether the Interpreter recovers $\mathcal{D}_t$ from text is the empirical half, and the external benchmarks answer it at scale. On ReviseQA and MemAB-FC, $67.0\%$ of edit steps and $35$--$40\%$ of facts supersede a premise inside the maintained dependency map, and the Interpreter records $99.9\%$ and $98.9\%$ of them. On Phase~2 the structured pipeline extracts dependencies better than directly prompting the same model for $\mathit{Dep}$ tuples ($\mathrm{F1}\!=\!0.60$ vs.\ $0.34$), and the residual gap localises to one cross-hypothesis link that prompt variants never recover but a \textsc{Resolve}-stage extension does (\Cref{app:e1,app:e1b,app:e1c}). A noise diagnostic on the direct-verifier set points the same way: random dependency-edge noise leaves accuracy flat, while corrupting lifecycle labels drives stale-claim accuracy from $1.00$ to $0.27$ (\Cref{app:e5}). The bottleneck is faithful \textsc{Revise} and abandonment extraction, not dependency-edge recall.

\noindent\textbf{Direct verifier test.} The external benchmarks score downstream QA. The direct verifier test is the one setting where $\mathsf{Verify}(c,\mathcal{D}_t)$ itself is scored and the walk has depth. On a 50-item Phase~2 test set, every method solves the surface-detectable categories (\Cref{app:e2}). The informative item is an action recommendation whose load-bearing premise is the abandoned $h_2$: an LLM judge and depth-matched retrieval accept it as plausible advice, while $\mathsf{Verify}$ walks to $h_2$, reads its standing, and rejects it. We read the test as a mechanism illustration, with the scaled evidence in \Cref{tab:reviseqa-multimodel}.

\noindent\textbf{Cost.} The verifier's query cost is independent of scale. Reconstructing the MemAB-FC prompts token for token, a full-context query grows from $6.0$k to $114.5$k prompt tokens across the $6$K--$262$K streams, under a truncation budget favouring the baseline, while the verifier's active-state query stays at ${\sim}0.8$k, an $8\times$ to $143\times$ gap (\Cref{fig:cost}a). The engine side is flat too: with a reverse dependency index updated in $\bigO(1)$ per edge, a retraction query costs $0.25$--$0.46\,\mu$s at $K$ up to $2000$ turns against $190\,\mu$s for history replay (\Cref{fig:cost}b). Under the write/read cost accounting of \citet{pollertlam2026beyond}, the structured-update arm pays no extraction cost and saves from the first query, and the end-to-end arm's one-time Interpreter cost is repaid after ${\sim}16$ queries at $32$K (\Cref{app:e4}).

\section{Related work}
\label{sec:related}

\noindent\textbf{Retrieval-augmented attribution.} ALCE \citep{gao2023alce}, AutoAIS \citep{bohnet2023autoais}, and MTRAG \citep{katsis2025mtrag} score a generated claim by whether it traces to a cited \emph{external} document. The verifier targets the complementary problem of \emph{interactional} grounding, where the supporting evidence is the conversation's own earlier turns.

\noindent\textbf{Memory and state tracking.} Conversational memory systems \citep{zhang2024survey} track environment state \citep{park2023generative,tang2024worldcoder}, belief state \citep{kim2025reflact}, or maintain token-efficient fact stores \citep{chhikara2025mem0}. All record \emph{what was said} rather than the dependency map connecting what was said to what was concluded. VISTA \citep{zhang2025vista} parses a conversation post-hoc into a reasoning dependency tree for offline visualisation, while the verifier maintains a comparable structure incrementally and queries it under a latency budget.

\noindent\textbf{Formal methods.} White-box verification of the network \citep{huang2017safety,huang2020survey} meets a complexity wall at transformers \citep{bergstrasser2026succinct}. The verifier instead checks its outputs against an explicit conversation state at linear cost per turn, the runtime-verification \citep{leucker2009brief} counterpart of model checking \citep{huang2011bounded,lomuscio2017mcmas}. The state's update rules draw on DEL and awareness logic \citep{fagin1995reasoning,vanditmarsch2007del,fagin1988awareness}, dynamic logics of argumentation and abduction \citep{doutre2014del,velazquez2013abductive}, and IBIS argumentation \citep{kunz1970ibis}. The verifier puts this combination to a new use, a runtime grounding check. Dependency-directed retraction itself dates to truth-maintenance systems \citep{doyle1979tms}, whose role $\mathit{Affected}^{*}$ plays over commitments extracted from an open conversation.

\noindent\textbf{Closest systems.} D-SMART \citep{lei2025dsmart} incrementally builds an OWL knowledge graph and reasons over it by tree search, tracking entity-relation triples, while the verifier tracks how conclusions are reached: hypothesis lifecycles, decisions with dissent, claim-to-evidence chains. Zep \citep{rasmussen2025zep} adds temporal validity but not assumption dependencies. AgentArmor \citep{wang2025agentarmor} builds a program dependence graph over an agent's tool-call trace to enforce security policies, a complementary abstraction to the claim-dependency map. \Cref{app:comparison} compares these systems feature by feature.

\section{Discussion and Conclusion}
\label{sec:discussion}

We presented a runtime verifier that checks, at each turn, whether the next LLM output still rests on what the conversation has established. \Cref{lem:standing,prop:dep-sound,prop:complexity} establish conflict-free retraction and per-turn tractability on the symbolic side. Empirically, the verifier leads budget-matched retrieval on two third-party supersession benchmarks across five QA models, the gain survives end-to-end extraction from raw text, and its per-query cost is flat in conversation length.

\noindent\textbf{Threat model.} The verifier supplies the structural check that context-manipulation defences \citep{patlan2025fakememories,dong2025memoryinjection} presuppose: a continuation whose premises are not in good standing in $\mathcal{D}_t$ is rejected whatever its surface plausibility. It assumes an intact transcript; integrity of the Interpreter's inputs, through signed traces or compositional bounds, is a complementary layer.

\noindent\textbf{Limitations and future work.} The verifier's guarantees are structural by design (\Cref{app:automation}): they concern standing and dependency, and leave the truth of individual claims to the Interpreter and the QA model. Two open problems of LLM-based state extraction bound what the structure can capture, and both are shared with conversational memory systems at large: extraction quality tracks Interpreter capability (\Cref{tab:cross-model}), and updates that restate a value without retracting the old one are harder to detect than explicit revisions. Extraction that targets such implicit supersession is our next step, together with a human study of how the rendered state aids real conversations.

\newpage
%------------------------------------------------------------------
% Bibliography
%------------------------------------------------------------------
\bibliographystyle{iclr2027_conference}
\bibliography{references}

@inproceedings{laban2025lost,
  author    = {Philippe Laban and Hiroaki Hayashi and Yingbo Zhou and
               Jennifer Neville},
  title     = {{LLMs} Get Lost in Multi-Turn Conversation},
  booktitle = {International Conference on Learning Representations (ICLR)},
  year      = {2026},
}

@article{wang2025agentarmor,
  title     = {{AgentArmor}: Enforcing Program Analysis on Agent Runtime Trace to Defend Against Prompt Injection},
  author    = {Wang, Peiran and Liu, Yang and Lu, Yunfei and Cai, Yifeng and Chen, Hongbo and Yang, Qingyou and Zhang, Jie and Hong, Jue and Wu, Ye},
  journal   = {arXiv preprint arXiv:2508.01249},
  year      = {2025}
}

@incollection{baltag2008qualitative,
  author    = {Alexandru Baltag and Sonja Smets},
  title     = {A Qualitative Theory of Dynamic Interactive Belief Revision},
  booktitle = {Logic and the Foundations of Game and Decision Theory
               ({LOFT}~7)},
  editor    = {Giacomo Bonanno and Wiebe van der Hoek and
               Michael Wooldridge},
  series    = {Texts in Logic and Games},
  volume    = {3},
  pages     = {11--58},
  publisher = {Amsterdam University Press},
  year      = {2008},
}

@inproceedings{baltag1998logic,
  author    = {Alexandru Baltag and Lawrence S. Moss and
               Slawomir Solecki},
  title     = {The Logic of Public Announcements, Common Knowledge,
               and Private Suspicions},
  booktitle = {Proceedings of the 7th Conference on Theoretical Aspects
               of Rationality and Knowledge ({TARK})},
  pages     = {43--56},
  publisher = {Morgan Kaufmann},
  year      = {1998},
}

@book{fagin1995reasoning,
  author    = {Ronald Fagin and Joseph Y. Halpern and Yoram Moses and
               Moshe Y. Vardi},
  title     = {Reasoning about Knowledge},
  publisher = {MIT Press},
  address   = {Cambridge, MA},
  year      = {1995},
}

@article{fagin1988awareness,
  author    = {Ronald Fagin and Joseph Y. Halpern},
  title     = {Belief, Awareness, and Limited Reasoning},
  journal   = {Artificial Intelligence},
  volume    = {34},
  number    = {1},
  pages     = {39--76},
  year      = {1987},
}

@book{vanditmarsch2007del,
  author    = {Hans van Ditmarsch and Wiebe van der Hoek and
               Barteld Kooi},
  title     = {Dynamic Epistemic Logic},
  series    = {Synthese Library},
  volume    = {337},
  publisher = {Springer},
  address   = {Dordrecht},
  year      = {2007},
}

@article{velazquez2013abductive,
  author    = {Fernando R. Vel{\'a}zquez-Quesada and
               Fernando Soler-Toscano and
               {\'A}ngel Nepomuceno-Fern{\'a}ndez},
  title     = {An Epistemic and Dynamic Approach to Abductive Reasoning:
               Abductive Problem and Abductive Solution},
  journal   = {Journal of Applied Logic},
  volume    = {11},
  number    = {4},
  pages     = {505--522},
  year      = {2013},
}

@inproceedings{aucher2013complexity,
  author    = {Guillaume Aucher and Fran\c{c}ois Schwarzentruber},
  title     = {On the Complexity of Dynamic Epistemic Logic},
  booktitle = {Proceedings of the 14th Conference on Theoretical Aspects
               of Rationality and Knowledge ({TARK})},
  year      = {2013},
}

@incollection{dunne2009complexity,
  author    = {Paul E. Dunne and Michael Wooldridge},
  title     = {Complexity of Abstract Argumentation},
  booktitle = {Argumentation in Artificial Intelligence},
  editor    = {Guillermo R. Simari and Iyad Rahwan},
  publisher = {Springer},
  pages     = {85--104},
  year      = {2009},
}

@article{dunne2007computational,
  author    = {Paul E. Dunne},
  title     = {Computational Properties of Argument Systems Satisfying
               Graph-theoretic Constraints},
  journal   = {Artificial Intelligence},
  volume    = {171},
  number    = {10--15},
  pages     = {701--729},
  year      = {2007},
}

@article{dung1995acceptability,
  author    = {Phan Minh Dung},
  title     = {On the Acceptability of Arguments and its Fundamental Role
               in Nonmonotonic Reasoning, Logic Programming and
               $n$-Person Games},
  journal   = {Artificial Intelligence},
  volume    = {77},
  number    = {2},
  pages     = {321--357},
  year      = {1995},
}

@inproceedings{doutre2014del,
  author    = {Sylvie Doutre and Andreas Herzig and
               Laurent Perrussel},
  title     = {A Dynamic Logic Framework for Abstract Argumentation},
  booktitle = {Proceedings of the Fourteenth International Conference on
               Principles of Knowledge Representation and Reasoning
               ({KR})},
  publisher = {AAAI Press},
  pages     = {62--71},
  year      = {2014},
}

@book{walton2008argumentation,
  author    = {Douglas Walton and Chris Reed and Fabrizio Macagno},
  title     = {Argumentation Schemes},
  publisher = {Cambridge University Press},
  year      = {2008},
}

@techreport{kunz1970ibis,
  author      = {Werner Kunz and Horst W. J. Rittel},
  title       = {Issues as Elements of Information Systems},
  institution = {Institute of Urban and Regional Development,
                 University of California, Berkeley},
  number      = {131},
  year        = {1970},
}

@article{lomuscio2017mcmas,
  author    = {Alessio Lomuscio and Hongyang Qu and Franco Raimondi},
  title     = {{MCMAS}: An Open-Source Model Checker for the Verification
               of Multi-Agent Systems},
  journal   = {International Journal on Software Tools for Technology
               Transfer},
  volume    = {19},
  number    = {1},
  pages     = {9--30},
  year      = {2017},
  note      = {Earlier tool papers at TACAS 2006 and CAV 2009; often
               cited with earlier dates},
}

@inproceedings{huang2011bounded,
  author    = {Xiaowei Huang and Cheng Luo and Ron van der Meyden},
  title     = {Improved Bounded Model Checking for a Fair Branching-Time
               Temporal Epistemic Logic},
  booktitle = {Model Checking and Artificial Intelligence (MoChArt 2010)},
  series    = {Lecture Notes in Computer Science},
  volume    = {6572},
  pages     = {95--111},
  publisher = {Springer},
  year      = {2011},
  note      = {Feasibility of bounded epistemic model checking},
}

@inproceedings{tang2024worldcoder,
  author    = {Hao Tang and Darren Key and Kevin Ellis},
  title     = {{WorldCoder}, a Model-Based {LLM} Agent: Building World
               Models by Writing Code and Interacting with the
               Environment},
  booktitle = {Advances in Neural Information Processing Systems
               ({NeurIPS})},
  year      = {2024},
  note      = {arXiv:2402.12275},
}

@inproceedings{park2023generative,
  author    = {Joon Sung Park and Joseph C. O'Brien and
               Carrie J. Cai and Meredith Ringel Morris and
               Percy Liang and Michael S. Bernstein},
  title     = {Generative Agents: Interactive Simulacra of Human
               Behavior},
  booktitle = {Proceedings of the 36th Annual {ACM} Symposium on
               User Interface Software and Technology ({UIST}~'23)},
  pages     = {1--22},
  publisher = {ACM},
  year      = {2023},
}

@article{zhang2024survey,
  author    = {Zeyu Zhang and Xiaohe Bo and Chen Ma and Rui Li and
               Xu Chen and Quanyu Dai and Jieming Zhu and
               Zhenhua Dong and Ji-Rong Wen},
  title     = {A Survey on the Memory Mechanism of Large Language
               Model Based Agents},
  journal   = {arXiv preprint arXiv:2404.13501},
  year      = {2024},
}

@article{lei2025dsmart,
  author    = {Xiang Lei and Qin Li and Min Zhang and Min Zhang},
  title     = {{D-SMART}: Enhancing {LLM} Dialogue Consistency via
               Dynamic Structured Memory and Reasoning Tree},
  journal   = {arXiv preprint arXiv:2510.13363},
  year      = {2025},
}

@article{rasmussen2025zep,
  author    = {Preston Rasmussen and Pavlo Paliychuk and
               Travis Beauvais and Jack Ryan and Daniel Chalef},
  title     = {Zep: A Temporal Knowledge Graph Architecture for
               Agent Memory},
  journal   = {arXiv preprint arXiv:2501.13956},
  year      = {2025},
}

@inproceedings{obiso2025friction,
  author    = {Timothy Obiso and Kenneth Lai and Abhijnan Nath and
               Nikhil Krishnaswamy and James Pustejovsky},
  title     = {Dynamic Epistemic Friction in Dialogue},
  booktitle = {Proceedings of the 29th Conference on Computational
               Natural Language Learning ({CoNLL})},
  pages     = {323--333},
  publisher = {Association for Computational Linguistics},
  address   = {Vienna, Austria},
  year      = {2025},
}

@misc{modgil2024dialogues,
  author       = {Elfia Bezou-Vrakatseli and Oana Cocarascu and Sanjay Modgil},
  title        = {Towards Dialogues for Joint Human--{AI} Reasoning and
                  Value Alignment},
  year         = {2024},
  eprint       = {2405.18073},
  archivePrefix = {arXiv},
  primaryClass = {cs.AI},
  note         = {Argumentation protocols for human--LLM inquiry},
}

@inproceedings{kim2025reflact,
  author    = {Jeonghye Kim and Sojeong Rhee and Minbeom Kim and Dohyung Kim and Sangmook Lee and Youngchul Sung and Kyomin Jung},
  title     = {{ReflAct}: World-Grounded Decision Making in {LLM} Agents via Goal-State Reflection},
  booktitle = {Proceedings of the 2025 Conference on Empirical Methods in Natural Language Processing (EMNLP)},
  pages     = {33433--33465},
  year      = {2025},
  publisher = {Association for Computational Linguistics},
  address   = {Suzhou, China},
  eprint    = {2505.15182},
  archivePrefix = {arXiv},
  primaryClass = {cs.AI},
  note      = {Tracks aspects of epistemic state in LLM agents},
}

@inproceedings{helwe2025reviseqa,
  title={{ReviseQA}: A Benchmark for Belief Revision in Multi-Turn Logical Reasoning},
  author={Helwe, Chadi and AlRashed, Sultan and Orabona, Francesco},
  booktitle={ICML 2025 Workshop on Assessing World Models},
  year={2025}
}

@inproceedings{gao2023alce,
  title={Enabling Large Language Models to Generate Text with Citations},
  author={Gao, Tianyu and Yen, Howard and Yu, Jiatong and Chen, Danqi},
  booktitle={Proceedings of the 2023 Conference on Empirical Methods in Natural Language Processing (EMNLP)},
  pages={6465--6488},
  year={2023}
}

@article{bohnet2023autoais,
  title={Attributed Question Answering: Evaluation and Modeling for Attributed Large Language Models},
  author={Bohnet, Bernd and Tran, Vinh Q. and Verga, Pat and Aharoni, Roee and Andor, Daniel and Soares, Livio Baldini and Ciaramita, Massimiliano and Eisenstein, Jacob and Ganchev, Kuzman and Herzig, Jonathan and Hui, Kai and others},
  journal={arXiv preprint arXiv:2212.08037},
  year={2022}
}

@inproceedings{shaikh2025rifts,
  title={Navigating Rifts in Human-{LLM} Grounding: Study and Benchmark},
  author={Shaikh, Omar and Mozannar, Hussein and Bansal, Gagan and Fourney, Adam and Horvitz, Eric},
  booktitle={Proceedings of the 63rd Annual Meeting of the Association for Computational Linguistics (Volume 1: Long Papers)},
  pages={20832--20847},
  year={2025}
}

@article{katsis2025mtrag,
  title={{mtRAG}: A Multi-Turn Conversational Benchmark for Evaluating Retrieval-Augmented Generation Systems},
  author={Katsis, Yannis and Rosenthal, Sara and Fadnis, Kshitij and Gunasekara, Chulaka and Lee, Young-Suk and Popa, Lucian and Shah, Vraj and Zhu, Huaiyu and Contractor, Danish and Danilevsky, Marina},
  journal={Transactions of the Association for Computational Linguistics},
  volume={13},
  pages={784--808},
  year={2025}
}

@article{patlan2025fakememories,
  title={Real {AI} Agents with Fake Memories: Fatal Context Manipulation Attacks on {Web3} Agents},
  author={Patlan, Atharv Singh and Sheng, Peiyao and Hebbar, S. Ashwin and Mittal, Prateek and Viswanath, Pramod},
  journal={arXiv preprint arXiv:2503.16248},
  year={2025}
}

@inproceedings{dong2025memoryinjection,
  title={Memory Injection Attacks on {LLM} Agents via Query-Only Interaction},
  author={Dong, Shen and Xu, Shaochen and He, Pengfei and Li, Yige and Tang, Jiliang and Liu, Tianming and Liu, Hui and Xiang, Zhen},
  booktitle={Advances in Neural Information Processing Systems ({NeurIPS})},
  year={2025},
  note={arXiv:2503.03704}
}

@inproceedings{zhang2025vista,
  title     = {Beyond the Black Box: Demystifying Multi-Turn {LLM} Reasoning with {VISTA}},
  author    = {Zhang, Yiran and Lin, Mingyang and Dras, Mark and Naseem, Usman},
  booktitle = {Proceedings of the {AAAI} Conference on Artificial Intelligence},
  volume    = {40},
  number    = {48},
  pages     = {41745--41747},
  year      = {2026},
  note      = {Demonstration Track},
}

@inproceedings{chhikara2025mem0,
  title     = {{Mem0}: Building Production-Ready {AI} Agents with Scalable Long-Term Memory},
  author    = {Chhikara, Prateek and Khant, Dev and Aryan, Saket and Singh, Taranjeet and Yadav, Deshraj},
  booktitle = {Proceedings of the 28th European Conference on Artificial Intelligence ({ECAI})},
  pages     = {2993--3000},
  year      = {2025},
  doi       = {10.3233/FAIA251160}
}

@inproceedings{hu2025memoryagentbench,
  title={Evaluating Memory in {LLM} Agents via Incremental Multi-Turn Interactions},
  author={Hu, Yuanzhe and Wang, Yu and McAuley, Julian},
  booktitle={International Conference on Learning Representations ({ICLR})},
  year={2026}
}

@inproceedings{bergstrasser2026succinct,
  title={Transformers are Inherently Succinct},
  author={Bergstr{\"a}{\ss}er, Pascal and Cotterell, Ryan and Lin, Anthony W.},
  booktitle={International Conference on Learning Representations ({ICLR})},
  year={2026}
}

@inproceedings{huang2017safety,
  title={Safety Verification of Deep Neural Networks},
  author={Huang, Xiaowei and Kwiatkowska, Marta and Wang, Sen and Wu, Min},
  booktitle={International Conference on Computer Aided Verification ({CAV})},
  pages={3--29},
  year={2017}
}

@article{huang2020survey,
  title={A Survey of Safety and Trustworthiness of Deep Neural Networks: Verification, Testing, Adversarial Attack and Defence, and Interpretability},
  author={Huang, Xiaowei and Kroening, Daniel and Ruan, Wenjie and Sharp, James and Sun, Youcheng and Thamo, Emese and Wu, Min and Yi, Xinping},
  journal={Computer Science Review},
  volume={37},
  pages={100270},
  year={2020}
}

@article{arya2026recon,
  title={{RECON}: Benchmarking Agent Memory for Compositional Reasoning over Long Contexts},
  author={Arya, Mihir Shriniwas},
  journal={arXiv preprint arXiv:2607.16716},
  year={2026}
}

@article{pollertlam2026beyond,
  title={Beyond the context window: A cost-performance analysis of fact-based memory vs. long-context LLMs for persistent agents},
  author={Pollertlam, Natchanon and Kornsuwannawit, Witchayut},
  journal={arXiv preprint arXiv:2603.04814},
  year={2026}
}

@article{doyle1979tms,
  author  = {Doyle, Jon},
  title   = {A Truth Maintenance System},
  journal = {Artificial Intelligence},
  volume  = {12},
  number  = {3},
  pages   = {231--272},
  year    = {1979}
}

@article{leucker2009brief,
  author  = {Leucker, Martin and Schallhart, Christian},
  title   = {A Brief Account of Runtime Verification},
  journal = {Journal of Logic and Algebraic Programming},
  volume  = {78},
  number  = {5},
  pages   = {293--303},
  year    = {2009}
}

\clearpage
\appendix
% cleveref: make every \Cref to a section/subsection after \appendix read
% "Appendix X" rather than "Section X" (explicit aliases; the automatic
% \appendix hook is not reliable on newer TeX Live / hyperref).
\crefalias{section}{appendix}
\crefalias{subsection}{subappendix}
\crefalias{subsubsection}{subsubappendix}

%==================================================================
\section{Five-Dimension Comparison with Related Work}
\label{app:comparison}
%==================================================================

\Cref{tab:comparison} compares the verifier with related work across five dimensions, based on each system's published representation; empirical head-to-head evaluation on shared benchmarks remains future work. \emph{Hyp.\ lifecycle} is active/weakened/abandoned tracking and \emph{Dep.\ tracking} is inter-turn support edges; ALCE and MTRAG handle citation to external corpora and to prior turns respectively, but do not maintain a dependency map between claims. The friction and joint-reasoning rows are \citet{obiso2025friction} and \citet{modgil2024dialogues}.

\begin{table}[h]
\centering
\caption{Comparison with related work across five dimensions.}
\label{tab:comparison}
\small
\begin{tabular}{lccccc}
\toprule
& Incr. & Hyp. & Dep. & Formal & Delib. \\
& update & lifecycle & tracking & semantics & support \\
\midrule
ALCE / Attributed-QA & & & & & \\
MTRAG & \checkmark & & & & \\
D-SMART & \checkmark & & & & \\
Zep/Graphiti & \checkmark & & partial & & \\
Gen.\ Agents & \checkmark & partial & & & \\
DEL for friction & & \checkmark & & \checkmark & \\
Joint reasoning & & & & \checkmark & \checkmark \\
\textbf{Ours} & \checkmark & \checkmark & \checkmark & \checkmark & \checkmark \\
\bottomrule
\end{tabular}
\end{table}

%==================================================================
\section{Running Example: Phase 1: Muddy Children}

\label{ex:muddy}
\Cref{fig:muddy} traces the model through the four turns of the muddy children puzzle; the per-turn natural-language transcript follows.

\begin{figure}[h]
\centering
\begin{tikzpicture}[
  world/.style={circle, draw, inner sep=0.3pt, minimum size=4.7mm, font=\tiny},
  actual/.style={circle, draw, double, double distance=0.5pt, inner sep=0.3pt, minimum size=4.7mm, font=\tiny, fill=blue!10},
  ptitle/.style={font=\scriptsize\bfseries, anchor=south},
  every node/.style={font=\scriptsize}
]

\def\colsep{0.75}
\def\rowsep{0.45}
\def\paneltop{0}
\def\midrow{-0.85}

\def\xa{0}
\def\xb{4.0}
\def\xc{7.7}
\def\xd{11.4}

% Panel 0
\node[ptitle] at (\xa+0.5*\colsep, \paneltop+0.4) {$\Model_0$: $|W_0|=8$};
\foreach \pos/\lab in {{(\xa,\paneltop)}/CCC, {(\xa+\colsep,\paneltop)}/CCM, {(\xa,\paneltop-\rowsep)}/CMC, {(\xa+\colsep,\paneltop-\rowsep)}/CMM, {(\xa,\paneltop-2*\rowsep)}/MCC, {(\xa+\colsep,\paneltop-2*\rowsep)}/MCM, {(\xa+\colsep,\paneltop-3*\rowsep)}/MMM}
  \node[world] at \pos {\lab};
\node[actual] (mmc0) at (\xa,\paneltop-3*\rowsep) {MMC};
\node[below=0pt of mmc0, font=\tiny] {$w^*$};

% Panel 1
\node[ptitle] at (\xb+0.5*\colsep, \paneltop+0.4) {$\Model_1$: $|W_1|=7$};
\foreach \pos/\lab in {{(\xb+\colsep,\paneltop)}/CCM, {(\xb,\paneltop-\rowsep)}/CMC, {(\xb+\colsep,\paneltop-\rowsep)}/CMM, {(\xb,\paneltop-2*\rowsep)}/MCC, {(\xb+\colsep,\paneltop-2*\rowsep)}/MCM, {(\xb+\colsep,\paneltop-3*\rowsep)}/MMM}
  \node[world] at \pos {\lab};
\node[actual] at (\xb,\paneltop-3*\rowsep) {MMC};

% Panel 2
\node[ptitle] at (\xc+0.5*\colsep, \paneltop+0.4) {$\Model_2$: $|W_2|=4$};
\foreach \pos/\lab in {{(\xc+\colsep,\paneltop-\rowsep)}/CMM, {(\xc+\colsep,\paneltop-2*\rowsep)}/MCM, {(\xc+\colsep,\paneltop-3*\rowsep)}/MMM}
  \node[world] at \pos {\lab};
\node[actual] at (\xc,\paneltop-3*\rowsep) {MMC};

% Panel 3
\node[ptitle] at (\xd+0.5*\colsep, \paneltop+0.4) {$\Model_3$: $|W_3|=1$};
\node[actual] at (\xd,\paneltop-3*\rowsep) {MMC};

% Arrows
\draw[-{Stealth[length=2.5mm]}, thick] (\xa+\colsep+0.4,\midrow) -- (\xb-0.4,\midrow);
\node[font=\tiny, anchor=south] at ({(\xa+\colsep+\xb)/2}, \midrow+0.05) {$\announce{m_a\!\lor\!m_b\!\lor\!m_c}$};

\draw[-{Stealth[length=2.5mm]}, thick] (\xb+\colsep+0.4,\midrow) -- (\xc-0.4,\midrow);
\node[font=\tiny, align=center, anchor=south] at ({(\xb+\colsep+\xc)/2}, \midrow+0.05) {3 $\times$ \textsc{Observe}\\[-1pt]\itshape ``I don't know''};

\draw[-{Stealth[length=2.5mm]}, thick] (\xc+\colsep+0.4,\midrow) -- (\xd-0.4,\midrow);
\node[font=\tiny, align=center, anchor=south] at ({(\xc+\colsep+\xd)/2}, \midrow+0.05) {3 $\times$ \textsc{Resolve}\\[-1pt]\itshape ``I'm muddy/clean''};

\end{tikzpicture}
\caption{Layer~1 (DEL) trajectory through the muddy children puzzle: the world set $W_t$ after each round ($8$, $7$, $4$, $1$ worlds). The actual world $w^*{=}MMC$ is double-circled; eliminated worlds are dropped and live worlds keep their positions.}
\label{fig:muddy}
\end{figure}

\label{app:phase1}
%==================================================================

We present the muddy children puzzle as a conversation with the full epistemic model traced at each turn. Three children, Alice ($a$), Bob ($b$), Carol ($c$), have been playing outside. Alice and Bob are muddy; Carol is clean. Each child can see the others' foreheads but not their own.

\paragraph{Propositions.} $m_a$, $m_b$, $m_c$ (``child $x$ is muddy''). The actual world is $w^* = (m_a, m_b, \lnot m_c)$, abbreviated $\mathit{MMC}$.

\paragraph{Initial model $\Model_0$.} $W_0 = \{MMM, MMC, MCM, MCC, CMM, CMC, CCM, CCC\}$. Each child $i$ has $w \sim_i w'$ iff $w, w'$ agree on $m_j$ for all $j \neq i$. For Alice (sees Bob=M, Carol=C): $\sim_a$ groups $\{MMC, CMC\}$, $\{MMM, CMM\}$, $\{MCM, CCM\}$, $\{MCC, CCC\}$.

\medskip
\begin{conversation}
\turn{Turn 0: Father}{Children, I can see that at least one of you has mud on your forehead.}
\end{conversation}

\noindent\textbf{Operation:} \textsc{Observe} (public announcement $\psi_0 = m_a \lor m_b \lor m_c$).\\
\textbf{Update:} Eliminate $CCC$. \quad \textbf{Model $\Model_1$:} $|W_1|=7$.\\
\textbf{Common knowledge:} $\Cknow{abc}(m_a \lor m_b \lor m_c)$.\\
\textbf{Individual knowledge:} No child knows their own status.

\medskip
\begin{conversation}
\turn{Turn 1a: Alice}{I don't know.}
\turn{Turn 1b: Bob}{I don't know.}
\turn{Turn 1c: Carol}{I don't know.}
\end{conversation}

\noindent\textbf{Operations:} Three \textsc{Observe} (``I don't know'' is a public announcement).\\
\textbf{Reasoning:} If any child saw two clean faces, they would know they are muddy (since at least one must be). Nobody knew, so all single-muddy worlds are eliminated.\\
\textbf{Model $\Model_2$:} $W_2 = \{MMM, MMC, MCM, CMM\}$, $|W_2|=4$.\\
\textbf{Common knowledge:} At least two children are muddy.

\noindent\textbf{Individual knowledge after $\Model_2$:}
\begin{itemize}[leftmargin=*, nosep]
\item Alice sees Bob=M, Carol=C. Consistent worlds in $W_2$: $\{MMC\}$. \emph{Alice knows she is muddy.}
\item Bob sees Alice=M, Carol=C. Consistent worlds in $W_2$: $\{MMC\}$. \emph{Bob knows he is muddy.}
\item Carol sees Alice=M, Bob=M. Consistent worlds in $W_2$: $\{MMC, MMM\}$. Carol doesn't know yet, but can deduce from Alice's and Bob's round-2 answers.
\end{itemize}

\medskip
\begin{conversation}
\turn{Turn 2a: Alice}{Yes, I'm muddy.}
\turn{Turn 2b: Bob}{Yes, I'm muddy.}
\turn{Turn 2c: Carol}{Yes, I'm clean.}
\end{conversation}

\noindent\textbf{Operations:} Three \textsc{Resolve}.\\
\textbf{Model $\Model_3$:} $W_3 = \{MMC\}$. Complete knowledge. $\Cknow{abc}(m_a \land m_b \land \lnot m_c)$.

\paragraph{Verification.} At each step, the system's model must match $\Model_0 \to \Model_1 \to \Model_2 \to \Model_3$. The LLM classifier must label Turn~0 and Turns~1a--c as \textsc{Observe} and Turns~2a--c as \textsc{Resolve}.

%==================================================================
\section{Running Example: Phase 2: System Debugging}
\label{app:phase2}
%==================================================================

Three engineers debug a cascading failure. Alice (backend engineer; can see Auth Service and Payment Service application logs), Bob (infrastructure engineer; can see Payment Service metrics, Redis metrics, and external service dashboards), Carol (on-call SRE; can see the alerting dashboard and customer complaint tickets but needs Alice or Bob to dig into specifics).

\paragraph{Ground truth (known to us, not to the engineers).} Yesterday's deployment introduced a bug in Auth Service: the token refresh path computes expiry from Unix epoch instead of current time, causing refreshed tokens to be immediately expired. The frontend retries expired tokens, generating 3$\times$ normal traffic to Auth. Every auth request hits the shared Redis cluster for session lookup, so the retry storm exhausts Redis's connection pool. Payment Service uses the same Redis cluster for rate-limit counters; with Redis down, rate-limit checks fail and requests go to Stripe unthrottled, triggering Stripe's 429 rate-limit responses. Separately, a stale monitoring rule maps Redis connection errors to ``database health degradation'' alerts, creating a red herring.

\paragraph{What makes this Phase 2.} Unlike Phase~1, the hypothesis space is not given upfront, hypotheses are \emph{constructed} during the conversation through abductive reasoning. The conversation takes a wrong turn (initially attributing auth failures to Redis), evidence requires domain-specific interpretation (error code types), and a causal direction reversal is the key epistemic shift. The monitoring miscategorisation tests awareness expansion.

\paragraph{Formal model specification.} We define the symbolic model that the engine maintains alongside the conversation. Unlike Phase~1 where the world set is fixed at $2^k$ from the start, Phase~2 extends the model dynamically as hypotheses are generated and awareness expands. The abductive mechanism implements \Cref{def:abduction}: the symbolic engine detects surprising observations ($\Model, w \not\models \B{i}\chi$), the LLM generates candidate hypotheses $\gamma$, and the engine verifies and integrates them via plausibility upgrade.

\emph{Agents:} $\mathit{Ags} = \{a, b, c\}$ (Alice, Bob, Carol).

\emph{Propositions (initial):} The model begins with observable propositions only:
$\mathit{Prop}_0 = \{p_\mathit{auth}, p_\mathit{pay}, p_\mathit{db}, p_\mathit{redis}, p_\mathit{tok}, p_\mathit{traffic}\}$ (auth failures, payment failures, database alert, Redis issues, token errors, anomalous traffic). Each $p$ can be true/false/unobserved.

\emph{Awareness:} Initially, $\mathcal{A}_i$ contains only $\mathit{Prop}_0$ for all agents. Crucially, $\mathit{mis\_monitor}$ (monitoring miscategorises Redis as ``database'') is \emph{not} in any $\mathcal{A}_i$ until T4.

\emph{Hypothesis propositions} are added to $\mathit{Prop}$ as they are generated:
\begin{itemize}[leftmargin=*, nosep]
\item T5: $h_1 \coloneqq (\mathit{redis\_down} \to \mathit{rate\_bypass} \to \mathit{stripe\_429})$ added to $\mathit{Prop}$.
\item T6: $h_2 \coloneqq (\mathit{redis\_down} \to \mathit{auth\_fail})$ added.
\item T11: $h_3 \coloneqq (\mathit{tok\_bug} \to \mathit{retry} \to \mathit{traffic\_3x})$ added.
\item T12: $h_4 \coloneqq (\mathit{tok\_bug} \to \mathit{retry} \to \mathit{redis\_down})$ added.
\end{itemize}

\emph{Plausibility-model trajectory and awareness expansion at T4} are tracked in the per-turn walkthrough below; key shifts are: $h_1$-worlds upgraded after T5; $h_2$-worlds downgraded after T7 (\textsc{Undermine}) then eliminated after T9 (\textsc{Revise}); $\mathit{mis\_monitor}$ enters $\mathcal{A}_i$ at T4; the model converges after T13 to a single maximally-plausible class where $h_4$ holds and subsumes $h_1, h_3$.

\bigskip
\begin{conversation}
\turn{T1: Carol}{P1 incident. Three alerts firing: auth failure rate up, payment failure rate up, database health degradation. Customer complaints started around 2:15am.}
\end{conversation}

\noindent\textbf{Operations:} \textsc{Observe}~$\times 3$ (three distinct symptoms reported), \textsc{Question} (implicit: ``what is causing the failures?'').

\noindent\textbf{Why these classifications:} Carol is reporting factual observations from her dashboard, not proposing explanations. Each alert is a separate observation. The implicit question is what opens the deliberation.

\noindent\textbf{Model state:}
\begin{itemize}[leftmargin=*, nosep]
\item Observations: $o_1$: auth failure rate elevated; $o_2$: payment failure rate elevated; $o_3$: ``database health degradation'' alert; $o_4$: customer complaints from $\sim$2:15am.
\item Hypotheses: none yet.
\item Open questions: What is causing the three alert types? Are they related or independent?
\item Causal chain: unknown.
\end{itemize}

\noindent\textbf{Symbolic model $\Model_1$:}
$\mathit{Prop}_1 = \{p_\mathit{auth}, p_\mathit{pay}, p_\mathit{db}\}$.
Public announcements of $o_1, o_2, o_3, o_4$ eliminate worlds inconsistent with these observations.
$\mathcal{A}_i = \mathit{Prop}_1$ for all $i$. Note: $\mathit{mis\_monitor} \notin \mathcal{A}_i$ (no agent is yet aware that ``database'' alert may be mislabelled).
No hypotheses in model; no plausibility distinctions beyond observations. All agents share the same epistemic state: $\K{a}\K{b}\K{c}(o_1 \land o_2 \land o_3 \land o_4)$.

\medskip
\begin{conversation}
\turn{T2: Alice}{I'm seeing 401 Unauthorized spikes in auth logs starting around 2am. Tokens being rejected as expired. But I also see our auth request volume is way up, about 3$\times$ normal. That's strange. More users shouldn't be logging in at 2am.}
\end{conversation}

\noindent\textbf{Operations:} \textsc{Observe} (three new fac ts), \textsc{Question} (``why is traffic 3$\times$?'').

\noindent\textbf{Why:} Alice reports specific observations from her logs. The 401 error code, the ``token expired'' detail, and the 3$\times$ traffic volume are all factual reports. Her remark that ``more users shouldn't be logging in at 2am'' signals a surprising observation that doesn't yet have an explanation, this is an implicit abductive problem that will drive later hypothesis generation.

\noindent\textbf{Model state:}
\begin{itemize}[leftmargin=*, nosep]
\item New observations: $o_5$: 401 ``token expired'' errors from $\sim$2am; $o_6$: auth traffic 3$\times$ normal at 2am.
\item New open question: Why is auth traffic 3$\times$ at 2am? (Anomalous, doesn't match normal usage.)
\end{itemize}

\noindent\textbf{Symbolic model $\Model_2$:}
$\mathit{Prop}_2 = \mathit{Prop}_1 \cup \{p_\mathit{tok}, p_\mathit{traffic}\}$.
Public announcements of $o_5, o_6$. Worlds where auth errors are not ``token expired'' are eliminated; worlds where traffic is normal are eliminated.
$\Model_2, w^* \models \K{a}(o_5 \land o_6)$; after public announcement, $\Cknow{abc}(o_5 \land o_6)$.
\emph{Abductive problem detected (surprise check):} The observation $o_6$ (3$\times$ traffic at 2am) has been publicly announced, so $\Cknow{abc}o_6$ holds, all agents \emph{know} the fact. However, $o_6$ is not \emph{predicted} by any existing hypothesis or background belief: no formula in the current model entails that traffic should be elevated at 2am. Formally, $o_6$ is surprising in the sense of \citet{velazquez2013abductive}: the agents' background theory does not entail $o_6$ prior to observation. The engine flags $\chi \coloneqq o_6$ as an abductive problem: an observed fact that lacks an explanatory hypothesis. No candidate $\gamma$ is generated yet, the LLM has not proposed a hypothesis. The problem remains open until T11.

\medskip
\begin{conversation}
\turn{T3: Bob}{Payment side, I see Stripe returning 429s. We're sending them way more requests than normal. And I can confirm I see the database alert too, Redis connection timeouts from Payment Service.}
\end{conversation}

\noindent\textbf{Operations:} \textsc{Observe} (Stripe 429s, elevated request rate, Redis timeouts).

\noindent\textbf{Why:} Pure observation from Bob's infrastructure perspective. The Redis timeouts are particularly important because they will later become central to the causal reasoning. At this point, nobody has proposed a hypothesis yet, the group is still gathering data.

\noindent\textbf{Model state:}
\begin{itemize}[leftmargin=*, nosep]
\item New observations: $o_7$: Stripe 429 (rate-limit) errors; $o_8$: Redis connection timeouts from Payment side.
\item Note: We now have observations from all three engineers. The picture so far: auth tokens failing, payment requests failing, Redis having connection issues. The question is how these relate.
\end{itemize}

\noindent\textbf{Symbolic model $\Model_3$:}
$\mathit{Prop}_3 = \mathit{Prop}_2 \cup \{p_\mathit{stripe429}, p_\mathit{redis}\}$.
Public announcements of $o_7, o_8$. $\Cknow{abc}(o_7 \land o_8)$.
No hypotheses yet. The model contains 8 observations but no causal structure. The world set $W_3$ still contains all worlds consistent with $o_1$--$o_8$, including worlds where the symptoms are independent, worlds where Redis causes everything, worlds where auth causes everything, etc. No plausibility distinctions among these.

\medskip
\begin{conversation}
\turn{T4: Carol}{OK, so the database alert is about Redis, not the primary DB. Still concerning. Bob, are you seeing Redis issues from Payment's side?}
\end{conversation}

\noindent\textbf{Operations:} \textsc{Expand-Awareness} (the ``database'' alert is really a Redis alert), \textsc{Question}.

\noindent\textbf{Why \textsc{Expand-Awareness}:} This is the first moment where a proposition that was previously outside the group's reasoning enters the conversation. Everyone initially took the ``database health degradation'' alert at face value, they were reasoning about a database problem. Carol's reclassification introduces the proposition ``the monitoring rule conflates Redis with database,'' which was not previously in anyone's awareness set. In the formal framework, this adds new formulas to $\mathcal{A}_i$ for all agents, expanding the set of worlds they can distinguish.

\noindent\textbf{Model state:}
\begin{itemize}[leftmargin=*, nosep]
\item Observation $o_3$ reclassified: alert labelled ``database'' is actually about Redis connection errors.
\item Awareness expansion: the proposition $\mathit{mis\_monitor}$ has entered the group's awareness. Before this turn, the group was reasoning about a possible primary database problem; that possibility is now eliminated.
\end{itemize}

\noindent\textbf{Symbolic model $\Model_4$:}
\emph{Awareness expansion:} $\mathit{mis\_monitor} \coloneqq$ ``monitoring conflates Redis with database.'' Before T4: $\mathit{mis\_monitor} \notin \mathcal{A}_i$ for all $i$. After T4: $\mathit{mis\_monitor} \in \mathcal{A}_i$ for all $i$.
$\mathit{Prop}_4 = \mathit{Prop}_3 \cup \{\mathit{mis\_monitor}\}$.
\emph{World space restructuring:} Before expansion, agents could not distinguish $w_\mathit{db\_problem}$ (primary DB failing) from $w_\mathit{redis\_mislabel}$ (Redis failing, mislabelled as DB). After expansion, these are distinct worlds. $w_\mathit{db\_problem}$ is eliminated by the public announcement that $o_3$ is a Redis alert.
\emph{Effect:} $o_3$ is reinterpreted as evidence about Redis, not the primary database. $\Cknow{abc}(\mathit{mis\_monitor} \land \lnot p_\mathit{db\_problem})$.

\medskip
\begin{conversation}
\turn{T5: Bob}{Yes. Connection pool exhausted. We can't get connections to Redis. That's why our rate-limit checks are failing, we store rate-limit counters in Redis. When we can't check the counter, our code falls through and sends the request to Stripe anyway. That would explain the 429s, we're hitting Stripe without rate limiting.}
\end{conversation}

\noindent\textbf{Operations:} \textsc{Observe} (pool exhaustion details), \textsc{Hypothesize} ($h_1$).

\noindent\textbf{Why \textsc{Hypothesize}:} Bob doesn't just report facts, he constructs an \emph{explanatory chain}: Redis pool exhausted $\to$ rate-limit checks fail $\to$ requests go to Stripe unthrottled $\to$ 429s. This is an abductive step: Bob has a surprising observation ($o_7$: Stripe 429s) and proposes a hypothesis ($h_1$) that explains it. In the formal framework, this is an abductive solution integrated at the belief level via plausibility upgrade.

\noindent\textbf{Model state:}
\begin{itemize}[leftmargin=*, nosep]
\item $h_1$: Redis pool exhaustion $\to$ rate-limit check failure $\to$ unthrottled Stripe requests $\to$ 429s.\\
  Status: \textbf{active}. Plausibility: \textbf{high}. Supported by: $o_7$, $o_8$.
\item This is the first causal hypothesis. It explains the Stripe failures but does not yet explain the auth failures or the Redis exhaustion itself.
\end{itemize}

\noindent\textbf{Symbolic model $\Model_5$:}
\emph{Abductive cycle for $o_7$:}
\begin{enumerate}[leftmargin=*, nosep]
\item \emph{Surprise check (engine):} $o_7$ (Stripe 429s) is a new observation. The engine checks whether any current hypothesis predicts Stripe rate-limit errors: no hypothesis in $\Model_4$ entails $o_7$. The observation is surprising, it cannot be explained by the current model. Abductive problem $(\B{b}, o_7)$ flagged.
\item \emph{Hypothesis generation (LLM):} The LLM Interpreter classifies Bob's utterance as \textsc{Hypothesize} and extracts the candidate: $\gamma \coloneqq h_1 \coloneqq (p_\mathit{redis} \to \mathit{rate\_bypass} \to p_\mathit{stripe429})$.
\item \emph{Verification (engine):} (a)~Consistency: $h_1$ is consistent with $\Model_4$ (no contradictions). (b)~Explanatory: after plausibility upgrade with $h_1$, agent would believe $o_7$ (Redis being down plus $h_1$ entails Stripe 429s). (c)~Non-trivial: $h_1 \neq o_7$ and $h_1$ is not already believed. All checks pass.
\item \emph{Integration (engine):} $\mathit{Prop}_5 = \mathit{Prop}_4 \cup \{h_1\}$. Plausibility upgrade: $h_1$-worlds become more plausible for all agents (public communication). For all $i$, in $W_5$: $w \models h_1 \implies w \prec_i w'$ for $w' \models \lnot h_1$ (given consistency with $o_7, o_8$).
\end{enumerate}
\emph{Epistemic status:} $\B{i}h_1$ for all $i$ (believed, not known). $\lnot\K{i}h_1$ because alternative explanations for $o_7$ have not been ruled out.
\emph{Observations explained:} $h_1$ explains $o_7$ (Stripe 429s) and $o_8$ (Redis timeouts). Does \textbf{not} explain $o_5$ (token errors) or $o_6$ (3$\times$ traffic). The abductive problem for $o_6$ (flagged at T2) remains open.

\medskip
\begin{conversation}
\turn{T6: Carol}{So the chain might be: Redis is sick $\to$ Payment loses rate limiting $\to$ Stripe gets hammered. And separately, Redis being sick $\to$ Auth has problems too? Alice, does Auth use Redis?}
\end{conversation}

\noindent\textbf{Operations:} \textsc{Support} ($h_1$ restated), \textsc{Hypothesize} ($h_2$: Redis $\to$ Auth failures), \textsc{Question}.

\noindent\textbf{Why:} Carol synthesises the current understanding and extends it. By proposing $h_2$, she is constructing a unified picture: Redis is the single root cause, and both the payment and auth failures are downstream effects. This is a natural abductive move, explaining all symptoms with one cause is more parsimonious.

\noindent\textbf{Model state:}
\begin{itemize}[leftmargin=*, nosep]
\item $h_2$: Redis failure $\to$ Auth failures (via shared Redis cluster for session caching).\\
  Status: \textbf{active}. Plausibility: \textbf{medium}. Supported by: $o_8$. Basis: if Auth shares Redis, its failures could cascade.
\item Leading picture at this point: \emph{Redis is the single root cause}. Everything downstream.
\item This picture is \textbf{wrong}, and the next turns will overturn it.
\end{itemize}

\noindent\textbf{Symbolic model $\Model_6$:}
\emph{Second abductive step:} $h_2 \coloneqq (p_\mathit{redis} \to p_\mathit{auth})$. $\mathit{Prop}_6 = \mathit{Prop}_5 \cup \{h_2\}$.
\emph{Plausibility upgrade:} $h_2$-worlds upgraded to plausible (medium). For all $i$: worlds where $h_1 \land h_2$ both hold (single root cause = Redis) are now the most plausible: $w_{h_1 \land h_2} \prec_i w_{h_1 \land \lnot h_2} \prec_i w_{\lnot h_1}$.
\emph{Epistemic status:} $\B{i}h_1$ (high), $\B{i}h_2$ (medium). The ``single root cause'' picture is a belief, not knowledge.
\emph{Observations explained by $h_1 \land h_2$:} $o_1$ (auth failures), $o_2$ (payment failures), $o_5$ (token errors), $o_7$ (Stripe 429s), $o_8$ (Redis timeouts). Remaining unexplained: $o_6$ (3$\times$ traffic).

\medskip
\begin{conversation}
\turn{T7: Alice}{Yes, Auth uses the same Redis cluster for session caching. If Redis is down, token validation would fail because we can't look up the session. That would explain the 401s... but wait, the 401s I'm seeing are specifically `token expired,' not `session lookup failed.' Those are different error codes. The tokens are being rejected because their expiry timestamps are in the past, not because Redis is unavailable.}
\end{conversation}

\noindent\textbf{Operations:} \textsc{Observe} ($o_9$: error code is ``token expired,'' not ``session lookup failed''), \textsc{Undermine} ($h_2$).

\noindent\textbf{Why \textsc{Undermine} and not just \textsc{Observe}:} Alice initially supports $h_2$ (``If Redis is down, token validation would fail... that would explain the 401s''), but then notices a \emph{discrepancy}: the specific error code is inconsistent with the Redis-causes-auth hypothesis. If Redis were the cause, the error would be a session lookup failure (503), not a token expiry (401). This observation doesn't just add new data, it directly decreases the plausibility of $h_2$. In the formal framework, worlds where $h_2$ is true become less plausible because they predict a different error code than what was observed.

\noindent\textbf{This is the most important diagnostic moment in the conversation.} Alice's mid-utterance correction (``but wait...'') shows real-time epistemic revision: she starts by supporting the hypothesis and then undermines it within the same turn.

\noindent\textbf{Model state:}
\begin{itemize}[leftmargin=*, nosep]
\item $h_2$ status: \textbf{active} $\to$ \textbf{weakened}. The evidence ($o_9$) is inconsistent with the predicted error type.
\item New observation $o_9$: Auth errors are 401 ``token expired,'' not 503 ``service unavailable.''
\item The single-root-cause picture (Redis causes everything) is now under strain.
\end{itemize}

\noindent\textbf{Symbolic model $\Model_7$:}
\emph{Observation:} $o_9 \coloneqq (\mathit{error\_code} = \text{401\_token\_expired})$. Public announcement eliminates worlds where auth errors are 503.
\emph{Undermine $h_2$:} $h_2$ predicts $\mathit{error\_code} = \text{503\_session\_lookup}$. $o_9$ is inconsistent with this prediction. \emph{Plausibility downgrade:} for all $i$, $h_2$-worlds become less plausible: $w_{\lnot h_2} \prec_i w_{h_2}$ (reversing the T6 upgrade).
\emph{Epistemic status:} $\B{i}\lnot h_2$ (agents now believe $h_2$ is false), but $\lnot\K{i}\lnot h_2$ ($h_2$ is not yet eliminated, Alice hasn't explicitly ruled it out). The ordering is: $w_{h_1 \land \lnot h_2} \prec_i w_{h_1 \land h_2} \prec_i w_{\lnot h_1}$.
\emph{Key formal point:} \textsc{Undermine} changes plausibility without eliminating worlds. $h_2$-worlds remain epistemically possible but no longer believed: knowledge corresponds to world elimination, belief to plausibility reordering.

\medskip
\begin{conversation}
\turn{T8: Carol}{Hmm. So the auth failures might not be caused by Redis after all?}
\turn{T9: Alice}{I don't think so. The error type is wrong. If Redis were down, Auth would return a 503 Service Unavailable, not a 401 with `token expired.' I'm seeing 401s. So the token expiry issue is a different problem from the Redis problem.}
\end{conversation}

\noindent\textbf{Operations (T8):} \textsc{Question} (Carol seeks confirmation of the implication).\\
\textbf{Operations (T9):} \textsc{Revise} (Alice explicitly abandons $h_2$).

\noindent\textbf{Why \textsc{Revise}:} This is a genuine belief revision, not just weakening. Alice explicitly states that the token expiry issue is ``a different problem from the Redis problem.'' This separates what was previously thought to be a single-cause situation into (at least) two independent problems. In the formal framework, the plausibility ordering is restructured: $h_2$-worlds are moved from plausible to implausible.

\noindent\textbf{Model state:}
\begin{itemize}[leftmargin=*, nosep]
\item $h_2$: \textbf{weakened} $\to$ \textbf{abandoned}. Auth failures are not caused by Redis.
\item The group's picture has fundamentally changed: there are at least two independent problems, not one.
\item Open question (newly urgent): What \emph{is} causing the token expiry errors?
\end{itemize}

\noindent\textbf{Symbolic model $\Model_9$:}
\emph{Radical revision:} Alice's explicit abandonment of $h_2$ triggers a \emph{radical upgrade} of $\lnot h_2$. All $h_2$-worlds become maximally implausible (effectively eliminated from the believed set).
Formally: $\B{i}\lnot h_2 \to \K{i}\lnot h_2$ for all $i$. The group now \emph{knows} $\lnot h_2$, not merely believes it.
\emph{Updated plausibility:} $w_{h_1 \land \lnot h_2} \prec_i w_{\lnot h_1 \land \lnot h_2}$. Auth and Redis are independent.
\emph{Observations now unexplained:} With $h_2$ abandoned, $o_1$ (auth failures), $o_5$ (token errors), and $o_6$ (3$\times$ traffic) have no causal explanation. The model records these as open abductive problems.
\emph{Key formal distinction from T7:} At T7, $h_2$ was \emph{undermined} (plausibility downgraded but worlds retained). At T9, $h_2$ is \emph{revised} (worlds effectively eliminated). This is the difference between \textsc{Undermine} and \textsc{Revise} in the formal framework.

\medskip
\begin{conversation}
\turn{T10: Bob}{But then why is auth traffic 3$\times$ normal? If the token expiry issue is independent of Redis, what's generating all that auth traffic?}
\end{conversation}

\noindent\textbf{Operations:} \textsc{Question}.

\noindent\textbf{Why this matters:} Bob identifies that the unexplained 3$\times$ traffic ($o_6$) is now a critical clue. If auth failures aren't caused by Redis, then the elevated traffic isn't explained by the current model. This question drives the next abductive step. In the formal framework, this is a recognition of an abductive problem: there exists a surprising observation ($o_6$) that the current set of hypotheses cannot explain.

\noindent\textbf{Symbolic model $\Model_{10}$:}
No structural change to $W$ or plausibility orderings. The \textsc{Question} operation adds $o_6$ to the set of open abductive problems: $\mathit{AbdProb}_{10} = \{(\B{i}, o_6)\}$. The model explicitly records that $o_6$ is surprising and unexplained.

\medskip
\begin{conversation}
\turn{T11: Alice}{Could be the frontend retrying. When a user gets a `token expired' error, the frontend automatically tries to refresh the token. If the new token is also expired, it retries again. That's a retry loop. So the token bug generates its own amplified traffic.}
\end{conversation}

\noindent\textbf{Operations:} \textsc{Hypothesize} ($h_3$).

\noindent\textbf{Why:} Alice proposes a new hypothesis to explain the anomalous traffic observation. This is a classic abductive step: the surprising fact ($o_6$: 3$\times$ traffic) triggers the generation of an explanatory hypothesis ($h_3$: a feedback loop where the bug amplifies its own traffic). Note that $h_3$ involves domain knowledge about the frontend's retry behaviour, something the group becomes aware of through Alice's contribution.

\noindent\textbf{Model state:}
\begin{itemize}[leftmargin=*, nosep]
\item $h_3$: Token bug $\to$ frontend retry loop $\to$ 3$\times$ traffic amplification.\\
  Status: \textbf{active}. Plausibility: \textbf{medium}. Explains $o_6$.
\item At this point, the group has two separate explanations: $h_1$ (Redis $\to$ Stripe problems) and $h_3$ (token bug $\to$ traffic amplification). They haven't yet connected these.
\end{itemize}

\noindent\textbf{Symbolic model $\Model_{11}$:}
\emph{Abductive cycle for $o_6$ (open since T2):}
\begin{enumerate}[leftmargin=*, nosep]
\item \emph{Surprise check:} $o_6$ (3$\times$ traffic) was flagged at T2 as an observation not predicted by any hypothesis. After $h_2$'s abandonment at T9, $o_6$ still has no explanatory hypothesis in the current model. Abductive problem $(\B{i}, o_6)$ remains open.
\item \emph{Hypothesis generation (LLM):} Alice's utterance classified as \textsc{Hypothesize}. Candidate extracted: $\gamma \coloneqq h_3 \coloneqq (\mathit{tok\_bug} \to \mathit{retry\_loop} \to p_\mathit{traffic})$.
\item \emph{Verification (engine):} (a)~Consistency: $h_3$ is consistent with $\Model_9$. (b)~Explanatory: if $\mathit{tok\_bug}$ is true, the retry loop mechanism would produce elevated traffic, so after plausibility upgrade with $h_3$, $\B{i}(o_6\text{ explained})$ holds. (c)~Non-trivial: $h_3 \neq o_6$. All checks pass.
\item \emph{Integration:} $\mathit{Prop}_{11} = \mathit{Prop}_9 \cup \{h_3, \mathit{tok\_bug}, \mathit{retry\_loop}\}$. Plausibility upgrade: $h_3$-worlds upgraded. For all $i$: $w_{h_3} \prec_i w_{\lnot h_3}$.
\end{enumerate}
\emph{Awareness expansion (implicit):} The propositions $\mathit{tok\_bug}$ (a deployment bug causes token expiry) and $\mathit{retry\_loop}$ (frontend retries create amplification) enter $\mathcal{A}_i$ via Alice's domain knowledge. These were not in any agent's awareness before this turn.
\emph{Epistemic status:} $\B{i}h_3$ (medium plausibility, plausible but unconfirmed). The abductive problem for $o_6$ is now \textbf{closed} (explained by $h_3$).
\emph{Model topology:} Two independent causal chains coexist: $h_1$ (Redis $\to$ Stripe) and $h_3$ (token bug $\to$ traffic). No connection between them yet. $o_6$ is now explained; $o_5$ (token errors) is partially explained.

\medskip
\begin{conversation}
\turn{T12: Carol}{Wait, so the retry storm from the token bug could be what's exhausting Redis? Not Redis causing the auth problem, but the auth problem causing the Redis overload?}
\end{conversation}

\noindent\textbf{Operations:} \textsc{Hypothesize} + \textsc{Revise}. \textbf{This is the critical turn: causal direction reversal.}

\noindent\textbf{Why this is the key epistemic shift:} Carol makes the decisive connection. She realises that $h_3$ (token bug causes retry storm) can be linked to $h_1$ (Redis exhaustion causes Stripe failures) by \emph{reversing the causal direction between Auth and Redis}. The group had previously assumed Redis $\to$ Auth (hypothesis $h_2$, now abandoned). Carol proposes Auth $\to$ Redis: the retry storm is what overwhelms Redis, not the other way around.

\noindent This is both a \textsc{Hypothesize} (introducing the new causal link: retries exhaust Redis) and a \textsc{Revise} (restructuring the causal understanding from two independent problems back to a single causal chain, but with a completely different structure from the original $h_2$).

\noindent\textbf{Model state:}
\begin{itemize}[leftmargin=*, nosep]
\item $h_4$: Token bug $\to$ retry storm $\to$ Redis connection pool exhaustion.\\
  Status: \textbf{active}. Plausibility: \textbf{medium}. Links $h_3$ to $h_1$.
\item \textbf{Key epistemic shift recorded}: Causal direction between Auth and Redis has reversed. Previously: Redis failure was thought to cause auth problems ($h_2$). Now: auth problems (the token bug and resulting retry storm) are proposed to cause Redis failure ($h_4$).
\item Emerging unified chain: token bug $\to$ retries $\to$ Redis exhaustion $\to$ rate-limit bypass $\to$ Stripe 429s.
\end{itemize}

\noindent\textbf{Symbolic model $\Model_{12}$:}
\emph{Hypothesis:} $h_4 \coloneqq (\mathit{tok\_bug} \to \mathit{retry\_loop} \to p_\mathit{redis})$. Note the causal direction: Auth $\to$ Redis, opposite of the abandoned $h_2$ (Redis $\to$ Auth).
$\mathit{Prop}_{12} = \mathit{Prop}_{11} \cup \{h_4\}$.
\emph{Plausibility upgrade:} $h_4$-worlds upgraded to plausible. The combined hypothesis $h_1 \land h_3 \land h_4$ now forms a unified causal chain and is more plausible than the independent-problems picture.
For all $i$: $w_{h_1 \land h_3 \land h_4} \prec_i w_{h_1 \land h_3 \land \lnot h_4} \prec_i w_{\lnot h_1}$.
\emph{Epistemic status:} $\B{i}h_4$ (believed, not known, awaiting mechanistic confirmation).
\emph{Structural revision:} The model's causal graph is restructured. The dependency edge between Auth and Redis is reversed. The model records this as a key epistemic shift with provenance: $h_2$ (Redis $\to$ Auth, T6, abandoned T9 due to $o_9$) $\to$ $h_4$ (Auth $\to$ Redis, T12, supported by $h_3$).
\emph{Unification:} If $h_4$ is accepted, then $h_1 \land h_3 \land h_4$ form a single causal chain explaining \emph{all} observations $o_1$--$o_9$.

\medskip
\begin{conversation}
\turn{T13: Alice}{That's... actually plausible. If auth traffic is 3$\times$, and every auth request hits Redis for session lookup, the Redis connection pool could be overwhelmed. So the chain would be: token bug $\to$ retry storm $\to$ Redis exhaustion. And then Redis exhaustion $\to$ Payment rate-limit bypass $\to$ Stripe 429s. The whole thing cascades from the token bug.}
\end{conversation}

\noindent\textbf{Operations:} \textsc{Support} (confirming $h_4$ with mechanistic reasoning), \textsc{Resolve} (the unified causal chain is accepted).

\noindent\textbf{Why \textsc{Resolve}:} Alice provides the mechanistic argument that makes $h_4$ convincing: 3$\times$ traffic, every request hitting Redis, pool exhaustion is the expected consequence. She then states the complete unified chain explicitly. At this point, the hypothesis is elevated from tentative belief to high-confidence accepted explanation.

\noindent\textbf{Symbolic model $\Model_{13}$ (final):}
\emph{Support + Resolve:} Alice's mechanistic argument provides sufficient evidence to elevate $h_4$ from belief to knowledge.
$\B{i}h_4 \to \K{i}h_4$ for all $i$. Similarly, the unified chain $h_\mathit{unified} = h_1 \land h_3 \land h_4$ is resolved: $\K{i}h_\mathit{unified}$ for all $i$.
\emph{Final plausibility:} The model converges to a single maximally plausible world class: $W_{13}^\mathit{max} = \{w : w \models h_\mathit{unified}\}$. All alternative worlds are maximally implausible.
\emph{Hypothesis lifecycle summary:}
\begin{itemize}[leftmargin=*, nosep]
\item $h_1$: $\emptyset \xrightarrow{\text{T5, Hyp}}$ active $\xrightarrow{\text{T13, Res}}$ resolved (correct mechanism, subsumed into unified chain).
\item $h_2$: $\emptyset \xrightarrow{\text{T6, Hyp}}$ active $\xrightarrow{\text{T7, Und}}$ weakened $\xrightarrow{\text{T9, Rev}}$ abandoned. Disproved by $o_9$.
\item $h_3$: $\emptyset \xrightarrow{\text{T11, Hyp}}$ active $\xrightarrow{\text{T13, Res}}$ resolved (subsumed into unified chain).
\item $h_4$: $\emptyset \xrightarrow{\text{T12, Hyp+Rev}}$ active $\xrightarrow{\text{T13, Sup+Res}}$ resolved. Key epistemic shift: causal reversal.
\end{itemize}
\emph{Awareness set (final):} $\mathcal{A}_i = \mathit{Prop}_0 \cup \{\mathit{mis\_monitor}, h_1, h_2, h_3, h_4, \mathit{tok\_bug}, \mathit{retry\_loop}\}$ for all $i$.
\emph{Knowledge (final):} $\Cknow{abc}(h_\mathit{unified} \land \lnot h_2 \land \mathit{mis\_monitor})$.

\paragraph{Final model state.} The unified causal chain (matching the ground truth of \Cref{app:phase2}'s opening): \emph{deployment token bug $\to$ expired tokens $\to$ frontend retries $\to$ 3$\times$ auth traffic $\to$ Redis pool exhaustion $\to$ Payment rate-limit bypass $\to$ unthrottled Stripe requests $\to$ 429s}, with the ``database health'' alert separately identified at T4 as a monitoring miscategorisation. Hypothesis lifecycles are listed above; the per-turn walkthrough records, at each step, the formal grounds (operation, plausibility shift, dependency edge) for the system's tracking, including the two key epistemic shifts at T7--T9 (error-code evidence kills $h_2$) and T12 (causal direction reversal).

%==================================================================
\section{Running Example: Phase 3: Architecture Deliberation}
\label{app:phase3}
%==================================================================

A team decides how to add real-time collaboration to their document editor. Alice (product lead; feature scope and timeline), Bob (senior backend engineer; architecture and long-term maintainability), Carol (frontend engineer; editor integration and user experience).

\paragraph{What makes this Phase 3.} Unlike Phases~1--2, there is no objectively correct answer. The conversation produces a \emph{decision} through deliberation, not a \emph{discovery} of pre-existing fact. The core epistemic primitive shifts from knowledge/belief to \emph{commitments} (public, retractable assertions about what the team will do). The conversation involves genuine trade-offs, persistent disagreement, and a decision made under authority that explicitly records dissent and conditions for re-evaluation.

\paragraph{Model notation.} We track the model using IBIS-style elements: \emph{Issues} ($I_n$, questions under deliberation), \emph{Positions} ($P_n$, proposed answers), \emph{Arguments} (pro/con, attributed to speakers), and \emph{Decisions} (resolved issues with provenance). We also track \emph{Assumptions} ($a_n$): premises that decisions rest on, retractable when new evidence undermines them.

\paragraph{Classification results (cross-model).} \Cref{tab:phase3-results} reports per-model F1 / Exact-match\% on Phase 3, under the Definitions condition only (matching the cross-scenario comparison in \Cref{tab:cross-model}). Phase 3 reaches comparable or higher accuracy than Phase 2 on all four models, despite the longer 19-turn structure and the epistemic-to-deliberative shift (\Cref{sec:system}).

\begin{table}[h]
\centering
\caption{Phase~3 (deliberation, 19 turns) classification accuracy. Definitions condition only; F1\,/\,Exact-match\% over 5 runs per cell.}
\label{tab:phase3-results}
\small
\begin{tabular}{lcccc}
\toprule
& \multicolumn{2}{c}{\textbf{Overall}} & \multicolumn{2}{c}{\textbf{Key shifts (4 turns)}} \\
\cmidrule(lr){2-3} \cmidrule(lr){4-5}
& F1 & Exact & F1 & Exact \\
\midrule
Claude Sonnet 4           & \textbf{0.85} & 63\% & \textbf{0.92} & \textbf{75\%} \\
GPT-4o                    & 0.80 & \textbf{65\%} & 0.83 & 50\% \\
Qwen2.5-7B-Instruct       & 0.75 & 61\% & \textbf{0.92} & \textbf{75\%} \\
Llama-3.1-8B-Instruct     & 0.66 & 33\% & 0.59 & 20\% \\
\bottomrule
\end{tabular}
\end{table}

\paragraph{Formal model specification.} The symbolic engine maintains a dependency map $(\Model, \mathit{AF}, \mathit{Cm}, \mathit{Dep})$ as defined in \Cref{def:eam}. Phase-3-specific instantiation:

\emph{Agents:} $\mathit{Ags} = \{a, b, c\}$ (Alice, Bob, Carol), with roles: $a$ = product lead (decision authority), $b$ = senior backend (architecture), $c$ = frontend (editor integration).

\emph{Arguments:} each $\alpha \in \mathit{Args}$ is a tuple $(\mathit{claim}, \mathit{speaker}, \mathit{turn}, \mathit{type})$ with $\mathit{type} \in \{\mathit{pro}, \mathit{con}\}$ and $\mathit{claim}$ targeting a position $P_n$. Commitments are retractable (unlike knowledge); when an assumption $p$ changes, every $\alpha$ with $p \in \mathit{Dep}(\alpha)$ is flagged for re-evaluation per \Cref{prop:dep-sound}.

\emph{Awareness:} $\mathcal{A}_i$ tracks the solution space visible to each agent. Initially, only ``implement Operational Transformation (OT)'' and ``implement Conflict-free Replicated Data Type (CRDT)'' are in $\mathcal{A}_i$; library-based approaches enter at T5 via awareness expansion.

\bigskip
\begin{conversation}
\turn{T1: Alice}{We need real-time collaboration. Users should see each other's edits live, like Google Docs. Ship in six weeks. How do we build it?}
\end{conversation}

\noindent\textbf{Operations:} \textsc{Question} (opens root issue), \textsc{Observe} (constraints stated).

\noindent\textbf{Why:} Alice defines the problem and constraints. In IBIS terms, this opens the root issue. The six-week timeline and latency requirement are constraints that will shape the evaluation of all positions.

\noindent\textbf{Model:}
\begin{itemize}[leftmargin=*, nosep]
\item Issue $I_1$: ``How to implement real-time collaboration?'' Status: \textbf{open}.
\item Constraints: 6-week deadline; users must see changes in real time.
\end{itemize}

\noindent\textbf{Symbolic model $\mathit{AF}_1$:}
$\mathit{Args}_1 = \emptyset$. $\mathit{Att}_1 = \emptyset$. No arguments yet.
$\mathit{Cm}(a) = \mathit{Cm}(b) = \mathit{Cm}(c) = \emptyset$.
Issue $I_1$ opened. Constraints $\mathit{timeline} = 6\text{wk}$, $\mathit{latency} = \text{real-time}$ added to $\mathit{Prop}$.
$\mathit{Dep}$: no dependencies yet (no arguments exist).

\medskip
\begin{conversation}
\turn{T2: Bob}{Two established approaches: Operational Transformation, that's what Google Docs uses, and CRDTs, which is what Figma and newer tools use. OT needs a central server for coordination. CRDTs are peer-to-peer capable but more complex to implement.}
\turn{T3: Alice}{What about the six-week timeline? Can we ship either one?}
\turn{T4: Bob}{Implementing either from scratch in six weeks is risky. OT's transformation functions are subtle and buggy. CRDTs have complex data structures.}
\turn{T5: Carol}{What about using a library? Yjs is a mature CRDT library. ShareDB for OT.}
\end{conversation}

\noindent\textbf{Operations (T2):} \textsc{Observe} (domain knowledge: two approaches exist). Two implicit positions opened.\\
\textbf{Operations (T3):} \textsc{Question} (feasibility sub-issue under timeline constraint).\\
\textbf{Operations (T4):} \textsc{Undermine} ($P_1$ and $P_2$: both risky within timeline).\\
\textbf{Operations (T5):} \textsc{Expand-Awareness} + \textsc{Hypothesize} (library approach was not previously considered).

\noindent\textbf{Why T5 is \textsc{Expand-Awareness}:} Before Carol's suggestion, the discussion was framed as ``which algorithm should we implement?'' Carol introduces a new dimension, using existing libraries, that was outside the group's consideration. This is analogous to awareness expansion in the formal framework: a new set of propositions (about specific libraries, their maturity, their integration properties) enters the group's reasoning.

\noindent\textbf{Model:} \emph{First key reframing}: the issue shifts from ``OT vs CRDT algorithm'' to ``Yjs vs ShareDB library.''
\begin{itemize}[leftmargin=*, nosep]
\item $P_1$ (implement OT from scratch), $P_2$ (implement CRDT from scratch): \textbf{abandoned}. Reason: both too risky for 6-week timeline. This reason is recorded, so these positions would only be revisited if the timeline constraint were relaxed.
\item $P_3$ (use Yjs, CRDT library): \textbf{open}.
\item $P_4$ (use ShareDB, OT library): \textbf{open}.
\item Epistemic shift: reframing from algorithm choice to library choice.
\end{itemize}

\noindent\textbf{Symbolic model $\mathit{AF}_5$:}
$\mathit{Args}_5 = \{\alpha_1, \alpha_2, \alpha_3, \alpha_4\}$ where:
$\alpha_1 = (P_1\text{ risky}, b, T4, \mathit{con})$, $\alpha_2 = (P_2\text{ risky}, b, T4, \mathit{con})$,
$\alpha_3 = (P_3\text{ proposed}, c, T5, \mathit{pro})$, $\alpha_4 = (P_4\text{ proposed}, c, T5, \mathit{pro})$.
$\mathit{Att}_5 = \{(\alpha_1, P_1), (\alpha_2, P_2)\}$ (timeline arguments attack from-scratch positions).
$\mathit{Cm}(b) = \{\alpha_1, \alpha_2\}$; $\mathit{Cm}(c) = \{\alpha_3, \alpha_4\}$; $\mathit{Cm}(a) = \emptyset$.
$\mathit{Dep}(\alpha_1) = \mathit{Dep}(\alpha_2) = \{\mathit{timeline}\}$ (if timeline relaxed, these arguments lose force).
\emph{Awareness expansion:} $\{P_3, P_4, \mathit{Yjs}, \mathit{ShareDB}\}$ added to all $\mathcal{A}_i$ at T5. Before T5, only $\{P_1, P_2\}$ were in consideration.
\emph{Positions:} $P_1, P_2$: abandoned (attacked by $\alpha_1, \alpha_2$). $P_3, P_4$: open, no attacks yet.

\medskip
\begin{conversation}
\turn{T6: Carol}{I've prototyped with Yjs before on a side project. The yjs-prosemirror binding, which is our editor, is well documented. I don't know if ShareDB has the same ProseMirror integration.}
\turn{T7: Bob}{Both work with our Node.js backend. But with Yjs we could go serverless or use a central server. More architectural flexibility.}
\end{conversation}

\noindent\textbf{Operations (T6):} \textsc{Support} ($P_3$: ProseMirror integration exists and Carol has experience), \textsc{Undermine} ($P_4$: integration status unknown).\\
\textbf{Operations (T7):} \textsc{Support} ($P_3$: architectural flexibility).

\noindent\textbf{Why these matter:} Carol's prior experience with Yjs is both evidence (she's prototyped it) and a practical argument (less ramp-up time). Her uncertainty about ShareDB's integration is an asymmetry, $P_3$ has a known integration story while $P_4$ has an unknown one. Bob's point about flexibility adds a different dimension of support.

\noindent\textbf{Model:}
\begin{itemize}[leftmargin=*, nosep]
\item $P_3$ arguments pro: ProseMirror binding exists (Carol, from experience); architectural flexibility (Bob).
\item $P_4$ arguments con: ProseMirror integration unclear (Carol).
\item $P_3$ is emerging as the stronger candidate, but concerns haven't been raised yet.
\end{itemize}

\noindent\textbf{Symbolic model $\mathit{AF}_7$:}
New arguments: $\alpha_5 = (P_3\text{ ProseMirror ok}, c, T6, \mathit{pro})$, $\alpha_6 = (P_4\text{ ProseMirror unclear}, c, T6, \mathit{con})$, $\alpha_7 = (P_3\text{ arch.\ flexibility}, b, T7, \mathit{pro})$.
$\mathit{Att}_7 = \mathit{Att}_5 \cup \{(\alpha_6, P_4)\}$.
$\mathit{Cm}(c) = \{\alpha_3, \alpha_4, \alpha_5, \alpha_6\}$; $\mathit{Cm}(b) = \{\alpha_1, \alpha_2, \alpha_7\}$.
\emph{Acceptability:} $P_3$ has 3 supporting arguments ($\alpha_3, \alpha_5, \alpha_7$) and 0 undefeated attacks. $P_4$ has 1 supporting argument ($\alpha_4$) and 1 undefeated attack ($\alpha_6$). $P_3$ is the preferred position in any preferred extension.

\medskip
\begin{conversation}
\turn{T8: Bob}{CRDTs have a known problem with document size. The CRDT metadata grows over time and can get large for long-lived documents. Yjs has some GC mechanisms but they're not trivial.}
\turn{T9: Alice}{Is that a problem for our initial launch? Our documents are typically 5--10 pages.}
\turn{T10: Bob}{Probably not for launch. It's a long-term concern. But I want to flag it because switching from CRDT to OT later would be a rewrite, not a refactor.}
\end{conversation}

\noindent\textbf{Operations (T8):} \textsc{Undermine} ($P_3$: document size risk).\\
\textbf{Operations (T9):} \textsc{Question} (is the risk relevant to our context?).\\
\textbf{Operations (T10):} \textsc{Support} (low severity for launch) + \textsc{Undermine} (irreversibility makes it a strategic risk).

\noindent\textbf{Why this sequence matters:} Bob raises a concern, Alice challenges its relevance to the immediate context, and Bob concedes on the short term but flags the long-term irreversibility. This creates a \emph{conditional risk}, something that's acceptable under current assumptions but becomes problematic if assumptions change. The model must capture not just ``Bob has a concern'' but the precise conditions under which the concern activates.

\noindent\textbf{Model:}
\begin{itemize}[leftmargin=*, nosep]
\item $P_3$ argument con: document size risk (Bob). Severity assessment: \textbf{low for current use case}, \textbf{high if long-lived documents needed}. Risk characteristic: \textbf{irreversible}, switching later is a rewrite.
\item Sub-issue $I_2$: ``Is document size a problem for us?'' Status: \textbf{provisionally resolved}, not for launch.
\end{itemize}

\noindent\textbf{Symbolic model $\mathit{AF}_{10}$:}
New arguments: $\alpha_8 = (P_3\text{ doc size risk}, b, T8, \mathit{con})$, $\alpha_9 = (P_3\text{ ok for short docs}, b, T10, \mathit{pro})$, $\alpha_{10} = (P_3\text{ irreversible if wrong}, b, T10, \mathit{con})$.
$\mathit{Att}_{10} = \mathit{Att}_7 \cup \{(\alpha_8, P_3), (\alpha_9, \alpha_8), (\alpha_{10}, P_3)\}$.
\emph{Key:} $\alpha_9$ attacks $\alpha_8$ (``not a problem for launch''), partially defeating it. But $\alpha_{10}$ is a new, independent attack on $P_3$ that is \textbf{not} defeated.
$\mathit{Dep}(\alpha_9) = \{a_1, a_2\}$ where $a_1 \coloneqq$ ``docs are short'' and $a_2 \coloneqq$ ``editing is burst.'' These assumptions explicitly condition the argument.
$\mathit{Dep}(\alpha_{10}) = \emptyset$ (the irreversibility argument is unconditional).
$\mathit{Cm}(b) = \mathit{Cm}(b) \cup \{\alpha_8, \alpha_9, \alpha_{10}\}$.
Sub-issue $I_2$ opened and provisionally resolved: $\alpha_9$ defeats $\alpha_8$ given current assumptions.

\medskip
\begin{conversation}
\turn{T11: Carol}{If we go with Yjs and WebRTC, we could support offline editing natively. User research showed spotty connectivity is a pain point.}
\turn{T12: Bob}{Hmm, but if edits are peer-to-peer, access control is hard. We need role-based permissions.}
\turn{T13: Carol}{Can we use Yjs but with a central server as the sync point? We'd get the CRDT benefits, conflict resolution, offline merge, but the server can enforce access control.}
\turn{T14: Bob}{Yes, that's actually the recommended production setup for Yjs. You run a Yjs WebSocket server as the sync point. And we already run WebSocket servers for notifications.}
\end{conversation}

\noindent\textbf{Operations (T11):} \textsc{Support} ($P_3$: offline editing, grounded in user research).\\
\textbf{Operations (T12):} \textsc{Undermine} ($P_3$ in pure P2P form: access control problem).\\
\textbf{Operations (T13):} \textsc{Hypothesize} (new hybrid position $P_5$: Yjs + central server relay).\\
\textbf{Operations (T14):} \textsc{Support} ($P_5$: standard production setup, fits existing infrastructure).

\noindent\textbf{Why T13 is a key move:} Carol resolves the tension between two competing concerns (CRDT benefits vs.\ access control) by \emph{proposing a hybrid} that keeps the advantages of both. In argumentation terms, she introduces a new position that is not a compromise but a synthesis. In the formal framework, this is both awareness expansion (the hybrid configuration wasn't previously considered) and hypothesis generation (proposing that it would work).

\noindent\textbf{Model:} \emph{Second key reframing}: from ``Yjs peer-to-peer'' to ``Yjs server-relayed.''
\begin{itemize}[leftmargin=*, nosep]
\item Position $P_5$: Yjs with central WebSocket server as sync/authority point. Status: \textbf{leading}.
\item $P_5$ pro: CRDT conflict resolution (inherent); offline merge (inherent); access control via server (T13); existing WebSocket infrastructure (T14, Bob); ProseMirror binding (T6, Carol).
\item $P_5$ con: inherits document size risk from $P_3$ (Bob, T8).
\item Pure P2P variant of $P_3$: effectively abandoned due to access control concern.
\item Epistemic shift: hybrid position resolves the tension between CRDT benefits and permission requirements.
\end{itemize}

\noindent\textbf{Symbolic model $\mathit{AF}_{14}$:}
New arguments: $\alpha_{11} = (P_3\text{ offline editing}, c, T11, \mathit{pro})$, $\alpha_{12} = (P_3\text{ P2P access control}, b, T12, \mathit{con})$, $\alpha_{13} = (P_5\text{ hybrid resolves tension}, c, T13, \mathit{pro})$, $\alpha_{14} = (P_5\text{ std.\ setup + infra}, b, T14, \mathit{pro})$.
\emph{Awareness expansion:} $P_5$ (Yjs server-relayed) added to all $\mathcal{A}_i$ at T13.
$\mathit{Att}_{14} = \mathit{Att}_{10} \cup \{(\alpha_{12}, P_3), (\alpha_{13}, \alpha_{12})\}$.
\emph{Key:} $\alpha_{13}$ attacks $\alpha_{12}$: the hybrid resolves the access control concern, so $\alpha_{12}$ no longer defeats $P_3/P_5$. But $\alpha_{12}$ still defeats pure P2P $P_3$.
$P_5$ inherits pro-arguments from $P_3$ ($\alpha_5, \alpha_7, \alpha_{11}$) and inherits con-arguments ($\alpha_8, \alpha_{10}$).
$P_5$ gains new pro-arguments ($\alpha_{13}, \alpha_{14}$).
$\mathit{Cm}(c) = \mathit{Cm}(c) \cup \{\alpha_{11}, \alpha_{13}\}$; $\mathit{Cm}(b) = \mathit{Cm}(b) \cup \{\alpha_{12}, \alpha_{14}\}$.
\emph{Acceptability status:} $P_5$ has 5 supporting arguments ($\alpha_5, \alpha_7, \alpha_{11}, \alpha_{13}, \alpha_{14}$) and one undefeated attack ($\alpha_{10}$: irreversibility). In Dung's semantics, $P_5$ is \emph{not} in the grounded extension because $\alpha_{10}$ is undefeated. However, the group treats this as an \emph{accepted risk}, the irreversibility concern is acknowledged but deemed tolerable given current assumptions. This is modelled by the dependency map: $\alpha_{10}$'s practical force is conditional on document growth ($\mathit{Dep}(\alpha_{10}) = \emptyset$ formally, but its \emph{relevance} depends on $a_1, a_2$).

\medskip
\begin{conversation}
\turn{T15: Bob}{I want to come back to the document size issue. If we go CRDT, every edit operation is stored permanently in the CRDT state. For a 10-page document edited for months, the CRDT metadata could be 10--50$\times$ larger than the content. Yjs has compaction but it's not trivial. And switching from CRDT to OT later would be a six-month rewrite.}
\turn{T16: Alice}{How confident are you that the problem will actually manifest? Our documents are short and have burst editing, a few days of activity, then they become read-only.}
\turn{T17: Bob}{For the current use case, probably 80\% chance it's fine. But the Q2 roadmap includes long-running project documents. Those would be edited continuously for months.}
\turn{T18: Alice}{Q2 isn't confirmed. I don't want to make an architectural decision now based on a feature that might not happen. Here's what I propose: we go with Yjs for launch. Bob, write up the risk with specific thresholds, when should we start worrying. If Q2 confirms long-running documents, we evaluate then.}
\turn{T19: Bob}{I'll write it up. But I want it on the record that I think this is short-sighted. If we'd gone with ShareDB, we wouldn't be carrying this risk at all.}
\end{conversation}

\noindent\textbf{Operations (T15):} \textsc{Undermine} ($P_5$: Bob escalates the document size concern with specific numbers and the irreversibility argument).\\
\textbf{Operations (T16):} \textsc{Question} (Alice challenges the probability of the risk manifesting).\\
\textbf{Operations (T17):} \textsc{Support} (80\% fine for current use) + \textsc{Undermine} (Q2 roadmap would change the risk profile).\\
\textbf{Operations (T18):} \textsc{Resolve} (Alice makes the decision by authority, with explicit conditions for revisiting).\\
\textbf{Operations (T19):} \textsc{Observe} (Bob records dissent as a public commitment).

\noindent\textbf{Why this is the most important Phase~3 moment:} This is a decision made \emph{despite} unresolved disagreement. Bob genuinely believes the team is making a mistake, and Alice acknowledges his concern but overrides it based on product priorities. The model must capture not just the decision but the \emph{structure of the disagreement}: who dissented, why, what would change their mind, and what conditions were explicitly agreed as triggers for re-evaluation.

\noindent In the formal framework, Alice's T18 is a commitment act: she commits the team to $P_5$, but the commitment is \emph{conditional}, it explicitly depends on assumptions $a_1$--$a_3$ (below). Bob's T19 is a public recording of dissent: he accepts the decision but does not retract his argument. This is different from both agreement (Bob doesn't endorse the decision) and from blocking (he doesn't prevent it).

\noindent\textbf{Symbolic model $\mathit{AF}_{19}$ (final):}
New arguments: $\alpha_{15} = (P_5\text{ 10--50$\times$ metadata}, b, T15, \mathit{con})$ (strengthens $\alpha_8$), $\alpha_{16} = (P_5\text{ ok given current docs}, a, T16, \mathit{pro})$, $\alpha_{17} = (P_5\text{ Q2 would change risk}, b, T17, \mathit{con})$, $\alpha_{18} = (\text{decide } P_5\text{ for launch}, a, T18, \mathit{resolve})$, $\alpha_{19} = (\text{dissent: prefers ShareDB}, b, T19, \mathit{con})$.

$\mathit{Att}_{19} = \mathit{Att}_{14} \cup \{(\alpha_{15}, P_5), (\alpha_{16}, \alpha_{15}), (\alpha_{17}, \alpha_{16})\}$.
\emph{Attack chain:} $\alpha_{17}$ attacks $\alpha_{16}$ which attacks $\alpha_{15}$ which attacks $P_5$. This creates a dialectical tree: Bob's Q2 concern reinstates the document size risk by undermining Alice's ``current docs are fine'' defence.

$\mathit{Dep}(\alpha_{16}) = \{a_1, a_2, a_3\}$ where:
$a_1 \coloneqq$ ``docs are short (5--10 pages),'' $a_2 \coloneqq$ ``editing is burst,'' $a_3 \coloneqq$ ``Q2 long-running docs not confirmed.''
$\mathit{Dep}(\alpha_{17}) = \{\lnot a_3\}$ (this argument activates if $a_3$ becomes false, i.e., Q2 is confirmed).
$\mathit{Dep}(\alpha_{18}) = \{a_1, a_2, a_3\}$ (the decision itself depends on these assumptions).

\emph{Commitments:}
$\mathit{Cm}(a) = \{\alpha_{16}, \alpha_{18}\}$ (Alice commits to the decision and its justification).
$\mathit{Cm}(b) = \{\alpha_1, \alpha_2, \alpha_7, \alpha_8, \alpha_9, \alpha_{10}, \alpha_{12}, \alpha_{14}, \alpha_{15}, \alpha_{17}, \alpha_{19}\}$ (Bob commits to both pro and con arguments; crucially, he does \textbf{not} commit to $\alpha_{18}$, he records dissent via $\alpha_{19}$).
$\mathit{Cm}(c) = \{\alpha_3, \alpha_4, \alpha_5, \alpha_6, \alpha_{11}, \alpha_{13}\}$.

\emph{Decision status:} $I_1$ resolved with dissent. $P_5$ accepted. Decision authority: $a$ (product lead).
\emph{Conditional commitment:} $\mathit{Dep}(\alpha_{18}) = \{a_1, a_2, a_3\}$. If any $a_i$ changes, the decision is flagged for re-evaluation per \Cref{prop:dep-sound}.
\emph{Dissent record:} $\alpha_{19} \in \mathit{Cm}(b)$ but $\alpha_{18} \notin \mathit{Cm}(b)$. Bob accepts the decision procedurally but does not endorse it epistemically.

\paragraph{Final model.}
\begin{itemize}[leftmargin=*]
\item Issue $I_1$: Status $\to$ \textbf{decided with dissent}.
\item Decision: $P_5$ (Yjs, server-relayed via WebSocket). Decided by: Alice (product lead authority).
\item Dissent: Bob. Position: prefers ShareDB for long-term architectural safety. Objection recorded: ``short-sighted.''
\item Assumptions underlying the decision:
  \begin{itemize}[leftmargin=*, nosep]
  \item $a_1$: Documents are short (5--10 pages). \emph{Currently true.}
  \item $a_2$: Editing pattern is burst (few days of activity, then read-only). \emph{Currently true.}
  \item $a_3$: Q2 long-running project documents not confirmed. \emph{Currently true.}
  \end{itemize}
\item Conditional commitment: If Q2 roadmap confirms long-running documents, the team will re-evaluate the architecture choice.
\item Dependencies: $\text{decision}(P_5) \to \{a_1, a_2, a_3\}$. Change in any assumption should trigger re-evaluation.
\item Action items: Bob writes risk analysis with specific thresholds; Carol prototypes frontend awareness features.
\item Positions explored and abandoned (with reasons preserved):
  \begin{itemize}[leftmargin=*, nosep]
  \item $P_1, P_2$ (from-scratch OT/CRDT): abandoned because of 6-week timeline.
  \item $P_4$ (ShareDB): not chosen; unclear ProseMirror integration, no offline support.
  \item $P_3$ (Yjs pure P2P): abandoned because access control is infeasible without server.
  \end{itemize}
\item Epistemic shifts: (1)~T5: reframing from algorithm to library choice; (2)~T13: hybrid position resolves CRDT-vs-access-control tension; (3)~T18--19: decision made with recorded dissent.
\end{itemize}

\paragraph{Counterfactual test.} One week later, the Q2 roadmap is confirmed with long-running project documents. The model should:
\begin{enumerate}[leftmargin=*]
\item Identify that assumption $a_3$ has changed (Q2 now confirmed).
\item Trace the dependency: $a_3$ is one of the assumptions underlying the $P_5$ decision.
\item Surface the conditional commitment: the team explicitly agreed to re-evaluate in this scenario.
\item Recall Bob's document size analysis as the relevant prior concern, including his specific numbers (10--50$\times$ metadata growth) and the irreversibility argument (six-month rewrite).
\item \emph{Not} re-explore from-scratch positions ($P_1, P_2$), those were abandoned because of the timeline, which hasn't changed.
\item \emph{Not} re-explore pure P2P Yjs ($P_3$), that was abandoned because of access control, which is unrelated to $a_3$.
\item Focus the re-evaluation on $P_5$ vs.\ $P_4$ (ShareDB), since the document size concern is specific to CRDTs and ShareDB was the alternative Bob advocated.
\end{enumerate}

\noindent This is the selective re-grounding capability that motivates the verifier: current LLMs lack a maintained dependency map, so they cannot reliably propagate an assumption change without re-exploring already-settled questions for unrelated reasons.

\noindent\textbf{Symbolic model under counterfactual ($a_3$ changes):}
Assumption $a_3$ changes: $\Model, w \models a_3 \to \Model', w \not\models a_3$ (Q2 confirmed).
$\mathit{Affected}(a_3) = \{\alpha \in \mathit{Args} : a_3 \in \mathit{Dep}(\alpha)\} = \{\alpha_{16}, \alpha_{18}\}$ (Alice's ``ok for now'' argument and the decision itself).
Per \Cref{prop:dep-sound}: $S' = S_t \setminus \{\alpha_{16}, \alpha_{18}\}$ is conflict-free. The decision $\alpha_{18}$ leaves the active set and is flagged for re-evaluation.
$\alpha_{17}$ (Bob's Q2 concern) is \emph{activated}: $\mathit{Dep}(\alpha_{17}) = \{\lnot a_3\}$, and $\lnot a_3$ is now true. $\alpha_{17}$ enters the active set, reinstating the document size attack on $P_5$.
$\alpha_{19}$ (Bob's dissent) provides the re-evaluation starting point: prefers $P_4$ (ShareDB).
Arguments $\alpha_1, \alpha_2$ ($P_1, P_2$ too risky) are \emph{not} affected: $\mathit{Dep}(\alpha_1) = \mathit{Dep}(\alpha_2) = \{\mathit{timeline}\}$, and the timeline hasn't changed.
$\alpha_{12}$ ($P_3$ P2P access control) is \emph{not} affected: $a_3 \notin \mathit{Dep}(\alpha_{12})$.

%==================================================================
\section{Full Proofs}
\label{app:proofs}

\subsection*{Formal definitions deferred from Section~\ref{sec:framework}}

\begin{definition}[Epistemic Plausibility Model]
\label{def:epm}
An \emph{epistemic plausibility model} is a tuple $\Model = (W, \{\leq_i\}_{i \in \mathrm{Ags}}, V)$ where:
\begin{itemize}[leftmargin=*,itemsep=0pt]
    \item $W$ is a non-empty set of possible worlds;
    \item each ${\leq_i} \subseteq W \times W$ is a reflexive, transitive, \emph{locally connected} preorder: for all $w, w'$ in the same indistinguishability cell, either $w \leq_i w'$ or $w' \leq_i w$;
    \item $V : \mathrm{Prop} \to \mathcal{P}(W)$ is a valuation.
\end{itemize}
\end{definition}

\noindent\textbf{Convention.} We follow the standard Baltag--Smets reading: $w \leq_i w'$ means ``$w$ is at least as plausible as $w'$ for agent $i$,'' so $\leq_i$-minimal worlds are most plausible. The indistinguishability relation is $\sim_i \;:=\; \leq_i \cup \geq_i$; different agents may have different information cells from the same $w$. Agent $i$ \emph{knows} $\varphi$ at $w$ iff $\varphi$ holds at every $w'$ with $w \sim_i w'$. Agent $i$ \emph{believes} $\varphi$ at $w$ iff $\varphi$ holds at every $\leq_i$-minimal world in $\{w' : w \sim_i w'\}$.

\noindent\textbf{Updates.} The three dynamic operators in full: \emph{Public announcement} (hard) $\announce{!\psi}$ eliminates every $w \notin \llbracket \psi \rrbracket$. \emph{Lexicographic upgrade} (radical, soft) $\announce{\Uparrow \psi}$ makes every $\psi$-world strictly more plausible than every $\lnot\psi$-world while preserving the within-side order; no worlds are eliminated. \emph{Conservative upgrade} (minimal, soft) $\announce{\uparrow \psi}$ promotes the single most plausible $\psi$-world to the overall minimum, preserving all other comparisons. Event models \citep{baltag1998logic} generalise these when different agents perceive the same event differently.

\begin{definition}[Abductive Problem and Solution]
\label{def:abduction}
Given a model $\Model$, actual world $w$, agent $i$, and observation $\chi$ with $\Model, w \models \chi$ but $\Model, w \not\models \B{i}\chi$ (the observation is true but not previously believed, \emph{surprising}), an \emph{abductive problem} is the triple $(\Model, i, \chi)$. A formula $\gamma$ is an \emph{abductive solution} when:
\begin{enumerate}[leftmargin=*,itemsep=0pt,label=(\arabic*)]
    \item \emph{Consistency}: $\Model \not\models \B{i} \lnot \gamma$;
    \item \emph{Explanatory}: $\Model\announce{\Uparrow \gamma}, w \models \B{i}\chi$;
    \item \emph{Non-triviality}: $\gamma \neq \chi$, $\Model \not\models \B{i}\gamma$, and $\gamma \neq \top$.
\end{enumerate}
The solution is integrated via $\announce{\Uparrow \gamma}$, so $\gamma$ enters as belief, not knowledge.
\end{definition}

\begin{definition}[Awareness Structure]
\label{def:awareness}
An \emph{awareness structure} extends an epistemic plausibility model with a function $\mathcal{A}_i : W \to \mathcal{P}(\text{Form})$ for each agent $i$, where $\mathcal{A}_i(w)$ is the set of formulas $i$ is aware of at $w$. The awareness modality satisfies $\Model, w \models \A{i}\varphi \iff \varphi \in \mathcal{A}_i(w)$. Explicit knowledge is $X_i\varphi \iff \K{i}\varphi \land \A{i}\varphi$. Awareness expansion adds formulas to $\mathcal{A}_i$ and refines $W$ accordingly: worlds that previously differed only on a now-newly-aware proposition become distinct.
\end{definition}

\subsection*{Proofs}
%==================================================================

\begin{proof}[Proof of \Cref{lem:standing} (Standing invariant)]
By induction on turns. Initially $\mathit{Att}_0 = \emptyset$. Inspecting \Cref{alg:apply}: \textsc{Observe}, \textsc{Hypothesize}, \textsc{Support}, \textsc{Expand-Awareness} and \textsc{Question} add no attack edge and label any new argument $\mathsf{active}$; \textsc{Undermine} adds $(\alpha_e, \alpha_\gamma)$ and sets $\mathit{st}(\alpha_\gamma) = \mathsf{weakened}$; \textsc{Revise} adds attacks on $\alpha_\gamma$ and sets $\mathit{st}(\alpha_\gamma) = \mathsf{abandoned}$; \textsc{Resolve} sets $\mathit{st}(\alpha_\gamma) = \mathsf{resolved}$ only under its precondition that no attack on $\alpha_\gamma$ is recorded (\Cref{tab:ops-full}); and no operation moves a weakened or abandoned argument back into $S_t$. Hence after every turn each attack targets an argument outside $S_t$, so no two arguments of $S_t$ attack each other.
\end{proof}

\begin{proof}[Proof of \Cref{prop:dep-sound} (Conflict-free selective retraction)]
Let $(\Model, \mathit{AF}, \mathit{Cm}, \mathit{Dep})$ be a dependency map (\Cref{def:eam}) with active set $S_t \subseteq \mathit{Args}$. Let $p \in \mathit{Prop}$ be retracted (or falsified). Define $\mathit{Affected}(p) = \{\alpha \in \mathit{Args} : p \in \mathit{Dep}(\alpha)\}$, $S' = S_t \setminus \mathit{Affected}(p)$, and $\mathit{AF}' = (\mathit{Args} \setminus \mathit{Affected}(p),\; \mathit{Att} \cap (\mathit{Args} \setminus \mathit{Affected}(p))^2)$.

\emph{Conflict-freeness in $\mathit{AF}'$}: $S_t$ is conflict-free in $\mathit{AF}$ by \Cref{lem:standing}, and $S' \subseteq S_t$. The attack relation of $\mathit{AF}'$ is $\mathit{Att}$ restricted to $\mathit{Args} \setminus \mathit{Affected}(p)$; restriction cannot create new attacks. If no pair in $S_t$ attacks each other in $\mathit{AF}$, no pair in the smaller set $S'$ can attack each other in $\mathit{AF}'$. Hence $S'$ is conflict-free in $\mathit{AF}'$.

\emph{Admissibility and existence of a post-retraction state}: \Cref{lem:standing} gives more than conflict-freeness. No argument of $S_t$ is the target of any attack in $\mathit{AF}$, so no argument of $S'$ is attacked in $\mathit{AF}'$, and $S'$ is admissible (it defends itself vacuously). Every admissible set is contained in some preferred extension \citep{dung1995acceptability}, so $\mathit{AF}'$ has a preferred extension containing $S'$. Any such extension is a valid post-retraction state: it preserves every conclusion in $S_t$ whose justification is independent of $p$ and removes exactly those that depend on $p$.

\emph{What the proposition does not claim (scope)}: it does not assert uniqueness of preferred extensions of $\mathit{AF}'$, nor does it rule out previously-attacked arguments being reinstated when their attackers lose standing; the engine reports such arguments rather than reinstating them.
\end{proof}

\begin{proof}[Proof of \Cref{cor:dep-closure} (Iterated retraction)]
Let $X_0 := \mathit{Affected}(p)$ and $X_{k+1} := X_k \cup \{\alpha \in \mathit{Args}_t : \mathit{Dep}_t(\alpha) \cap \mathit{claim}(X_k) \neq \varnothing\}$. The sequence is monotone and bounded by the finite set $\mathit{Args}_t$, so it stabilises after at most $|\mathit{Args}_t|$ rounds, and its union is $\mathit{Affected}^{*}(p)$ by \Cref{def:affected}. Round $0$ is \Cref{prop:dep-sound} for $p$. For $k \geq 0$, every $\alpha \in X_{k+1} \setminus X_k$ has some $q \in \mathit{claim}(X_k)$ in $\mathit{Dep}_t(\alpha)$; the arguments carrying those claims were removed in earlier rounds, so $q$ is retracted and \Cref{prop:dep-sound} applied to $q$ on the framework restricted to $\mathit{Args}_t \setminus X_k$ gives that $S_t \setminus X_{k+1}$ is conflict-free in $\mathit{AF}_t \!\restriction\! X_{k+1}$. Induction over the rounds yields the claim for $X = \mathit{Affected}^{*}(p)$. An argument with no dependency path to $p$ has no $q$ of this form in any round and is therefore never removed.
\end{proof}

\begin{proof}[Proof of \Cref{prop:complexity} (Tractability under structural restrictions)]
\emph{Phase~1}: Each public announcement $\announce{!\psi}$ eliminates worlds where $\lnot\psi$ holds, requiring one pass through $W$: $\bigO(|W|)$. With $k$ binary propositions, $|W| \leq 2^k$. Over $T$ turns: $\bigO(T \cdot 2^k)$.

\emph{Phase~2}: A lexicographic upgrade $\announce{\Uparrow\gamma}$ (definition in the \emph{Updates} paragraph above) is realised in our reference implementation as a bitmap partition on $\llbracket\gamma\rrbracket$ followed by a stable sort on $\leq_i$, running in $\bigO(|W| \log |W|)$ per upgrade; an alternative naive realisation reorders all world pairs and gives $\bigO(|W|^2)$ per upgrade. With at most $h$ hypotheses over $T$ turns the bitmap bound gives $\bigO(T \cdot h \cdot |W| \log |W|) = \bigO(T \cdot h \cdot k \cdot 2^k)$. Awareness expansion adds atomic propositions to $\mathit{Prop}$, potentially doubling $|W|$, but this occurs $\bigO(1)$ times per conversation segment.

\emph{Phase~3}: Standing is maintained by the operations (\Cref{lem:standing}) at $\bigO(1)$ per update, so no extension is recomputed at runtime; the $\mathsf{Verify}$ walk and the $\mathit{Affected}^{*}$ query are reachability over $\mathit{Dep}_t$, $\bigO(|\mathit{Args}| + |\mathit{Dep}|)$ in the dense worst case and $\bigO(|\mathit{Affected}^{*}| + \text{out-edges})$ over the reverse index. For reference, when the attack graph $\mathit{Att}$ is acyclic, the grounded, complete, preferred, and stable semantics all yield the same unique extension \citep{dung1995acceptability}, computable in $\bigO(|\mathit{Args}| + |\mathit{Att}|)$ by topological processing \citep{dunne2007computational}. Dependency propagation via $\mathit{Affected}(p)$ requires scanning each argument's dependency set: $\bigO(|\mathit{Args}| \cdot \bar d)$ where $\bar d$ is the mean $|\mathit{Dep}(\alpha)|$.

\emph{In our scenarios}, $k$ is 3 (muddy children), 12 (Phase~2 debugging: 10 observations $+$ 4 hypotheses, with overlap), and 9 (Phase~3 deliberation: 5 positions $+$ 4 assumptions); $|\mathit{Args}| \leq 20$.

\emph{General worst cases.} Without these restrictions, DEL model-checking with event models is \PSPACE-complete \citep{aucher2013complexity}, skeptical preferred acceptance in Dung-style AFs is $\PiP{2}$-complete \citep{dunne2009complexity}, and preferred-extension enumeration has no known polynomial-delay algorithm. Our scenarios avoid these by construction: small $k$, acyclic attack graph, and explicit world representation.
\end{proof}

\subsection*{Per-turn composition: \texttt{Apply}}
\label{app:algorithm}

The per-turn update $\mathcal{D}_t \mapsto \mathcal{D}_{t+1}$ composes a single classified operation across all four components of the dependency map (\Cref{def:eam}): the epistemic plausibility model $\Model$, the argumentation framework $\mathit{AF}$, the commitment record $\mathit{Cm}$, and the dependency map $\mathit{Dep}$. \Cref{alg:apply} states this composition. Per-operation DEL realisations and preconditions are in \Cref{tab:ops-full}; the algorithm shows how they update the four components jointly. Notational convention: $\alpha_\gamma$ denotes the existing argument in $\mathit{Args}_t$ with $\mathrm{claim}(\alpha_\gamma) = \gamma$, and $\alpha_e$ the one with $\mathrm{claim}(\alpha_e) = e$ (referenced from cases that act on a prior hypothesis or observation: \textsc{Support}, \textsc{Undermine}, \textsc{Resolve}).

\begin{algorithm}[h]
\caption{$\mathsf{Apply}(op,\, \mathit{args},\, \mathcal{D}_t,\, i) \to \mathcal{D}_{t+1}$. Per-turn composition rule: a single classified utterance by speaker~$i$ simultaneously updates the four components ($\Model$, $\mathit{AF}$, $\mathit{Cm}$, $\mathit{Dep}$) of the dependency map. Operation arguments $\mathit{args}$ are operation-specific (e.g., $\textsc{Hypothesize}$ takes the candidate $\gamma$ and its supporting observations $\mathit{deps}$). Preconditions (\Cref{tab:ops-full}) are checked before invocation; failure triggers a re-prompt to the LLM Interpreter.}
\label{alg:apply}
\begin{algorithmic}[1]
\Require $\mathcal{D}_t = (\Model_t, \mathit{AF}_t, \mathit{Cm}_t, \mathit{Dep}_t)$, speaker $i \in \mathit{Ags}$
\State $(\Model', \mathit{AF}', \mathit{Cm}', \mathit{Dep}') \gets (\Model_t, \mathit{AF}_t, \mathit{Cm}_t, \mathit{Dep}_t)$
\Switch{$op$}
  \Case{$\textsc{Observe}(\psi)$}
    \State $\Model' \gets \Model_t \announce{!\psi}$
    \State $\alpha \gets \mathsf{newArg}(\mathrm{claim}{=}\psi,\, \mathrm{src}{=}i)$;\ $\mathit{Args}' \gets \mathit{Args}_t \cup \{\alpha\}$
    \State $\mathit{st}'(\alpha) \gets \mathsf{active}$;\quad $\mathit{Cm}'(i) \gets \mathit{Cm}_t(i) \cup \{\alpha\}$;\quad $\mathit{Dep}'(\alpha) \gets \emptyset$
  \EndCase
  \Case{$\textsc{Hypothesize}(\gamma,\, \mathit{deps})$} \Comment{$\mathit{deps} \subseteq \mathit{Prop}$ supports $\gamma$ via abduction (\Cref{def:abduction})}
    \State $\Model' \gets \Model_t \announce{\Uparrow\gamma}$
    \State $\alpha \gets \mathsf{newArg}(\mathrm{claim}{=}\gamma,\, \mathrm{src}{=}i)$;\ $\mathit{Args}' \gets \mathit{Args}_t \cup \{\alpha\}$
    \State $\mathit{st}'(\alpha) \gets \mathsf{active}$;\quad $\mathit{Cm}'(i) \gets \mathit{Cm}_t(i) \cup \{\alpha\}$;\quad $\mathit{Dep}'(\alpha) \gets \mathit{deps}$
  \EndCase
  \Case{$\textsc{Support}(\gamma,\, e)$}
    \State $\Model' \gets \Model_t \announce{\uparrow\gamma}$ if $e$ is generic, else $\Model_t \announce{\Uparrow\gamma}$
    \State $\mathit{Dep}'(\alpha_\gamma) \gets \mathit{Dep}_t(\alpha_\gamma) \cup \{e\}$
  \EndCase
  \Case{$\textsc{Undermine}(\gamma,\, e)$}
    \State $\Model' \gets \Model_t \announce{\Uparrow\lnot\gamma}$ when $e$ distinguishes $\lnot\gamma$ from $\gamma$, otherwise the partial elimination of \Cref{tab:ops-full}
    \State $\mathit{Att}' \gets \mathit{Att}_t \cup \{(\alpha_e, \alpha_\gamma)\}$;\quad $\mathit{st}'(\alpha_\gamma) \gets \mathsf{weakened}$
  \EndCase
  \Case{$\textsc{Revise}(\gamma)$}
    \State $\Model' \gets \Model_t \announce{!\lnot\gamma}$ if $\gamma$ is explicitly negated;\quad $\mathit{Att}'$ updated per \Cref{tab:ops-full};\quad $\mathit{st}'(\alpha_\gamma) \gets \mathsf{abandoned}$
  \EndCase
  \Case{$\textsc{Expand-Awareness}(p)$}
    \State $\mathit{Prop}' \gets \mathit{Prop}_t \cup \{p\}$;\quad $\mathcal{A}'_i \gets \mathcal{A}_i \cup \{p\}$;\quad refine $W$ on $p$ (\Cref{def:awareness})
  \EndCase
  \Case{$\textsc{Resolve}(\gamma,\, \mathit{subsumes})$} \Comment{$\alpha_\gamma$ existing; $\mathit{subsumes} \subseteq \mathit{Args}_t$ optional}
    \State $\Model' \gets \Model_t \announce{!\gamma}$ if consensual, else commit $\alpha_{\mathit{decide}}$ with dissent recorded;\quad $\mathit{st}'(\alpha_\gamma) \gets \mathsf{resolved}$
    \For{$\beta \in \mathit{subsumes}$} \Comment{schema fix (\S\ref{sec:system}): record $\gamma$ in subsumed deps}
      \State $\mathit{Dep}'(\beta) \gets \mathit{Dep}_t(\beta) \cup \{\gamma\}$
    \EndFor
  \EndCase
  \Case{$\textsc{Question}(\chi)$}
    \State $\Model' \gets \Model_t$;\ enqueue $(\B{i}, \chi)$ on the abductive-problem queue (\Cref{def:abduction})
  \EndCase
\EndSwitch
\State \Return $\mathcal{D}_{t+1} \gets (\Model', \mathit{AF}', \mathit{Cm}', \mathit{Dep}')$
\end{algorithmic}
\end{algorithm}

\noindent\textbf{Composition invariant.} \Cref{alg:apply} embodies a typing discipline: a single classified utterance produces updates to all four components simultaneously, and those updates are mutually constrained by the speaker, the operation, and the DEL realisation. Concretely, when a new argument $\alpha$ enters $\mathit{Args}_t$ via \textsc{Observe} or \textsc{Hypothesize}, the same operation determines $\alpha$'s plausibility upgrade in $\Model$, the commitment $\mathit{Cm}(i) \cup \{\alpha\}$ for the speaker $i$ that introduced it, and the dependency record $\mathit{Dep}(\alpha)$ of supports the speaker invoked. The four components of $\mathcal{D}_{t+1}$ therefore cannot drift apart over a single turn. Updates to $\mathit{Dep}_t$ are additive (entries are never silently removed; arguments are never deleted, a retraction only changes standing), so $\mathit{Affected}(p) = \{\alpha \in \mathit{Args}_t : p \in \mathit{Dep}(\alpha)\}$ is well-defined as a set on the post-update structure, which is the precondition \Cref{prop:dep-sound}'s proof relies on.

\noindent\textbf{Schema fix on \textsc{Resolve}.} The optional $\mathit{subsumes}$ argument and its for-loop (\Cref{alg:apply}, \textsc{Resolve} case) specify the schema extension discussed in \Cref{sec:system}: when one hypothesis subsumes another at resolution time (e.g., $h_4$ unifying $h_1$ and $h_3$ at Phase~2 T13), the cross-hypothesis dependency edge is added at \textsc{Resolve} rather than at the subsumed hypothesis's creation. Without this branch, the edge is structurally unrecoverable through prompt engineering alone (E1b, \Cref{app:e1b}); empirical validation in \Cref{app:e1c} confirms $3/3$ recovery on Phase~2 with zero false-positive edges, plus a Phase-3 negative sanity ($0/3$ when no unification structure is present).

%\clearpage
\subsection*{Full epistemic-operation table}
\label{app:ops-full}

\Cref{tab:ops-full} gives the full form of the operations of \Cref{subsec:operations}: the precondition the engine checks before accepting a candidate classification, and the full-detail DEL realisation for the two operations (\textsc{Undermine}, \textsc{Resolve}) that abbreviate in the main-body table. A failed precondition triggers a re-prompt to the LLM Interpreter naming the failing condition. The Interpreter classifies into the eight operations; the \textsc{Resolve} branch (consensual or authoritative) is selected by the engine from $\mathit{Cm}_t$ and the attack graph, not by the classifier, which is why \Cref{tab:cross-model} scores eight classes.

\begin{table}[h]
\centering
\small
\begin{tabular}{@{}p{2.0cm}p{4.4cm}p{5.7cm}@{}}
\toprule
\textbf{Operation} & \textbf{DEL realisation (full)} & \textbf{Precondition (engine check)} \\
\midrule
\textsc{Observe} & $\announce{!\psi}$ & Sincere assertion; all atomic propositions in $\psi$ are in $\mathit{Prop}_t$. Later-contradicting evidence triggers \textsc{Revise} when the contradiction is explicit; an implicit restatement of the same attribute with a new value is recorded as a further \textsc{Observe}. \\[2pt]
\textsc{Hypothesize} & $\announce{\Uparrow\gamma}$ & Exists surprising $\chi$; $\gamma$ satisfies \Cref{def:abduction}~(1)--(3). \\[2pt]
\textsc{Support} & $\announce{\uparrow\gamma}$, or $\announce{\Uparrow\gamma}$ if the new evidence is $\gamma$-specific & $\gamma \in \mathit{Prop}_t$; new evidence compatible with $\gamma$; not redundant. \\[2pt]
\textsc{Undermine} & $\announce{\Uparrow\lnot\gamma}$ when evidence distinguishes $\lnot\gamma$ from $\gamma$; partial world elimination when a $\gamma$-prediction is falsified & $\gamma$ currently believed; new evidence reduces $\gamma$'s relative plausibility. \\[2pt]
\textsc{Revise} & $\announce{!\lnot\gamma}$, or addition of an attack edge in $\mathit{Att}$ & Speaker explicitly retracts or contradicts $\gamma$. \\[2pt]
\textsc{Expand-Awareness} & $\mathcal{A}_i \leftarrow \mathcal{A}_i \cup \{p\}$; refine $W$ on $p$ & Some atomic $p \in \psi$ not in $\mathit{Prop}_t$ for any agent. \\[2pt]
\textsc{Resolve (consensual)} & $\announce{!\gamma}$ (hard public announcement) & $\gamma \in \bigcap_i \mathit{Cm}(i)$ and no attack on $\alpha_\gamma$ is recorded in $\mathit{Att}_t$. \\[2pt]
\textsc{Resolve (authoritative)} & $\alpha_\mathit{decide} \in \mathit{Cm}(i_\mathit{authority})$; dissenting commitments recorded; no hard announcement of $\gamma$ & Speaker has decision authority; at least one agent's commitments differ from $\gamma$. \\[2pt]
\textsc{Question} & no DEL update; add $(\B{i}, \chi)$ to the abductive-problem queue & Utterance is interrogative or flags an unexplained observation. \\
\bottomrule
\end{tabular}
\caption{Full epistemic-operation table: DEL realisations and the preconditions the engine checks.}
\label{tab:ops-full}
\end{table}

%==================================================================
\section{The Automation Challenge (Full Analysis)}
\label{app:automation}
%==================================================================

We decompose the automation challenge into six levels, ordered from most structural (hardest) to most granular (most tractable).

\begin{table}[h]
\centering
\caption{Six levels of automation required for deployment without human intervention.}
\label{tab:automation}
\small
\begin{tabular}{clp{4.5cm}cc}
\toprule
\textbf{Level} & \textbf{Task} & \textbf{What the LLM must do} & \textbf{Diff.} & \textbf{Addr.} \\
\midrule
0 & Model paradigm & Select formal structure from conversation content & Hard & No \\
1 & Schema design & Determine fields, types, status categories & Hard & No \\
2 & Epistemic primitive & Classify as knowledge, belief, commitment, etc. & Med. & Partial \\
3 & World/hypothesis space & Extract hypotheses; detect awareness expansion & Med. & Partial \\
4 & Turn classification & Map utterances to epistemic operations & Easier & Yes \\
5 & Update \& consistency & Apply operations; detect drift and contradictions & Med. & Yes \\
\bottomrule
\end{tabular}
\end{table}

\textbf{Level~0} (model paradigm) is the hardest: choosing the wrong paradigm is a categorical error. Conversation type is often signalled in opening turns (``P1 incident'' $\to$ diagnostic; ``how should we design X?'' $\to$ deliberative), but real conversations shift paradigms mid-stream. \textbf{Level~1} (schema design) is mitigated by default schemas per paradigm. \textbf{Level~2} (epistemic primitive) requires distinguishing knowledge from belief from commitment; our taxonomy (eight operations) is coarser than standard DA taxonomies (42+ categories). \textbf{Level~3} (world/hypothesis space) requires extracting hypotheses and detecting awareness expansion. \textbf{Level~4} (turn classification) is our primary focus and the most tractable level. \textbf{Level~5} (update \& consistency) faces model drift, which the symbolic engine's constraint checking can partially mitigate.

\paragraph{Paths to automation.} \textbf{Path~A} (human-in-the-loop), a human designs Levels~0--1; the LLM handles Levels~2--5. \textbf{Path~B} (template-based), a library of templates with default schemas; the LLM classifies conversation type. \textbf{Path~C} (full emergence), the LLM constructs the entire model.

\paragraph{Error composition.} 90\% accuracy at each of 6 levels gives $0.9^6 \approx 53\%$ end-to-end. This motivates: (1)~reducing LLM-dependent levels (Paths~A/B: $0.9^4 \approx 66\%$); (2)~symbolic error correction; (3)~prioritising accuracy on key epistemic shifts. Path~B paired with symbolic error correction is the configuration we recommend.

%==================================================================
\section{Experimental Details}
\label{app:setup}

\noindent\textbf{ReviseQA.} We use all $930$ verified scenarios of the public release ($6{,}510$ edit steps, extending the $210$-example subset of the original paper) under the explicit-edit track, with no correction feedback or demonstration reasoning. Accuracy@$k$ has easy, medium and hard levels at $k=2$, $4$ and $7$; we report every depth. Transcript-RAG retrieves top-$k{=}22$ statements from the current post-edit context, matched to the rendered state's token budget. The end-to-end arm uses one set of cached GPT-4o translations replayed across all QA models. Per-depth results are in \Cref{app:reviseqa-models}.

\noindent\textbf{MemAB-FC.} The FactConsolidation split (the benchmark's Selective Forgetting competency) streams $455$ to $18{,}332$ numbered facts, $35$--$40\%$ of them later superseded by a same-key fact under a larger-serial-wins rule, and asks $100$ questions per variant, single-hop and multi-hop, at 6K, 32K, 64K and 262K tokens. LLM-only reads the raw stream tail-truncated to the QA model's window. Both retrieval arms use TF-IDF top-$k{=}50$ over the full stream. The native arm ingests the stream in serial order and maintains each fact's standing by same-key detection. Parsing, scoring and per-length results are in \Cref{app:cr}.

\noindent\textbf{Authored scenarios and RECON.} The three scenarios of \Cref{sec:phases} support per-utterance classification, dependency extraction, a 50-item direct verifier test, a noise diagnostic and retraction latency (\Cref{app:e1,app:e2,app:e5,app:e4}). RECON's released provenance skeletons \citep{arya2026recon} provide third-party dependency structure for the soundness check (\Cref{app:recon}).

\noindent\textbf{Interpreter models and prompt conditions.} The classification experiment runs Claude Sonnet 4, GPT-4o, Qwen2.5-7B-Instruct and Llama-3.1-8B-Instruct (the last two via vLLM) under three prompt conditions, \emph{Minimal}, \emph{Definitions} and \emph{State-augmented}, with $5$ runs per cell at temperature $0$ and $1.0$. \emph{Minimal} gives the eight operation names with a one-line gloss each. \emph{Definitions} adds a full definition and a generic example per operation, plus explicit rules for the neighbouring pairs (\textsc{Undermine} vs.\ \textsc{Revise}, \textsc{Expand-Awareness} vs.\ \textsc{Observe}, when \textsc{Hypothesize} and \textsc{Revise} co-occur). \emph{State-augmented} further appends the engine state rendered before six checkpoint turns: which hypotheses are active, weakened or abandoned, and when awareness has expanded. Phase~3 uses Definitions only. Scoring is multi-label F1 and exact match per utterance, with separate reporting on the four \emph{key shifts} per scenario, the turns at which standing or awareness changes (awareness expansion, undermining, hypothesis abandonment, causal reversal, decision with dissent).

\noindent\textbf{Model identifiers and serving.} Closed models are called by API alias; open-weight models are served with vLLM~0.19 behind an OpenAI-compatible endpoint on a single A100-40GB.
\begin{center}\small
\begin{tabular}{@{}ll@{}}
\toprule
Name in the paper & Identifier \\
\midrule
GPT-4o & \texttt{gpt-4o} \\
GPT-4o-mini & \texttt{gpt-4o-mini} \\
Claude Sonnet 4 & \texttt{claude-sonnet-4-20250514} \\
Qwen2.5-7B-Instruct & \texttt{Qwen/Qwen2.5-7B-Instruct} \\
Qwen2.5-14B-Instruct & \texttt{Qwen/Qwen2.5-14B-Instruct} \\
Gemma-3-12B-it & \texttt{unsloth/gemma-3-12b-it} \\
Llama-3.1-8B-Instruct & \texttt{meta-llama/Llama-3.1-8B-Instruct} \\
\bottomrule
\end{tabular}
\end{center}

\noindent\textbf{Statistics.} Paired comparisons use exact McNemar tests on discordant pairs: scenarios on ReviseQA and questions on MemAB-FC, where questions within one variant share a single stream and the four stream lengths are independent streams.

\noindent\textbf{Supersession capture.} The capture rates in \Cref{sec:analysis} come from the benchmark annotations and the end-to-end arm's update logs. On ReviseQA, $4{,}361$ of the $6{,}510$ edit steps ($67.0\%$) remove a premise inside the conclusion's maintained dependency set, and the GPT-4o Interpreter emits a removal on $4{,}355$ of them ($99.9\%$). On MemAB-FC, $35.4$--$39.5\%$ of the streamed facts are superseded under the larger-serial-wins rule ($161/455$ at 6K up to $7{,}237/18{,}332$ at 262K), and the Interpreter-maintained store reproduces the template store at Jaccard $1.000$ at 6K and $0.989$ at 32K, the $98.9\%$ capture figure of \Cref{sec:analysis}.

\section{ReviseQA: Full Per-Depth Results}
\label{app:reviseqa-models}

\Cref{tab:reviseqa-appendix} gives the numeric values behind
\Cref{fig:reviseqa-lcata}: accuracy@$k$ at the benchmark's easy/medium/hard cut-offs
($k=2$/$4$/$7$) for all five QA models and all four arms, with the exact
McNemar comparison against transcript-RAG at $k{=}7$. Across depths, the native arm is significant at every $k$ on Qwen2.5-7B, Qwen2.5-14B, GPT-4o-mini and GPT-4o (weakest-depth $p\le3.3\times10^{-4}$, $3.8\times10^{-5}$, $3.5\times10^{-2}$ and $7.8\times10^{-10}$) and through $k\le6$ on Gemma-3-12B ($p\le3.6\times10^{-2}$, with a $+2.9$pp margin and $p=0.11$ at $k{=}7$). The body table
(\Cref{tab:reviseqa-multimodel}) reports the $k{=}2$ and $k{=}7$ columns of
the native arm. The end-to-end arm replaces the benchmark's edit annotations
with a GPT-4o Interpreter that translates each natural-language edit into
engine updates; it is statistically indistinguishable from the native arm at
every depth on all five models (McNemar $p\ge0.07$).

\begin{table}[h]
\centering
\caption{ReviseQA: accuracy@2\,/\,@4\,/\,@7 over all $930$ scenarios, five QA
models $\times$ four arms. Last column: exact McNemar wins{:}losses and $p$,
verifier arm vs.\ transcript-RAG at $k{=}7$.}
\label{tab:reviseqa-appendix}
\small
\begin{tabular}{lcccc}
\toprule
Arm & accuracy@2 & accuracy@4 & accuracy@7 & verifier vs.\ RAG @7 \\
\midrule
\multicolumn{5}{l}{\emph{Qwen2.5-7B-Instruct}} \\
\quad LLM-only & 0.127 & 0.026 & 0.008 & --- \\
\quad transcript-RAG ($k{=}22$) & 0.356 & 0.226 & 0.149 & --- \\
\quad Verifier (native) & 0.544 & 0.349 & 0.202 & $115{:}66$, $p{=}$3e-4 \\
\quad Verifier (end-to-end) & 0.538 & 0.348 & 0.199 & $113{:}67$, $p{=}$8e-4 \\
\midrule
\multicolumn{5}{l}{\emph{Qwen2.5-14B-Instruct}} \\
\quad LLM-only & 0.222 & 0.052 & 0.010 & --- \\
\quad transcript-RAG ($k{=}22$) & 0.388 & 0.272 & 0.168 & --- \\
\quad Verifier (native) & 0.574 & 0.380 & 0.230 & $126{:}68$, $p{=}$4e-5 \\
\quad Verifier (end-to-end) & 0.559 & 0.380 & 0.237 & $126{:}62$, $p{=}$4e-6 \\
\midrule
\multicolumn{5}{l}{\emph{Gemma-3-12B-it}} \\
\quad LLM-only & 0.374 & 0.135 & 0.045 & --- \\
\quad transcript-RAG ($k{=}22$) & 0.512 & 0.377 & 0.283 & --- \\
\quad Verifier (native) & 0.612 & 0.442 & 0.312 & $143{:}116$, $p{=}$1e-1 \\
\quad Verifier (end-to-end) & 0.627 & 0.449 & 0.305 & $143{:}122$, $p{=}$2e-1 \\
\midrule
\multicolumn{5}{l}{\emph{GPT-4o-mini}} \\
\quad LLM-only & 0.277 & 0.077 & 0.015 & --- \\
\quad transcript-RAG ($k{=}22$) & 0.416 & 0.252 & 0.123 & --- \\
\quad Verifier (native) & 0.511 & 0.302 & 0.157 & $87{:}55$, $p{=}$9e-3 \\
\quad Verifier (end-to-end) & 0.533 & 0.309 & 0.142 & $77{:}59$, $p{=}$1e-1 \\
\midrule
\multicolumn{5}{l}{\emph{GPT-4o}} \\
\quad LLM-only & 0.476 & 0.299 & 0.161 & --- \\
\quad transcript-RAG ($k{=}22$) & 0.508 & 0.352 & 0.242 & --- \\
\quad Verifier (native) & 0.628 & 0.475 & 0.348 & $163{:}64$, $p{=}$4e-11 \\
\quad Verifier (end-to-end) & 0.633 & 0.481 & 0.329 & $146{:}65$, $p{=}$3e-8 \\
\bottomrule
\end{tabular}
\end{table}

\noindent The discordant counts behind the $p$ values of
\Cref{tab:reviseqa-multimodel} at $k{=}2$ (verifier native vs.\
transcript-RAG, wins{:}losses) are $248{:}73$ (Qwen2.5-7B), $238{:}65$
(Qwen2.5-14B), $185{:}92$ (Gemma-3-12B), $161{:}73$ (GPT-4o-mini) and
$177{:}65$ (GPT-4o).

\section{MemoryAgentBench-CR: Protocol and Per-Length Results}
\label{app:cr}

\noindent\textbf{Protocol.} The benchmark supplies serially numbered
natural-language facts and instructs agents that newer facts carry larger
serial numbers; deciding \emph{which} facts are about the same thing is left
to the system. We parse each fact with 37 hand-written relation templates
(plus a last-``is'' fallback, $1.5$--$2.3\%$ of facts) and treat two facts as
sharing a supersession key iff they share (template, subject). The native arm applies the engine's standing rule to the parsed stream: ingesting in serial order, a same-key fact marks its predecessor superseded, and the QA model retrieves over the resulting active set. MemAB-FC thus isolates the standing layer of the verifier at scale; the dependency walk and the retraction query are exercised on ReviseQA, Phase~2 and RECON. The end-to-end ingestion below replaces this parser with a GPT-4o
Interpreter that decides supersession without templates and reproduces the
same store, so the key definition is not carrying the result. Both retrieval arms use the same
deterministic TF-IDF retriever at $k{=}50$, fixed a priori and not tuned, so
the only difference between them is that the verifier retrieves over the
active set and transcript-RAG over the full stream. The LLM-only arm receives
the raw stream tail-truncated to the QA model's context budget (newest facts
kept, which favours the baseline); for GPT-4o it was run at the 6K length
only. Scoring is normalised containment against
the gold answer list. Questions within one variant share a single stream,
so per-variant McNemar is question-level; the four stream lengths are
independent streams. The single-hop lead holds at every length, and the multi-hop lead at every length but one tie, Qwen2.5-7B at 262K (\Cref{tab:cr-perlength}).

\noindent\textbf{End-to-end ingestion.} Replacing the template keys with a
GPT-4o Interpreter that decides supersession fact-by-fact was run on the 6K and
32K streams. Restricted to that subset the native arm scores
$0.945$\,/\,$0.140$, $0.975$\,/\,$0.250$, $0.910$\,/\,$0.100$ and
$0.985$\,/\,$0.375$ (single-hop\,/\,multi-hop, in the model order of
\Cref{tab:reviseqa-multimodel}) against $0.940$\,/\,$0.140$,
$0.975$\,/\,$0.260$, $0.895$\,/\,$0.100$ and $0.985$\,/\,$0.375$
end-to-end: the two arms agree item-for-item (exact McNemar $p\ge0.25$), so the
apparent end-to-end margin over the four-length native column of
\Cref{tab:reviseqa-multimodel} is a subset artefact rather than an effect. The
Interpreter-maintained store matches the template store at Jaccard
$0.99$--$1.00$.

\begin{table}[h]
\centering
\caption{MemAB-FC per-length accuracy ($n{=}100$ questions per row).
Final column: exact McNemar, verifier vs.\ transcript-RAG.}
\label{tab:cr-perlength}
\small
\resizebox{0.72\textwidth}{!}{
\begin{tabular}{lcccc}
\toprule
Variant & LLM-only & transcript-RAG & Verifier & vs.\ RAG \\
\midrule
\multicolumn{5}{l}{\emph{Qwen2.5-7B-Instruct}} \\
\quad SH-6k & 0.35 & 0.39 & 0.95 & $56{:}0$, $p{=}$3e-17 \\
\quad SH-32k & 0.51 & 0.50 & 0.94 & $45{:}1$, $p{=}$1e-12 \\
\quad SH-64k & 0.43 & 0.51 & 0.92 & $42{:}1$, $p{=}$1e-11 \\
\quad SH-262k & 0.24 & 0.45 & 0.91 & $50{:}4$, $p{=}$4e-11 \\
\quad MH-6k & 0.07 & 0.03 & 0.18 & $16{:}1$, $p{=}$3e-4 \\
\quad MH-32k & 0.05 & 0.01 & 0.10 & $9{:}0$, $p{=}$4e-3 \\
\quad MH-64k & 0.04 & 0.06 & 0.14 & $10{:}2$, $p{=}$4e-2 \\
\quad MH-262k & 0.06 & 0.06 & 0.06 & $2{:}2$, $p{=}$1e+00 \\
\midrule
\multicolumn{5}{l}{\emph{Qwen2.5-14B-Instruct}} \\
\quad SH-6k & 0.48 & 0.73 & 0.98 & $25{:}0$, $p{=}$6e-8 \\
\quad SH-32k & 0.61 & 0.76 & 0.97 & $22{:}1$, $p{=}$6e-6 \\
\quad SH-64k & 0.46 & 0.77 & 0.98 & $22{:}1$, $p{=}$6e-6 \\
\quad SH-262k & 0.22 & 0.78 & 0.94 & $17{:}1$, $p{=}$1e-4 \\
\quad MH-6k & 0.02 & 0.09 & 0.36 & $28{:}1$, $p{=}$1e-7 \\
\quad MH-32k & 0.07 & 0.06 & 0.14 & $11{:}3$, $p{=}$6e-2 \\
\quad MH-64k & 0.04 & 0.12 & 0.20 & $11{:}3$, $p{=}$6e-2 \\
\quad MH-262k & 0.06 & 0.06 & 0.10 & $6{:}2$, $p{=}$3e-1 \\
\midrule
\multicolumn{5}{l}{\emph{Gemma-3-12B-it}} \\
\quad SH-6k & 0.37 & 0.94 & 1.00 & $6{:}0$, $p{=}$3e-2 \\
\quad SH-32k & 0.59 & 0.90 & 0.98 & $8{:}0$, $p{=}$8e-3 \\
\quad SH-64k & 0.42 & 0.93 & 0.98 & $5{:}0$, $p{=}$6e-2 \\
\quad SH-262k & 0.21 & 0.90 & 0.97 & $9{:}2$, $p{=}$7e-2 \\
\quad MH-6k & 0.03 & 0.10 & 0.39 & $29{:}0$, $p{=}$4e-9 \\
\quad MH-32k & 0.07 & 0.07 & 0.21 & $15{:}1$, $p{=}$5e-4 \\
\quad MH-64k & 0.07 & 0.11 & 0.26 & $16{:}1$, $p{=}$3e-4 \\
\quad MH-262k & 0.02 & 0.05 & 0.15 & $11{:}1$, $p{=}$6e-3 \\
\midrule
\multicolumn{5}{l}{\emph{GPT-4o-mini}} \\
\quad SH-6k & 0.38 & 0.55 & 0.99 & $44{:}0$, $p{=}$1e-13 \\
\quad SH-32k & 0.42 & 0.79 & 0.98 & $19{:}0$, $p{=}$4e-6 \\
\quad SH-64k & 0.41 & 0.77 & 0.98 & $23{:}2$, $p{=}$2e-5 \\
\quad SH-262k & 0.37 & 0.72 & 0.97 & $26{:}1$, $p{=}$4e-7 \\
\quad MH-6k & 0.09 & 0.09 & 0.38 & $33{:}4$, $p{=}$1e-6 \\
\quad MH-32k & 0.02 & 0.03 & 0.20 & $18{:}1$, $p{=}$8e-5 \\
\quad MH-64k & 0.04 & 0.09 & 0.22 & $19{:}6$, $p{=}$1e-2 \\
\quad MH-262k & 0.06 & 0.05 & 0.11 & $8{:}2$, $p{=}$1e-1 \\
\midrule
\multicolumn{5}{l}{\emph{GPT-4o}} \\
\quad SH-6k & 0.40 & 0.96 & 0.99 & $4{:}1$, $p{=}$4e-1 \\
\quad SH-32k & --- & 0.93 & 0.98 & $5{:}0$, $p{=}$6e-2 \\
\quad SH-64k & --- & 0.97 & 0.98 & $1{:}0$, $p{=}$1e+00 \\
\quad SH-262k & --- & 0.94 & 0.95 & $3{:}2$, $p{=}$1e+00 \\
\quad MH-6k & 0.06 & 0.16 & 0.48 & $32{:}0$, $p{=}$5e-10 \\
\quad MH-32k & --- & 0.15 & 0.27 & $14{:}2$, $p{=}$4e-3 \\
\quad MH-64k & --- & 0.15 & 0.32 & $20{:}3$, $p{=}$5e-4 \\
\quad MH-262k & --- & 0.06 & 0.15 & $11{:}2$, $p{=}$2e-2 \\
\bottomrule
\end{tabular}
}
\end{table}

\noindent\textbf{End-to-end ingestion.} To rule out dependence on the
template keys, a GPT-4o Interpreter re-ingested the 6K and 32K streams:
for each incoming fact it saw the five most similar currently-active facts
and decided which one, if any, the new fact supersedes (no templates, no
serial-number rule). The resulting store matches the template-key store
exactly at 6K (Jaccard $1.000$; 455/455 decisions) and at Jaccard $0.989$
at 32K. Answering over the Interpreter-maintained store leaves the
verdicts unchanged (single-hop 6K: zero discordant questions vs.\ the
native arm; 32K: $1{:}2$), and the margin over transcript-RAG is
preserved ($56{:}0$, $p{=}2.8\times10^{-17}$ at 6K; $44{:}1$,
$p{=}2.6\times10^{-12}$ at 32K). The same holds for every QA model:
over the pooled 6K/32K questions ($n{=}400$ per model), end-to-end is
statistically indistinguishable from the native arm (Qwen2.5-14B
$3{:}1$, Gemma-3-12B $1{:}1$, GPT-4o $4{:}4$, GPT-4o-mini $4{:}4$
discordant; McNemar $p\ge0.62$) while the margin over transcript-RAG is preserved
($88{:}5$, $p{=}1.1\times10^{-20}$; $58{:}1$, $p{=}2.1\times10^{-16}$;
$54{:}2$, $p{=}4.4\times10^{-14}$, respectively).

\section{Dependency Extraction on Phase 2 (E1)}
\label{app:e1}

\Cref{tab:end-to-end} scores dependency extraction against the Phase~2
ground truth: the structural check itself, given the correct $\mathit{Dep}$ tuples, is
scored in \Cref{tab:recon} together with RECON; here the rows score LLM-produced tuples,
either by prompting the model directly or by running the full pipeline. The E1b and E1c ablations (\Cref{app:e1b,app:e1c}) were run under the original protocol, one-step $\mathit{Affected}$ on the three queries $o_8, o_9, o_6$, and their Affected columns are reported as run. The comparison that matters for the formal apparatus is pipeline versus direct prompting at the same model: $\mathrm{F1}\!=\!0.60$ against a best-of-four-prompts $0.34$. Precision is not bolded in \Cref{tab:end-to-end} because it is not comparable across these recall levels.

\begin{table}[t]
\centering
\caption{Dependency extraction on Phase~2: directly prompted GPT-4o (E1) against the pipeline at three Interpreter scales, scored against post-T13 ground truth. ---: not applicable.}
\label{tab:end-to-end}
\small
\resizebox{0.7\textwidth}{!}{
\begin{tabular}{lcccccc}
\toprule
& \textbf{LLM hyps} & \textbf{GT hyps} & \textbf{Dep} & \textbf{Dep} & \textbf{Dep} & \textbf{Affected} \\
Method & \textbf{extracted} & \textbf{matched} & \textbf{prec.} & \textbf{recall} & \textbf{F1} & \textbf{accuracy} \\
\midrule
\multicolumn{7}{l}{\emph{LLM-prompted (no pipeline; GPT-4o; post-T13 GT), E1}} \\
\quad zero-shot         & --- & 4/4 & 0.27 & 0.46 & 0.31 & --- \\
\quad few-shot          & --- & 4/4 & 0.26 & 0.54 & 0.34 & --- \\
\quad chain-of-thought  & --- & 4/4 & 0.23 & 0.46 & 0.29 & --- \\
\quad self-consistency-5 & --- & 4/4 & 0.22 & 0.46 & 0.28 & --- \\
\midrule
\multicolumn{7}{l}{\emph{Pipeline (verifier extraction; post-T13 GT)}} \\
\quad Qwen2.5-7B-Instruct    & 8 & 4/4 & 0.25 & 0.14 & 0.18 & 0/4 \\
\quad Llama-3.1-8B-Instruct  & 4 & 2/4 & 1.00 & 0.14 & 0.25 & 0/4 \\
\quad GPT-4o                 & 6 & 4/4 & 1.00 & 0.43 & \textbf{0.60} & 1/4 \\
\bottomrule
\end{tabular}
}
\end{table}

\section{\texttt{depends\_on} Prompt-Schema Ablation Details (E1b)}
\label{app:e1b}
%==================================================================

\Cref{tab:e1b-ablation} reports the full E1b prompt-schema ablation referenced in \Cref{sec:experiment}: four \texttt{depends\_on} prompt variants (baseline, chain-of-thought, examples, self-consistency-5) run through the verifier pipeline on Phase~2 with GPT-4o. Each deterministic variant is the median of $3$ seeds; self-consistency uses content-aligned voting over $5$ hot samples ($T\!=\!0.7$) and a single seed. Scoring is against \emph{creation-time} GT, the upper bound any creation-time prompt can reach: the additional dependency edge added when $h_4$ subsumes $h_1$ at T13 is set at the \textsc{Resolve} step, not at $h_4$'s creation, and cannot be recovered by prompt engineering at hypothesis-creation time. The body finding ($h_1\!\to\!h_4$ recovered $0/10$ across all 10 E1b runs, motivating the \textsc{Resolve}-case schema fix in \Cref{alg:apply}) is summarised in \Cref{sec:experiment}.

\begin{table}[h]
\centering
\caption{E1b: \texttt{depends\_on} prompt-schema ablation on Phase~2 (GPT-4o; creation-time GT; medians, $3$ seeds$^\dagger$). Bold: per-column best.}
\label{tab:e1b-ablation}
\small
\begin{tabular}{lcccccc}
\toprule
& \textbf{LLM hyps} & \textbf{GT hyps} & \textbf{Dep} & \textbf{Dep} & \textbf{Dep} & \textbf{Affected} \\
Variant & \textbf{extracted} & \textbf{matched} & \textbf{prec.} & \textbf{recall} & \textbf{F1} & \textbf{accuracy} \\
\midrule
baseline                          & 5--6 & 4/4 & \textbf{0.75} & 0.20 & 0.33 & 0/3 \\
cot                               & 5--6 & 4/4 & 0.50 & \textbf{0.40} & 0.44 & \textbf{1/3} \\
examples                          & 5--6 & 4/4 & 0.67 & \textbf{0.40} & \textbf{0.50} & \textbf{1/3} \\
self-consistency$^\ddagger$       & 5--6 & 4/4 & 0.67 & \textbf{0.40} & \textbf{0.50} & 0/3 \\
\bottomrule
\end{tabular}
\par\smallskip
{\footnotesize $^\dagger$ Across all 10 E1b runs the post-T13 link $h_1\!\to\!h_4$ was recovered $0/10$. The $h_4\!\to\!h_3$ link (set at $h_4$'s creation) was recovered $1/3$ for \emph{baseline}, $3/3$ for \emph{cot} and \emph{examples}, and $1/1$ for self-consistency. $^\ddagger$ Content-aligned voting over 5 hot samples ($T\!=\!0.7$) before majority vote on $(\mathit{gt\_hyp}, \mathit{gt\_dep})$ tuples.}
\end{table}

%==================================================================
\section{Resolve-Stage Retrospective Dependency-Update Probe (E1c)}
\label{app:e1c}
%==================================================================

E1b (\Cref{app:e1b}) establishes that prompt-engineering at hypothesis-creation time cannot recover the post-T13 unification edge $h_1\!\to\!h_4$ ($0/10$ across $4$ prompt-schema variants $\times$ seeds): the additional dependency edge added when $h_4$ subsumes $h_1$ at T13 is set at the \textsc{Resolve} step, not at $h_4$'s creation, and is therefore structurally outside the reach of any creation-time prompt. \Cref{sec:experiment} forward-points to the algebraic fix, extending \textsc{Resolve} to update existing $\mathit{Dep}$ tuples (\Cref{alg:apply}, \textsc{Resolve} case). E1c probes whether this fix actually recovers the edge in practice.

\paragraph{Protocol.} On each turn whose classification contains a \textsc{Resolve} op, the probe issues an additional GPT-4o call asking for generic \texttt{dependency\_updates} over already-existing hypotheses (additions or removals to their \texttt{depends\_on} tuples). The proposed updates are applied to a copy of the pipeline's final dependency map; the canonical pipeline run is not modified. The probe's system prompt uses a single non-Phase-2 example (an API-style decision scenario) to illustrate JSON format and explicitly tells the model that empty updates are common and acceptable; no Phase-2-specific hints. Three seeds, $T\!=\!0$. Scoring uses the same content-aligned matching as \Cref{tab:end-to-end}.

\begin{table}[h]
\centering
\caption{E1c: Resolve-stage retrospective dep-update probe on Phase~2 (GPT-4o; post-T13 GT; $n\!=\!3$ seeds; medians). Bold: per-column best.}
\label{tab:e1c-results}
\small
\begin{tabular}{lcccccc}
\toprule
& \textbf{LLM hyps} & \textbf{GT hyps} & \textbf{Dep} & \textbf{Dep} & \textbf{Dep} & \textbf{$h_1\!\to\!h_4$} \\
Condition & \textbf{extracted} & \textbf{matched} & \textbf{prec.} & \textbf{recall} & \textbf{F1} & \textbf{recovered} \\
\midrule
Pipeline only (no Resolve update)       & 7 & 4/4 & 0.667 & 0.286 & 0.400 & $0/3$ \\
+ Resolve-stage retrospective update    & 7 & 4/4 & \textbf{0.750} & \textbf{0.429} & \textbf{0.545} & $\mathbf{3/3}$ \\
\bottomrule
\end{tabular}
\par\smallskip
{\footnotesize Across all 3 seeds, the probe emits exactly one update at T13 with rationales of the form ``Redis exhaustion, which $h_4$ depends on, is caused by the retry storm from the token bug''; after content alignment to canonical IDs, all three updates collapse to the same canonical $(h_1, h_4)$ edge.}
\end{table}

No new false-positive edges are introduced, and the intervention is structurally distinct from the prompt-only ablation of \Cref{tab:e1b-ablation}: where E1b modifies the hypothesis-creation prompt, E1c adds a separate per-\textsc{Resolve} call asking for retrospective updates over already-existing hypotheses.

\paragraph{Phase-3 negative sanity.} The same probe was run on Phase~3 (architecture-deliberation, $19$ turns, single \textsc{Resolve} at T18; no canonical retrospective unification edge expected). Across $3$ seeds the probe produces $0/3$ non-empty updates and $0$ edges added; the LLM's rationale on every Resolve event is ``no retrospective dependency updates warranted; the Resolve simply elevated existing reasoning to accepted conclusion.'' Combined with the Phase-2 result, the probe is \emph{conversation-sensitive}: it fires when a unification structure is present (Phase~2's causal-reversal at T12--T13) and stays silent when no such structure exists (Phase~3's deliberation-and-decide pattern). The schema-fix has empirical support both for \emph{recovery on the load-bearing case} and for \emph{non-disturbance on the negative case}.

%==================================================================
\section{Soundness of $\mathit{Affected}^{*}$: Phase 2 and RECON}
\label{app:recon}

\Cref{tab:recon} collects every measurement of the structural check given ground-truth dependency structure: the authored Phase~2 scenario and the third-party RECON benchmark, under the one-step and iterated queries, with GPT-4o answering the same question as a reference. RECON \citep{arya2026recon} is a benchmark of $24$ investigation case files ($50$--$100$K tokens each) with six memory tasks. Its \emph{counterfactual} task is the one that queries $\mathit{Affected}(p)$: each case ships a provenance skeleton whose evidence chains carry explicit \texttt{requires}/\texttt{establishes} attributes, and each counterfactual scenario names a trigger step together with the steps that depend on it and those that do not. We verified that the gold \texttt{dependent\_steps} equal within-chain reachability over those attributes for all $64$ scenarios of the medical and finance domains ($32$ each); the crime skeletons do not serialise the attributes and are skipped. RECON's \emph{cascade} task is not used: its $96$ invalidation events are one-hop (none is referenced by a chain link) and their gold text is templated, so it exercises retrieval rather than the walk.

\noindent\textbf{Oracle protocol.} Each chain is loaded into the engine as hypotheses with $\mathit{Dep}$ set from \texttt{requires}/\texttt{establishes}; we then query the trigger. Two LLM arms answer the same question with GPT-4o at temperature $0$: \emph{isolated chain} receives only the trigger's chain (${\sim}550$ tokens); \emph{full skeleton} receives the entire case skeleton (${\sim}51$K tokens), RECON's own oracle setting. Scoring counts every predicted step outside the gold dependent set as a false positive. Phase~2 contributes $16$ decisions ($9$ dependent) from $4$ retraction queries over $4$ hypotheses, of which the one-step query decides $12/16$ and $\mathit{Affected}^{*}$ $16/16$ correctly; RECON contributes $64$ scenarios with $198$ dependent steps. The full-skeleton reader's three misses are single over-inclusions.

\begin{table}[h]
\centering
\caption{Soundness of the retraction query on Phase~2 and RECON given ground-truth dependency structure. Exact: whole affected set correct; TP/FP/FN over dependent steps, any step outside the gold set counted as FP.}
\label{tab:recon}
\small
\begin{tabular}{llccc}
\toprule
Query & Input & Exact & TP/FP/FN & Cost per query \\
\midrule
\multicolumn{5}{l}{\emph{Phase~2 (authored dialogue, \Cref{app:phase2})}} \\
\quad one-step $\mathit{Affected}$ (\Cref{def:affected}) & GT $\mathit{Dep}$ & $1/4$ & $5/0/4$ & $<1\,\mu$s \\
\quad $\mathit{Affected}^{*}$ (\Cref{cor:dep-closure}) & GT $\mathit{Dep}$ & $\mathbf{4/4}$ & $9/0/0$ & $<1\,\mu$s \\
\midrule
\multicolumn{5}{l}{\emph{RECON (third-party, 16 case files)}} \\
\quad one-step $\mathit{Affected}$ & skeleton & $13/64$ & $66/0/132$ & $<1\,\mu$s \\
\quad $\mathit{Affected}^{*}$ & skeleton & $\mathbf{64/64}$ & $198/0/0$ & $0.4\,\mu$s \\
\quad GPT-4o, isolated chain & ${\sim}550$ tokens & $64/64$ & $198/0/0$ & 1 call \\
\quad GPT-4o, full skeleton & ${\sim}51$K tokens & $61/64$ & $198/3/0$ & 1 call, ${\sim}\$0.13$ \\
\bottomrule
\end{tabular}
\end{table}

\noindent On RECON's own multiple-choice and count questions for these scenarios ($103$ of $186$ are answerable from the affected set; options are matched to chain steps by content-word overlap with a threshold fixed a priori), $\mathit{Affected}^{*}$ scores $0.948$ under RECON's penalised scoring, against $0.615$ for one-step $\mathit{Affected}$. RECON reports $0.483$ for its own oracle LLM on the full counterfactual column, which also contains ordering, free-form and date-arithmetic items we do not answer; the figure is indicative only.

\noindent\textbf{Reading.} Three things follow. The one-step definition is sound but incomplete on real multi-hop chains, which is why \Cref{cor:dep-closure} is the query the verifier runs. The gold is generated by the same reachability rule, so $64/64$ confirms that $\mathit{Affected}^{*}$'s semantics coincide with the benchmark's, not that the walk is hard. And a strong LLM handed the explicit structure is nearly as accurate as the walk; the symbolic query's advantage is the guarantee and a per-query cost that does not depend on the size of the state (\Cref{fig:cost}), not accuracy on small explicit graphs. The scenarios are narrative logs rather than dialogue, and the finance gold is computed per chain, so the alternative-support case of \Cref{prop:dep-sound} is not exercised here. In the released skeletons every chain step ($528/528$ across the $16$ cases) carries only a chain identifier and a step number, and the narrator renders such entries as generic progress notes (``step $n$ of the current sequence'') with none of the step's content or prerequisites, so the dependency structure is not recoverable from the case file by any reader, which is also consistent with RECON's own long-context and human baselines on this task ($0.21$--$0.27$). Extraction faithfulness is therefore measured on ReviseQA and MemAB-FC (\Cref{sec:experiment}), where the supersession is present in the text.

\section{Retraction Latency and Token Cost (E4)}
\label{app:e4}

\begin{figure}[t]
\centering
\includegraphics[width=0.8\linewidth]{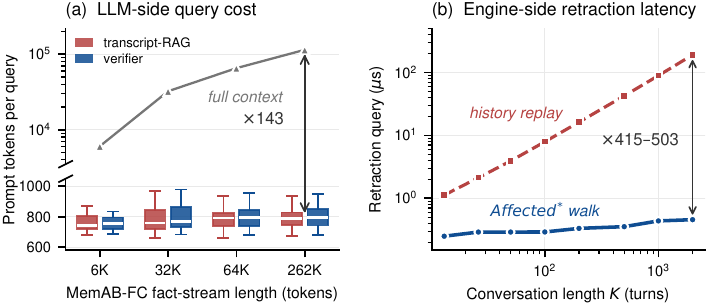}
\caption{\textbf{Cost of the symbolic state.} \textbf{(a)} Prompt tokens per query on MemAB-FC: boxes over the $200$ questions per stream length (interquartile range, 5th--95th whiskers); full context above the axis break. \textbf{(b)} Retraction latency against history replay, medians over five seeds.}
\label{fig:cost}
\end{figure}

\noindent\textbf{Latency protocol.} We measure the query on synthetic dependency maps $\mathcal{D}_t$ at $K \in \{13, \ldots, 2000\}$ turns under two regimes: \emph{naive}, with $|\mathit{Args}_t| \sim 0.3K$, and \emph{bounded}, with $|\mathit{Args}_t| \leq 50$, modelling Phase-2-like working sets where resolved hypotheses retire. In both regimes the verifier is flat at $0.25$--$0.46\,\mu$s per query while history replay grows linearly from $1.1\,\mu$s to $190\,\mu$s. \Cref{fig:cost}b reports medians over $5$ seeds $\times$ $200$ queries per seed in the naive regime, and the bounded regime coincides with it. The bounded/naive distinction, which mattered when the query scanned $\mathit{Args}_t$, disappears with the reverse index: per-query cost depends on neither $K$ nor $|\mathit{Args}_t|$ but on the size of the affected set, small in these states with mean $|\mathit{Affected}^{*}| < 1$. \Cref{prop:complexity}'s $\bigO(|\mathit{Args}| + |\mathit{Dep}|)$ is the dense-graph worst case. End-to-end wall time is set by the Interpreter, not the engine: the end-to-end MemAB-FC ingestion ran at $1.1$--$1.3$ Interpreter calls per second, about $0.8$\,s per fact, against $0.25$--$0.46\,\mu$s per engine query.

\noindent\textbf{Token accounting.} The per-query prompt costs in \Cref{fig:cost}a are reconstructed offline: for each MemAB-FC variant and question we rebuild the exact prompt each arm of \Cref{sec:experiment} sends and count tokens with the GPT-4o tokenizer. The reconstruction reproduces the logged usage of the full Gemma-3-12B run to within $0.2\%$ ($11.36$M vs.\ $11.39$M prompt tokens in that model's own tokenizer). Transcript-RAG and the verifier are budget-matched at top-$50$ facts and stay within $3\%$ of each other at every length ($742$--$823$ tokens per query), while the full-context prompt grows from $6.0$k to $114.5$k, tail-truncated at the $480$k-character budget at $262$K. Untruncated, the $262$K stream costs $268$k tokens per query, a $334\times$ gap. On the write side, the end-to-end arm's Interpreter ingestion logged $96.6$k tokens for the $6$K stream and $494.7$k for the $32$K stream, about $15\times$ the stream length from per-fact calls that carry the instruction template. Setting this one-time cost against the per-query saving gives token-denominated break-even at $18.2$ queries ($6$K) and $15.6$ queries ($32$K), inside each variant's own $100$-question load. The structured-update arm ingests the benchmark's numbered facts by template and pays no LLM tokens at write time.

%==================================================================
\section{Direct Verifier Test (E2)}
\label{app:e2}

\begin{table}[t]
\centering
\caption{Direct verifier test: per-category accuracy of $\mathsf{Verify}$, an LLM-only judge and two transcript-RAG budgets on the 50-item Phase~2 test set.}
\label{tab:e2-verify}
\small
\resizebox{0.7\textwidth}{!}{
\begin{tabular}{lccccc}
\toprule
Category & $n$ & \textbf{Verifier} & \textbf{LLM-only} & \textbf{TR ($k\!=\!5$)} & \textbf{TR ($k\!=\!20$)} \\
\midrule
Actual                 & 15 & $14/15$          & $15/15$          & $15/15$          & $15/15$ \\
Stale                  & 15 & $\mathbf{15/15}$ & $14/15$          & $14/15$          & $15/15$ \\
Cross-conversation     & 10 & $10/10$          & $10/10$          & $10/10$          & $10/10$ \\
Counterfactual         & 10 & $10/10$          & $10/10$          & $10/10$          & $10/10$ \\
\midrule
Stale + counterfactual (pooled) & 25 & $\mathbf{25/25}$ & $24/25$ & $24/25$ & $25/25$ \\
\bottomrule
\end{tabular}
}
\end{table}

The 50-item Phase~2 test set contains 15 \emph{actual} continuations, 15 \emph{stale} re-assertions after $h_2$ is abandoned at T9, 10 cross-conversation negatives, and 10 counterfactuals. Author labels were checked on a blinded 20-item subset, Cohen's $\kappa = 0.733$. TR ($k{=}5$) matches retrieval to the dependency-walk depth; TR ($k{=}20$) retrieves the full window and also recovers the stale-advice item (e2\_030). The verifier's one miss in the actual category is a multi-entity claim with no single anchor in $\mathcal{D}_t$, on which it abstains by design.

%==================================================================
\section{Robustness Diagnostic Figure (E5)}
\label{app:e5}
%==================================================================

\Cref{fig:robustness} reports the noise diagnostic summarised in \Cref{sec:experiment}. We perturb the ground-truth Phase~2 state on the 50-item direct-verifier set, with per-item truncation at $t$ and 10 seeds at each $\varepsilon \in [0, 0.8]$, under three independent noise models. Random dependency-edge drop and add stay flat at $49/50$ across all $\varepsilon$: $88\%$ of items decide before the dependency walk runs, via entity resolution, hypothesis status, or observation and awareness leaves, so the set is low-sensitivity to edge-traversal accuracy. Lifecycle/status corruption produces the meaningful curve, with pooled accuracy $0.98 \to 0.64$ as $\varepsilon: 0 \to 0.8$ and the verifier matching the LLM-only pooled baseline ($0.98$) at $\varepsilon \approx 0.05$. Per category (\Cref{fig:robustness}B), stale claims are the most status-sensitive, falling from $1.00$ to $0.27$ at $\varepsilon=0.8$, while cross-conversation and counterfactual items decide by null resolution and are unaffected at every $\varepsilon$. Observed boundary errors on existing Phase~2 pipeline runs concentrate on $h_2$'s abandonment, ${\sim}5\%$ aggregate.

\begin{figure}[h]
\centering
\includegraphics[width=\linewidth]{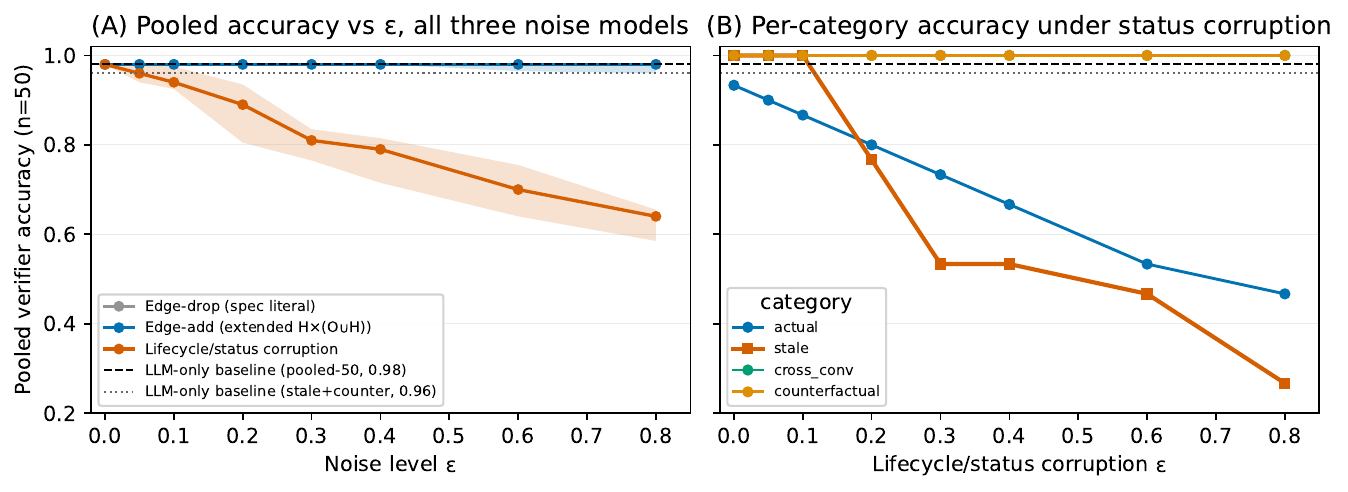}
\caption{E5 noise diagnostic on the 50-item test set, $10$ seeds per $\varepsilon$. \textbf{(A)} Pooled verifier accuracy under dependency-edge drop (gray), edge add (blue) and lifecycle/status corruption (orange); lines are seed medians, bands the 25th--75th percentile; dashed and dotted lines are the LLM-only baselines ($0.98$ pooled, $0.96$ stale+counterfactual). \textbf{(B)} Per-category accuracy under status corruption.}
\label{fig:robustness}
\end{figure}

\end{document}